\documentclass{article}

\usepackage{iclr2022_conference,times}
\iclrfinalcopy

\usepackage[utf8]{inputenc} %
\usepackage[T1]{fontenc}    %
\usepackage{hyperref}       %
\usepackage{url}            %
\usepackage{booktabs}       %
\usepackage{amsfonts}       %
\usepackage{nicefrac}       %
\usepackage{microtype}      %
\usepackage{xcolor}         %

\usepackage{amsmath,amsfonts,bm}

\def\eqref#1{(\ref{#1})}

\def\1{\bm{1}}

\def\eps{{\epsilon}}

\def\rd{{\textnormal{d}}}

\def\rvu{{\mathbf{i}}}

\def\rvs{{\mathbf{s}}}

\def\rvu{{\mathbf{u}}}

\def\rvX{{\mathbf{X}}}

\def\rvY{{\mathbf{Y}}}

\def\rvZ{{\mathbf{Z}}}

\def\vx{{\bm{x}}}

\def\vz{{\bm{z}}}

\def\mA{{\bm{A}}}
\def\mB{{\bm{B}}}
\def\mC{{\bm{C}}}

\def\mI{{\bm{I}}}

\DeclareMathAlphabet{\mathsfit}{\encodingdefault}{\sfdefault}{m}{sl}
\SetMathAlphabet{\mathsfit}{bold}{\encodingdefault}{\sfdefault}{bx}{n}

\newcommand{\pdata}{p_{\rm{data}}}

\newcommand{\E}{\mathbb{E}}

\newcommand{\KL}{D_{\mathrm{KL}}}

\DeclareMathOperator*{\argmin}{arg\,min}

\DeclareMathOperator{\Tr}{Tr}

\def\wt{{ \mathbf{W}_t }}
\def\ws{{ \mathbf{W}_s }}
\def\dwt{{ \mathrm{d} \wt }}
\def\dws{{ \mathrm{d} \ws }}

\newcommand{\norm}[1]{\lVert#1\rVert}

\def\dt{{ \mathrm{d} t }}
\def\ds{{ \mathrm{d} s }}

\def\calL{{\cal L}}

\def\calN{{\cal N}}
\def\calO{{\cal O}}
\def\calP{{\cal P}}

\def\calU{{\cal U}}

\newcommand{\fracpartial}[2]{\frac{\partial #1}{\partial  #2}}

\newcommand{\br}[1]{\left[#1\right]}
\newcommand{\pr}[1]{\left(#1\right)}
\newcommand{\T}{\mathsf{T}}

\newcommand{\eg}{{\ignorespaces\emph{e.g.}}{ }}
\newcommand{\ie}{{\ignorespaces\emph{i.e.}}{ }}

\usepackage{mathtools}
\usepackage{tabularx}
\usepackage{framed}

\usepackage{amssymb}
\usepackage{amsthm}
\usepackage{multirow}

\newtheorem{theorem}{Theorem}
\newtheorem{lemma}[theorem]{Lemma}

\newtheorem{corollary}[theorem]{Corollary}

\newtheorem{remark}[theorem]{Remark}

\newcommand\numberthis{\addtocounter{equation}{1}\tag{\theequation}}
\usepackage{cancel}
\usepackage{lipsum}

\usepackage{capt-of}
\usepackage{wrapfig}

\usepackage{xcolor}
\usepackage{color,soul}
\colorlet{color1}{green!50!black}
\colorlet{color2}{orange!95!black}
\colorlet{color3}{red!80!black}
\colorlet{color4}{red!65!black}
\colorlet{color5}{blue!75!green}
\colorlet{blueee}{blue!50!black}

\definecolor{label1}{HTML}{99292A}
\definecolor{label2}{HTML}{D89A3C}
\definecolor{label3}{HTML}{417481}
\colorlet{label22}{label2!80!black}

\definecolor{amaranth}{rgb}{0.9, 0.17, 0.31}

\newcommand{\markgreen}[1]{{\ignorespaces\color{color1} #1}}

\newcommand{\markblue}[1]{{\ignorespaces\color{color5} #1}}
\newcommand{\markred}[1]{\ignorespaces{\color{color3} #1}}

\newcommand{\markaa}[1]{\ignorespaces{\color{label1} #1}}
\newcommand{\markbb}[1]{\ignorespaces{\color{label22} #1}}

\usepackage{footnote}

\usepackage{pifont}
\usepackage{ifsym}

\let\oldsqrt\sqrt
\def\sqrt{\mathpalette\DHLhksqrt}
\def\DHLhksqrt#1#2{\setbox0=\hbox{$#1\oldsqrt{#2\,}$}\dimen0=\ht0
\advance\dimen0-0.2\ht0
\setbox2=\hbox{\vrule height\ht0 depth -\dimen0}%
{\box0\lower0.4pt\box2}}

\newcommand{\specialcell}[2][c]{%
  \begin{tabular}[#1]{@{}c@{}}#2\end{tabular}}

\newcommand{\specialcelll}[2][l]{%
  \begin{tabular}[#1]{@{}l@{}}#2\end{tabular}}

\usepackage{enumitem}
\usepackage[position=top]{subfig}
\usepackage{arydshln}

\usepackage{algorithm}
\usepackage{algorithmic}

\usepackage{booktabs}       %

\usepackage{enumitem}
\usepackage{tikz}
\newcommand*\numcircledmod[1]{\raisebox{.5pt}{\textcircled{\raisebox{-.9pt} {#1}}}}

\usepackage{empheq}
\newcommand*\widefbox[1]{\fbox{\hspace{1em}#1\hspace{1em}}}
\usepackage{tablefootnote}

\usepackage{textcomp}
\usepackage{colortbl}

\setul{2pt}{.4pt}

\makeatletter
\newtheorem*{rep@theorem}{\rep@title}
\newcommand{\newreptheorem}[2]{%
\newenvironment{rep#1}[1]{%
 \def\rep@title{#2 \ref{##1}}%
 \begin{rep@theorem}}%
 {\end{rep@theorem}}}
\makeatother

\newreptheorem{theorem}{Theorem}

\usepackage{aligned-overset}
\usepackage{cases}

\usepackage[bottom]{footmisc}

\hypersetup{
    colorlinks,
    linkcolor={blue!50!black},
    citecolor={blue!50!black},
    urlcolor={blue!80!black}
}

\title{Likelihood Training of Schr{\"o}dinger Bridge \\ using Forward-Backward SDEs Theory}

\author{%
  Tianrong Chen$^{*}\text{,}$~Guan-Horng Liu\thanks{
    Equal contribution. Order determined by coin flip. See \nameref{sec:author} section.}~~,~Evangelos A. Theodorou\\
  Georgia Institute of Technology, USA\\
  \texttt{\{tianrong.chen, ghliu, evangelos.theodorou\}@gatech.edu}\\
}

\begin{document}

\maketitle

\begin{abstract} %

  Schr{\"o}dinger Bridge (SB) is an {entropy-regularized} optimal transport problem that
  has received increasing attention in deep generative modeling
  for its mathematical flexibility
  compared to the Scored-based Generative Model (SGM).
  However, it remains unclear
  whether the optimization principle of SB relates to the modern training of deep generative models,
  which often rely on constructing log-likelihood objectives.%
  {This raises questions on the suitability of SB models as a principled alternative for generative applications.}
  In this work, we present a novel computational framework
  for likelihood training of SB models
  grounded on \textit{Forward-Backward Stochastic Differential Equations Theory}
  -- {a mathematical methodology appeared in stochastic optimal control}
  that transforms the optimality condition of SB into a set of SDEs.
  Crucially, these SDEs can be used to
  construct the likelihood objectives for SB that, surprisingly,
  generalizes the ones for SGM as special cases.
  This leads to a new optimization principle that
    inherits the same SB optimality
    yet without losing applications of modern generative training techniques,
  and we show that the resulting training algorithm
  achieves {comparable}
  results on generating realistic images on MNIST, CelebA, and CIFAR10.
  Our code is available at \url{https://github.com/ghliu/SB-FBSDE}.

\end{abstract}

\vspace{-5pt}
\section{Introduction} \label{sec:1}

\vspace{-1pt}

\begin{wrapfigure}[17]{r}{0.28\textwidth}
  \vspace{-22pt}
  \begin{center}
    \includegraphics[width=0.28\textwidth]{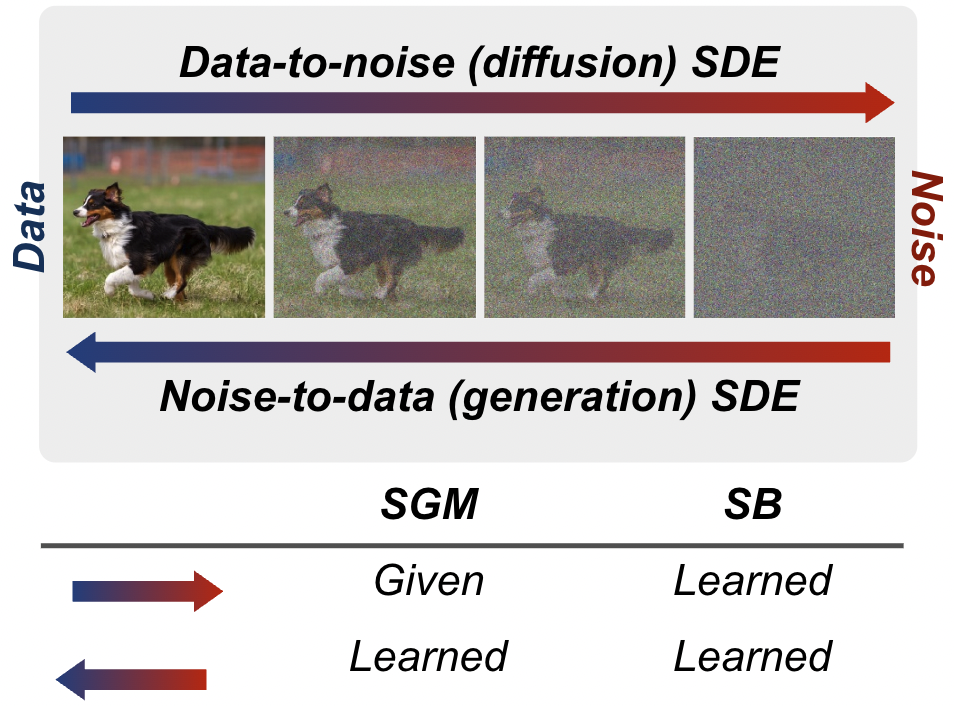}
  \end{center}
  \vskip -0.15in
  \caption{
    Both Score-based Generative Model (SGM) and Schr{\"o}dinger Bridge (SB)
    transform between two distributions.
    While SGM requires pre-specifying the data-to-noise diffusion,
    SB instead \textit{learns} the process.
  }
  \label{fig:1}
\end{wrapfigure}
Score-based Generative Model (SGM; \citet{song2020score})
is an emerging generative model class that has achieved
remarkable results in synthesizing high-fidelity data
\citep{song2020improved,kong2020hifi,kong2020diffwave}.
{Like many deep generative models},
SGM seeks to find nonlinear functions that transform simple distributions (typically Gaussian)
into complex, often intractable, data distributions.
In SGM, this is done by
first diffusing data to noise through a stochastic differential equation (SDE);
then learning to \textit{reverse} this diffusion process by regressing a network to the
{score} function (\ie the gradient of the log probability density) at each time step
\citep{hyvarinen2005estimation}.
This reversed process thereby defines the generation (see Fig.~\ref{fig:1}).

Despite its empirical successes,
SGM admits few limitations. %
First, the diffusion process has to obey a simple form (\eg~linear or degenerate drift)
in order to compute the analytic score function for the regression purpose.
{Secondly, the diffusion process needs to run to sufficiently large time steps} %
so that the end distribution is approximate Gaussian \citep{kong2021fast}.
For these reasons, SGM often takes a notoriously long time in
generating data \citep{jolicoeur2021gotta},
thereby limiting their practical usages compared to \eg GANs or flow-based models
\citep{ping2020waveflow,karras2020analyzing}.

In the attempt to lift these restrictions,
a line of recent works
inspired by Schr{\"o}dinger Bridge (SB; \citet{schrodinger1932theorie})
has been proposed \citep{de2021diffusion,wang2021deep,vargas2021solving}.
SB -- as an {entropy-regularized} optimal transport problem --
seeks two optimal policies that transform back-and-forth between two \textit{arbitrary} distributions in a \textit{finite} horizon.
The similarity between the two problems (\ie both involve transforming distributions) is evident,
and the additional flexibility from SB
{is also attractive}.
To enable SB-inspired generative training,
however,
previous works
require either {ad-hoc} multi-stage optimization
or retreat to traditional SB algorithms, \eg Iterative Proportional Fitting
(IPF; \citet{kullback1968probability}).
The underlying relation between the optimization principle of SB and modern generative training,
in particular SGM,
remains relatively unexplored, despite their intimately related problem formulations.
More importantly, with
the recent connection between SGM and log-likelihood computation
\citep{song2021maximum},
it is crucial to explore whether there exists an alternative way of training SB that
better respects, or perhaps generalizes, modern training of SGM,
so as to solidify the suitability of SB as a principled generative model.

In this work, we present a
fundamental connection between solving SB and training SGM.
The difficulty arises immediately as one notices that
the optimality condition of SB and the likelihood objective of SGM
are represented by merely two distinct mathematical objects.
While the former is characterized by two coupled partial differential equations (PDEs)
\citep{leonard2013survey},
the latter integrates over a notably complex SDE that resembles neither its diffusion nor reversed process
\citep{song2021maximum}.
Nevertheless,
inspired by the recent advance on understanding deep learning through the optimal control perspective \citep{li2018optimal,liu2021differential,liu2021dynamic},
we show that \textit{Forward-Backward SDEs} --
a mathematical methodology appearing in stochastic optimal control
for solving nonlinear PDEs \citep{han2018solving}
-- paves an elegant way to connect the two objectives.
The implication of our findings is nontrivial:
{
  It yields an exact log-likelihood expression of SB
  that precisely generalizes the one of SGM \citep{song2021maximum} to fully nonlinear diffusion,
  thereby providing novel theoretical connections between the two model classes.
  Algorithmically, our framework suggests rich training procedures that resemble
  the joint optimization for diffusion flow-based models \citep{zhang2021diffusion}
  or more traditional IPF approaches \citep{kullback1968probability,de2021diffusion}.
  This allows one to marry the best of both worlds by improving the SB training with
  \eg a SGM-inspired Langevin corrector \citep{song2019generative}.}
The resulting method, \textbf{SB-FBSDE},
generates encouraging images on MNIST, CelebA, and CIFAR10
and outperforms prior optimal transport models by a large margin.

{
  Our method differs from the concurrent SB methods \citep{de2021diffusion,vargas2021solving} in various aspects.
  First, while both prior methods rely on solving SB with
  \textit{mean-matching} regression,
  our {SB-FBSDE} instead utilizes a \textit{divergence-based} objectives
  (see $\S$\ref{sec:3.2}).
  Secondly, neither of the prior methods focuses on log-likelihood training,
  which is the key finding in {SB-FBSDE} to bridge connections to SGM and adopt modern training improvements.
  Indeed, due to the difference in the underlying SDE classes,\footnote{{
    We adopt the recent advance in SB theory \citep{caluya2021wasserstein} that extends classical SB models (used in prior works) to the exact SDE class appearing in SGM. See Appendices~\ref{app:d1} and \ref{app:d2} for more details.
  }}
  their connections to SGM can only be made after time discretization by carefully choosing each step size \citep{de2021diffusion}.
  In contrast, our theoretical connection is derived readily in continuous-time; hence unaffected by the choice of numerical discretization.

}

In summary, we present the following contributions.
\vspace{-6pt}
\begin{itemize}[leftmargin=13pt]
    \item We present a novel computational framework,
    grounded on \textit{Forward-Backward SDEs} theory,
    for computing the log-likelihood objectives of
    Schr{\"o}dinger Bridge (SB) and solidifying their theoretical connections to Score-based Generative Model (SGM).

    \item {Our framework suggests a new training principle that retains the mathematical flexibility from SB while enjoying advanced techniques from the modern generative training of SGM.}

    \item {We show that the resulting method -- named \textbf{SB-FBSDE} --
        outperforms previous optimal transport-inspired baselines on
        synthesizing high-fidelity images and is comparable to other existing models.}
\end{itemize}

\vspace{-10pt}

\paragraph*{Notation.}
We denote $p_{t}^{\mathrm{SDE}}(\rvX_t)$ as the marginal density driven by some $\mathrm{SDE}$ process $\rvX(t)\equiv \rvX_t$ until the time step $t\in[0,T]$.
The time direction is aligned throughout this article such that
$p_0$ and $p_T$ respectively correspond to the data and
prior distributions.
{
    The gradient, divergence, and Hessian of a function $f(\vx)$, where $\vx \in \mathbb{R}^n$, will be denoted as
    $\nabla_\vx f \in \mathbb{R}^n $, $\nabla_\vx \cdot f \in \mathbb{R}$, and $\nabla^2_{\vx} f \in \mathbb{R}^{n\times n}$.
}

\vspace{-4pt}
\section{Preliminaries} \label{sec:2}

\newcommand{\score}[2]{\nabla \log p_{#1}(#2)}
\def\pdata{{ p_\text{\normalfont{data}} }}
\def\prior{{ p_\text{\normalfont{prior}} }}
\newcommand{\pp}[2]{p_{#1}^{\text{\normalfont{#2}}}}

\vspace{-4pt}

\subsection{Score-based Generative Model (SGM)}

\vspace{-4pt}

Given a data point $\rvX_0 \in \mathbb{R}^n$ sampled from an unknown data distribution $\pdata$,
SGM first progressively diffuses the data towards random noise with the following {forward SDE}:
  \begin{align}
    \rd \rvX_t = f(t, \rvX_t) \dt + g(t) \dwt, \quad \rvX_0 \sim \pdata,
    \label{eq:fsde}
  \end{align}
where $f(\cdot,t): \mathbb{R}^n \rightarrow \mathbb{R}^n $,
$g(t)\in \mathbb{R}$, and $\wt \in \mathbb{R}^n$
are the drift, diffusion, and standard Wiener process.
Typically, $g(\cdot)$ is
some monotonically increasing function
such that for sufficiently large time steps, we have $\pp{T}{\eqref{eq:fsde}} \approx \prior$ resemble some prior distribution (\eg Gaussian)
at the terminal horizon $T$.
Reversing \eqref{eq:fsde} yields another {SDE}\footnote{
    Hereafter, we will sometimes drop $f \equiv f(t,\rvX_t)$ and $g \equiv g(t)$ for brevity.
} that traverses backward in time \citep{anderson1982reverse}:
\begin{align}
\rd \rvX_t = [f - g^2~\nabla_\vx \log \pp{t}{\eqref{eq:fsde}}(\rvX_t) ] \dt + g~\dwt,
\quad \rvX_T \sim \pp{T}{\eqref{eq:fsde}},
\label{eq:rsde}
\end{align}
where {$p_{t}^\text{\eqref{eq:fsde}}$ corresponds to the marginal density of SDE \eqref{eq:fsde} at time $t$,}
and $\nabla_\vx \log \pp{t}{\eqref{eq:fsde}}$ is
known as the \textit{score} function.
These two stochastic processes
are equivalent in the sense that their marginal densities are equal to each other throughout $t\in[0,T]$;
in other words,
$\pp{t}{\eqref{eq:fsde}} \equiv \pp{t}{\eqref{eq:rsde}}$.

\def\condscore{{ \nabla_\vx\log p_{t|\vx_0} }}

When the drift $f$ is of simple structure,
for instance linear \citep{ho2020denoising} or simply degenerate \citep{song2019generative},
the conditional score function
$\nabla_\vx\log p_t^{\text{\eqref{eq:fsde}}}(\rvX_t| \rvX_0=\vx_0) \equiv \nabla_\vx\log p_{t|\vx_0}$
 admits an analytic solution at any time $t$.
Hence,
SGM proposes to train a parameterized score network $\rvs(t,\vx; \theta)$
by regressing its outputs to the ground-truth values, \ie
$\E[ \lambda(t) \norm{\rvs(t,\rvX_t; \theta) - \condscore}^2 ]$,
where {the expectation is taken over the SDE \eqref{eq:fsde}}.
In practice, $\lambda(t)$ is some hand-designed weighting function that largely affects the performance.
Recent works \citep{song2021maximum,huang2021variational} have shown that
the log-likelihood of SGM, despite being complex, can be lower-bounded as follows:
\begin{align*}
    &{\log \pp{0}{SGM}(\vx_0) \ge }\calL^{\text{}}_{\text{SGM}}(\vx_0; \theta)
    =          \E\br{\log p_T(\rvX_T)} - \int_0^T \E \br{
       \frac{1}{2}g^2\norm{\rvs_t}^2 + \nabla_\vx \cdot \pr{g^2\rvs_t - f}
    } \dt, \numberthis \label{eq:sgm-nll} \\
    &\qquad= \E\br{\log p_T(\rvX_T)} - \int_0^T \E \br{
       \frac{1}{2}g^2\norm{\rvs_t - \condscore}^2 - \frac{1}{2}\norm{g\condscore}^2 - \nabla_\vx \cdot f
    } \dt,
\end{align*}
    where $\rvs_t\equiv\rvs(t,\vx; \theta)$  and the expectation is taken over the SDE \eqref{eq:fsde} given
    a data point $\rvX_0 = \vx_0$.
This objective \eqref{eq:sgm-nll}
suggests a principled choice of $\lambda(t) := g(t)^2$.
After training, SGM simply substitutes the score function with the learned score network $\rvs(t,\vx; \theta)$
to generate data from $\prior$,
\begin{align}
\rd \rvX_t = [f - g^2~\rvs(t,\rvX_t; \theta) ] \dt + g~\dwt,
\quad \rvX_T \sim \prior.
\label{eq:sample-sgm}
\end{align}
It is important to notice that $\prior$ needs \emph{not} equal $\pp{T}{\eqref{eq:fsde}}$ in practice,
and the approximation is close only through a careful design of \eqref{eq:fsde}.
Notably,
designing the diffusion $g(t)$ can be particularly problematic
as it affects both the approximation $\pp{T}{\eqref{eq:fsde}} \approx \prior$ and the training via the weighting $\lambda(t)$;
hence can lead to unstable training \citep{nichol2021improved}.
    In contrast, Schr{\"o}dinger Bridge considers a more flexible framework for designing the forward diffusion
    that requires minimal manipulation.

\subsection{Schr{\"o}dinger Bridge (SB)} \label{sec:2.2}

\def\QQ{{\mathbb{Q}}}
\def\PP{{\mathbb{P}}}
\def\zz{{\nabla_\vx \log {\Psi}}}
\def\zzhat{{\nabla_\vx \log \widehat{\Psi}}}

Following the dynamic expression of SB \citep{pavon1991free,dai1991stochastic},
consider %
\begin{align}
    \min_{\QQ \in \calP(\pdata, \prior)}
    \KL(\QQ~||~\PP),
    \label{eq:sb}
\end{align}
where $\QQ \in \calP(\pdata, \prior)$ belongs to a set of {path}
measure with $\pdata$ and $\prior$ as its marginal densities at $t=0$ and $T$.
On the other hand, $\PP$ denotes a reference measure, which we will set to the path measure of
\eqref{eq:fsde} for later convenience.
The optimality condition to \eqref{eq:sb} is characterized by two PDEs that are coupled through their boundary conditions. We summarize the related result below. %
\begin{theorem}[SB optimality; \citet{chen2021stochastic,pavon1991free,caluya2021wasserstein}] \label{thm:1}
    Let $\Psi(t,\vx)$ and $\widehat{\Psi}(t,\vx)$ be
    the solutions to the following PDEs:
    \begin{align}
        \begin{cases}
        \fracpartial{\Psi}{t} = - \nabla_\vx \Psi^\T f {-} \frac{1}{2} \Tr(g^2\nabla^2_{\vx}\Psi) \\[3pt]
        \fracpartial{\widehat{\Psi}}{t} = - \nabla_\vx \cdot (\widehat{\Psi} f) {+} \frac{1}{2} \Tr(g^2\nabla^2_{\vx}\widehat{\Psi})
        \end{cases}
        \text{s.t. } \Psi(0,\cdot) \widehat{\Psi}(0,\cdot) = \pdata,~\Psi(T,\cdot) \widehat{\Psi}(T,\cdot) = \prior
        \label{eq:sb-pde}
    \end{align}
    Then, the solution to the optimization \eqref{eq:sb}
    can be expressed by the path measure of the following
    forward \eqref{eq:fsb}, or equivalently backward \eqref{eq:bsb}, SDE:
    \begin{subequations}
    \begin{align}
        \rd \rvX_t &= [f + g^2~\nabla_\vx \log {\Psi}(t,\rvX_t) ] \dt + g~\dwt,
        \quad \rvX_0 \sim \pdata,
        \label{eq:fsb}
        \\
        \rd \rvX_t &= [f - g^2~\nabla_\vx \log \widehat{\Psi}(t,\rvX_t) ] \dt + g~\dwt,
        \quad \rvX_T \sim \prior,
        \label{eq:bsb}
    \end{align} \label{eq:sb-sde}%
    \end{subequations}
    where $\nabla_\vx \log {\Psi}(t,\rvX_t)$ and $\nabla_\vx \log \widehat{\Psi}(t,\rvX_t)$
    are the optimal forward and backward drifts for SB.
\end{theorem}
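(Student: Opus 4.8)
The plan is to prove Theorem~\ref{thm:1} by \emph{verification}: taking the pair $(\Psi,\widehat\Psi)$ solving the coupled system \eqref{eq:sb-pde} as given, I would exhibit the path measure $\QQ^\star$ of the forward SDE \eqref{eq:fsb} as an explicit candidate, argue it is feasible for \eqref{eq:sb}, show it attains the minimum, and finally identify \eqref{eq:bsb} as its time-reversal. For feasibility, set $p_t(\vx) := \Psi(t,\vx)\,\widehat\Psi(t,\vx)$; differentiating in $t$ and substituting the two PDEs of \eqref{eq:sb-pde}, the drift contributions recombine via the product rule into $-\nabla_\vx\cdot(p_t f)$, while the two Hessian terms (which enter with opposite signs) combine with the It\^o correction generated by the added drift $g^2\nabla_\vx\log\Psi$ to give
\begin{align*}
\partial_t p_t = -\nabla_\vx\cdot\!\Big(p_t\big(f + g^2\,\nabla_\vx\log\Psi\big)\Big) + \tfrac{1}{2}\Tr\!\big(g^2\nabla^2_{\vx} p_t\big),
\end{align*}
which is exactly the Fokker--Planck equation of \eqref{eq:fsb}. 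Since the boundary conditions in \eqref{eq:sb-pde} read $p_0=\pdata$ and $p_T=\prior$, the SDE \eqref{eq:fsb} transports $\pdata$ to $\prior$, so $\QQ^\star\in\calP(\pdata,\prior)$ is feasible. (The recombination is the one routine calculation I would not spell out.)

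For optimality, the plan is to reduce the dynamic problem \eqref{eq:sb} to a static one. Disintegrating the path-space KL over the endpoint marginals, for every $\QQ\in\calP(\pdata,\prior)$ one has
\begin{align*}
\KL(\QQ\,\|\,\PP) = \KL\big(\QQ_{0,T}\,\|\,\PP_{0,T}\big) + \E_{(\rvX_0,\rvX_T)\sim\QQ_{0,T}}\big[\KL\big(\QQ_{|0,T}\,\|\,\PP_{|0,T}\big)\big] \ge \KL\big(\QQ_{0,T}\,\|\,\PP_{0,T}\big),
\end{align*}
with equality iff $\QQ_{|0,T}=\PP_{|0,T}$, i.e.\ $\QQ$ shares the reference bridges. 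Hence \eqref{eq:sb} is bounded below by $\KL(\pi^\star\,\|\,\PP_{0,T})$, the value of the static Schr\"odinger problem $\min_\pi \KL(\pi\,\|\,\PP_{0,T})$ over couplings $\pi$ of $\pdata$ and $\prior$. Its minimizer $\pi^\star$ is characterized by the Schr\"odinger system: $\pi^\star$ factorizes multiplicatively through the transition kernel of the reference SDE \eqref{eq:fsde}, and its two factors, propagated respectively forward and backward under the reference dynamics, become $\widehat\Psi(t,\cdot)$ and $\Psi(t,\cdot)$. This is precisely why $\Psi$ solves the backward Kolmogorov equation and $\widehat\Psi$ the Fokker--Planck equation in \eqref{eq:sb-pde}, and why the marginal constraints on $\pi^\star$ translate into the boundary conditions $\Psi(0,\cdot)\widehat\Psi(0,\cdot)=\pdata$ and $\Psi(T,\cdot)\widehat\Psi(T,\cdot)=\prior$.

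To close the gap, I would observe that \eqref{eq:fsb} is precisely Doob's $h$-transform of the reference \eqref{eq:fsde} by the space-time harmonic function $\Psi(t,\rvX_t)=\E^{\PP}[\Psi(T,\rvX_T)\mid\rvX_t]$. Such a transform reweights only the terminal condition, so $\QQ^\star$ retains the reference bridges and has endpoint law $\pi^\star$; the disintegration above then collapses to $\KL(\QQ^\star\|\PP)=\KL(\pi^\star\|\PP_{0,T})$ --- equivalently, a Girsanov computation gives $\KL(\QQ^\star\|\PP)=\tfrac{1}{2}\E_{\QQ^\star}\!\int_0^T g^2\norm{\nabla_\vx\log\Psi_t}^2\dt$ --- which matches the lower bound, so $\QQ^\star$ is optimal. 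It remains to get \eqref{eq:bsb}: applying the time-reversal formula for diffusions \citep{anderson1982reverse} to \eqref{eq:fsb}, whose drift is $f+g^2\nabla_\vx\log\Psi$ and whose marginals are $p_t=\Psi_t\widehat\Psi_t$, the reversed drift is $f+g^2\nabla_\vx\log\Psi-g^2\nabla_\vx\log(\Psi_t\widehat\Psi_t)=f-g^2\nabla_\vx\log\widehat\Psi$ with terminal law $p_T=\prior$, which is exactly \eqref{eq:bsb}; since the two SDEs share all marginals they induce the same path measure.

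The hard part will be the optimality half --- in particular a rigorous justification of the KL disintegration and, more delicately, the existence, uniqueness, and regularity of the Schr\"odinger potentials (equivalently, of the solutions $\Psi,\widehat\Psi$ to \eqref{eq:sb-pde}), together with the integrability conditions needed for Girsanov's theorem and the time-reversal formula to be valid along \eqref{eq:fsb}. Since Theorem~\ref{thm:1} already posits $\Psi,\widehat\Psi$ as solutions of \eqref{eq:sb-pde} and cites \citep{chen2021stochastic,pavon1991free,caluya2021wasserstein}, I would import these well-posedness results and devote the written argument to the verification identities above.
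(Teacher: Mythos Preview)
Your argument is correct, but it follows a genuinely different route from the paper's. The paper (Appendix~\ref{app:d1}) does not prove Theorem~\ref{thm:1} by verification in the path-measure sense; instead it starts from the stochastic optimal control (SOC) formulation $\min_{\rvu}\E\int_0^T\tfrac{1}{2}\norm{\rvu}^2$ subject to $\rd\rvX_t=(f+g\rvu)\dt+g\,\dwt$ with two-sided boundary constraints, writes down its HJB/Fokker--Planck optimality conditions, and applies the Hopf--Cole transform $(\psi,p^*)\mapsto(\Psi,\widehat\Psi)$ to arrive at the coupled system~\eqref{eq:sb-pde} and the optimal control $\rvu^*=g\nabla_\vx\log\Psi$ (this is imported as Theorem~\ref{thm:10} from \citet{caluya2021wasserstein}). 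Girsanov's theorem is then invoked, following \citet{pavon1991free}, to identify the control cost with $\KL(\QQ\,\|\,\PP)$, which connects the SOC solution to the original problem~\eqref{eq:sb}. Only the final step --- obtaining \eqref{eq:bsb} from \eqref{eq:fsb} via Anderson's time-reversal and the factorization $\log\pp{t}{SB}=\log\Psi+\log\widehat\Psi$ --- coincides with yours. Your approach via KL disintegration to the static Schr{\"o}dinger problem and Doob's $h$-transform is the probabilistic/entropic route (closer to L{\'e}onard's treatment): it works directly with the path-space KL of~\eqref{eq:sb} and makes the bridge-preservation property transparent, whereas the paper's SOC/Hopf--Cole route is more in line with the control-theoretic machinery (FBSDEs, value functions) developed in the rest of the paper and makes the identification of $\nabla_\vx\log\Psi$ as an optimal \emph{control} explicit.
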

Similar to the forward/backward processes in SGM,
the stochastic processes of SB in \eqref{eq:fsb} and \eqref{eq:bsb}
are also equivalent in the sense that
$\forall t\in[0,T],~\pp{t}{\eqref{eq:fsb}} \equiv \pp{t}{\eqref{eq:bsb}} \equiv \pp{t}{SB}$.
In fact, its marginal density obeys a factorization principle:
$\pp{t}{SB}(\rvX_t) = {\Psi}(t,\rvX_t)\widehat{\Psi}(t,\rvX_t)$.

To construct the generative pipeline from \eqref{eq:bsb},
one requires solving the PDEs in \eqref{eq:sb-pde}
to obtain $\widehat{\Psi}$.
Unfortunately, these PDEs are hard to solve even for low-dimensional systems \citep{renardy2006introduction};
let alone for generative applications.
Indeed, previous works either have to replace the original Schr{\"o}dinger Bridge ($\pdata\leftrightarrows\prior$)
with multiple stages, $\pdata\leftrightarrows\pp{\text{middle}}{}\leftrightarrows\prior$,
so that each segment admits an analytic solution \citep{wang2021deep},
or consider the following half-bridge ($\pdata\leftarrow\prior$ \textit{vs.} $\pdata\rightarrow\prior$) optimization \citep{de2021diffusion,vargas2021solving},
\begin{align*}
    \QQ^{(1)} := \argmin_{\QQ\in\calP(\cdot, \prior)} \KL(\QQ~||~\QQ^{(0)}), \quad
    \QQ^{(0)}   := \argmin_{\QQ\in\calP(\pdata, \cdot)} \KL(\QQ~||~\QQ^{(1)})
\end{align*}
which can be solved with IPF algorithm \citep{kullback1968probability}
starting from $\QQ^{(0)} := \PP$.
In the following section,
we will present a scalable computational framework for
solving the optimality PDEs in \eqref{eq:sb-pde}
and show that it paves an elegant way connecting the optimality principle of SB \eqref{eq:sb-pde} to
the parameterized log-likelihood of SGM \eqref{eq:sgm-nll}.

\vspace{-1pt}
\section{Approach} \label{sec:3}
\vspace{-1pt}

\begin{figure}
  \vskip -0.2in
  \centering
  \includegraphics[width=\textwidth]{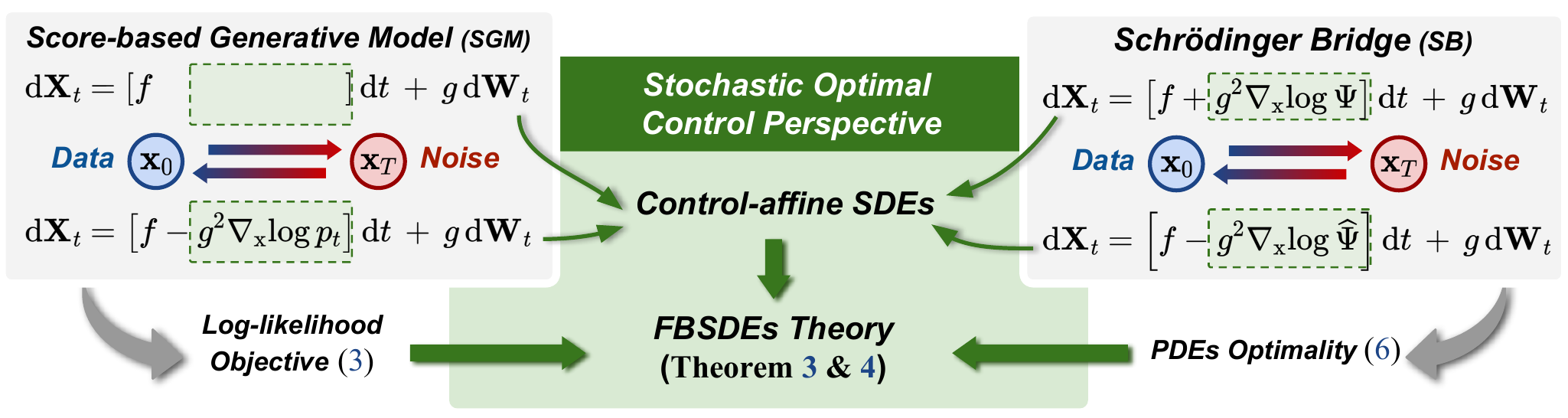}
  \caption{
    Schematic diagram of the our stochastic optimal control interpretation,
    and how it connects the objective of SGM \eqref{eq:sgm-nll}
    and optimality of SB \eqref{eq:sb-pde} through Forward-Backward SDEs theory.
  }
  \label{fig:2}
  \vskip -0.05in
\end{figure}

We motivate our approach starting from some control-theoretic observation (see Fig.~\ref{fig:2}).
Notice that both SGM and SB consist of forward and backward SDEs with similar structures.
From the stochastic control perspective, these SDEs belong to the class of
\emph{control-affine} SDEs with {additive} noise:
\begin{align}
    \rd \rvX_t = \mA(t,\rvX_t) \dt + \mB(t,\rvX_t) \rvu(t,\rvX_t) \dt + \mC(t)~\dwt.
    \label{eq:control-affine}
\end{align}
It is clear that the
control-affine SDE \eqref{eq:control-affine} includes all SDEs (\ref{eq:fsde},\ref{eq:rsde},\ref{eq:sample-sgm},\ref{eq:sb-sde}) appearing in $\S$\ref{sec:2}
by considering $(\mA,\mB,\mC) := (f,\mI,g)$ and different interpretations of the \emph{control} variables $\rvu(t,\rvX_t)$.
This implies that
the optimization processes of
both SGM and SB can be aligned through the lens of \textit{{stochastic optimal control}} (SOC).
Indeed,
both problems can be interpreted as seeking some time-varying control policy,
either the score function $\condscore$ in SGM or $\zzhat$ in SB,
that minimizes some objectives, \eqref{eq:sgm-nll} \textit{vs.} \eqref{eq:sb},
while subjected to some control-affine SDEs, (\ref{eq:fsde},\ref{eq:rsde}) \textit{vs.} \eqref{eq:sb-sde}.
In what follows, we will show that
a specific mathematical methodology in nonlinear SOC literature
--
called \textit{Forward-Backward SDEs} theory (FBSDEs; see \citet{ma1999forward}) --
links the optimality condition of SB~\eqref{eq:sb-pde}
to the log-likelihood objectives of SGM~\eqref{eq:sgm-nll}.
All proofs are left to Appendix~\ref{app:a}.

\vspace{-1pt}
\subsection{Forward-Backward SDEs (FBSDEs) Representation for SB} \label{sec:3.1}
\vspace{-1pt}

The theory of FBSDEs establishes an innate connection between
different classes of PDEs and forward-backward SDEs.
Below we introduce the following connection related to our problem.{
\begin{lemma}[Nonlinear Feynman-Kac;\footnote{
  Lemma~\ref{lemma:non-fc} can be viewed as the nonlinear extension of the celebrated Feynman-Kac formula \citep{karatzas2012brownian},
  which characterizes the connection between linear PDEs and forward SDEs.
} \citet{exarchos2018stochastic}]\label{lemma:non-fc}%
Consider the coupled SDEs
  \begin{subequations}
  \begin{empheq}[left={\empheqlbrace}]{align}
        \rd\rvX_t &=  f(t,\rvX_t) \dt + G(t,\rvX_t) \dwt, \qquad\qquad\quad\text{ }\text{ } \rvX_0 = \vx_0 \label{eq:fbsde-f} \\
        \rd\rvY_t &=  - h(t, \rvX_t, \rvY_t, \rvZ_t) \dt + \rvZ(t,\rvX_t)^\T \dwt, \quad \rvY_T = \varphi(\rvX_T) \label{eq:fbsde-b}
  \end{empheq} \label{eq:fbsde}%
  \end{subequations}
where the functions $f$, $G$, $h$, and ${\varphi}$ satisfy proper regularity conditions\footnote{{
  \citet{yong1999stochastic,kobylanski2000backward} require
  $f$, $G$, $h$, and ${\varphi}$ to be continuous, $f$ and $G$ to be uniformly Lipschitz in $\vx$,
  and $h$ to satisfy quadratic growth condition in $\vz$.
  \label{ft:cond}}
}
so that there exists a pair of unique strong solutions satisfying \eqref{eq:fbsde}.
Now, consider the following second-order parabolic PDE and suppose $v(t,\vx)\equiv v$ is once continuously differentiable in $t$ and twice in $\vx$, \ie $ v \in C^{1,2}$,
\begin{align}
  \fracpartial{v}{t} +
  \frac{1}{2}\Tr(\nabla^2_{\vx} v~GG^\T) + \nabla_\vx v^\T f + h(t,\vx,v,G^\T \nabla_\vx v)  = 0,
  \quad v(T,\vx) = \varphi(\vx),
  \label{eq:hjb}
\end{align}
then the solution to \eqref{eq:fbsde} coincides with the solution to \eqref{eq:hjb} along paths generated by the forward SDE \eqref{eq:fbsde-f} almost surely, i.e.,
the following stochastic representation (known as the nonlinear Feynman-Kac relation) is valid:
\begin{align}
  v(t,\rvX_t) = \rvY_t
  \qquad\text{and}\qquad
  G(t,\rvX_t)^\T \nabla_\vx v(t,\rvX_t) = \rvZ_t.
  \label{eq:hjb-fbsde}
\end{align}
\end{lemma}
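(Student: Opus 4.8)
The plan is to verify the stochastic representation \eqref{eq:hjb-fbsde} directly by applying It\^o's formula to the process $t \mapsto v(t,\rvX_t)$, where $\rvX_t$ solves the forward SDE \eqref{eq:fbsde-f}, and then matching the resulting dynamics against the prescribed backward SDE \eqref{eq:fbsde-b}. First I would fix the unique strong solution $\rvX_t$ of \eqref{eq:fbsde-f} (which exists by the stated Lipschitz/continuity regularity conditions) and, using $v \in C^{1,2}$, apply It\^o's lemma to obtain
\begin{align*}
  \rd v(t,\rvX_t)
  = \Big( \fracpartial{v}{t} + \nabla_\vx v^\T f + \tfrac{1}{2}\Tr(\nabla^2_\vx v \, GG^\T) \Big)\dt
  + \nabla_\vx v^\T G \, \dwt.
\end{align*}
Now invoke the PDE \eqref{eq:hjb}: the parenthesized drift term equals $-h(t,\rvX_t,v(t,\rvX_t),G^\T\nabla_\vx v(t,\rvX_t))$. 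Hence, setting $\rvY_t := v(t,\rvX_t)$ and $\rvZ_t := G(t,\rvX_t)^\T \nabla_\vx v(t,\rvX_t)$, the pair $(\rvX_t,\rvY_t,\rvZ_t)$ satisfies
\begin{align*}
  \rd \rvY_t = -h(t,\rvX_t,\rvY_t,\rvZ_t)\dt + \rvZ_t^\T \dwt,
\end{align*}
which is precisely \eqref{eq:fbsde-b}. The terminal condition matches because $\rvY_T = v(T,\rvX_T) = \varphi(\rvX_T)$ by the terminal data in \eqref{eq:hjb}. Thus $(\rvY_t,\rvZ_t)$ defined from $v$ is \emph{a} solution of the FBSDE system \eqref{eq:fbsde}.

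To conclude that this is \emph{the} solution — i.e.\ that the representation \eqref{eq:hjb-fbsde} holds for the unique strong solution whose existence is assumed — I would appeal to uniqueness: under the regularity conditions cited in footnote~\ref{ft:cond} (continuity of $f,G,h,\varphi$, uniform Lipschitzness of $f,G$ in $\vx$, quadratic growth of $h$ in $\vz$), the coupled system \eqref{eq:fbsde} admits a unique pair of strong solutions, so any solution — in particular the one constructed above from $v$ — must coincide with it almost surely for all $t \in [0,T]$. This gives $v(t,\rvX_t) = \rvY_t$ and $G(t,\rvX_t)^\T \nabla_\vx v(t,\rvX_t) = \rvZ_t$ along the forward paths, as claimed.

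The main obstacle is not the It\^o computation, which is routine, but the uniqueness step: well-posedness of fully coupled FBSDEs (as opposed to decoupled ones) is delicate and relies on the specific structural hypotheses, so I would either cite the existence/uniqueness theory of \citet{yong1999stochastic,kobylanski2000backward,ma1999forward} as a black box or, at minimum, note that in our application the forward equation \eqref{eq:fbsde-f} does \emph{not} depend on $(\rvY,\rvZ)$, so the system is effectively decoupled and uniqueness reduces to the standard BSDE theory (with quadratic growth in $\rvZ$ handled by \citet{kobylanski2000backward}). A secondary technical point worth a remark is integrability: one should check that $\int_0^T \E\|\rvZ_t\|^2\,\dt < \infty$ so that the stochastic integral $\int_0^t \nabla_\vx v^\T G\,\dws$ is a genuine martingale rather than merely a local one; this follows from the growth bounds on $v$ and $G$ together with standard moment estimates for $\rvX_t$, and I would relegate it to the regularity conditions already invoked.
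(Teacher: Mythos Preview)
Your proof is correct and follows the standard route. Note, however, that the paper does not actually provide its own proof of this lemma: it is stated as a cited result from \citet{exarchos2018stochastic}, and in the appendix (proof of Theorem~\ref{thm:3}) the paper only remarks that ``the original nonlinear Feynman-Kac (\ie Lemma~\ref{lemma:non-fc}) can be carried out by an application of It{\^o} lemma \citep{ma1999forward}.'' Your argument is precisely this application of It\^o's formula, so it matches the approach the paper alludes to. Your additional care regarding the uniqueness step (noting that the system is decoupled since \eqref{eq:fbsde-f} does not depend on $(\rvY,\rvZ)$, reducing to standard BSDE theory with quadratic-growth driver handled by \citet{kobylanski2000backward}) and the integrability check for the stochastic integral are both appropriate and go somewhat beyond the one-line justification the paper offers.
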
}
Lemma~\ref{lemma:non-fc} states that
solutions to a certain class of nonlinear (via the function $h$ in \eqref{eq:hjb}) PDEs
can be represented by solutions to a set of forward-backward SDEs \eqref{eq:fbsde}
through the transformation \eqref{eq:hjb-fbsde},
  and this relation can be extended to the viscosity case (\citet{pardoux1992backward}; see also {Appendix~\ref{app:a}}).
  Note that
  $\rvY_t$ is the solution to the backward SDE \eqref{eq:fbsde-b} whose
  randomness is driven by the forward SDE \eqref{eq:fbsde-f}.
  Indeed, it is clear from \eqref{eq:hjb-fbsde} that $\rvY_t$ (hence also $\rvZ_t$)
  is a time-varying function of  $\rvX_t$.
  Since the $v$ appearing in the nonlinear Feynman-Kac relation \eqref{eq:hjb-fbsde}
  takes the random vector $\rvX_t$ as its argument,
  $v(t,\rvX_t)$ shall also be understood as a random variable.
  Finally, it is known that the original (deterministic) PDE solution $v(t,\vx)$ can be recovered by taking conditional expectation, \ie
  \begin{align}
    v(t,\vx) = \E_{\rvX_t\sim\text{\eqref{eq:fbsde-f}}}[\rvY_t | \rvX_t = \vx]
    \quad\text{and}\quad
    G(t,\vx)^\T \nabla_\vx v(t,\vx) = \E_{\rvX_t\sim\text{\eqref{eq:fbsde-f}}}[\rvZ_t | \rvX_t = \vx].
  \end{align}%
Since it is often computationally favorable to solve SDEs rather than PDEs,
Lemma~\ref{lemma:non-fc} has been widely used as a scalable method for solving high-dimensional PDEs \citep{han2018solving,pereira2019neural}.
Take SOC applications for instance,
their PDE optimality condition
can be characterized by \eqref{eq:hjb-fbsde} under proper conditions,
with the optimal control given in the form of $\rvZ_t$.
Hence,
one can adopt Lemma~\ref{lemma:non-fc} to solve the underlying FBSDEs, rather than the original PDE optimality, for the optimal control.
Despite seemingly attractive,
whether these principles can be extended to SB,
whose optimality conditions are given by \textit{two coupled PDEs} in \eqref{eq:sb-pde},
remains unclear.
Below we derive a similar FBSDEs representation for SB.
\begin{theorem}[FBSDEs to SB optimality \eqref{eq:sb-pde}] \label{thm:3}
  Consider the following set of coupled SDEs,
  \begin{subequations}
  \begin{empheq}[left={\empheqlbrace}]{align}
      \rd \rvX_t &= \pr{f + g \rvZ_t} \dt + g \dwt \label{eq:fsde-is} \\
      \rd \rvY_t &= \frac{1}{2} \rvZ_t^\T\rvZ_t \dt + \rvZ_t^\T \dwt \label{eq:psi-bsde-is} \\
      \rd \widehat{\rvY}_t &= \pr{\frac{1}{2} \widehat{\rvZ}_t^\T\widehat{\rvZ}_t + \nabla_\vx \cdot (g\widehat{\rvZ}_t -f) + \widehat{\rvZ}_t^\T\rvZ_t } \dt + \widehat{\rvZ}_t^\T \dwt \label{eq:psi-hat-bsde-is}
  \end{empheq} \label{eq:psi-hat-fbsde-is}%
  \end{subequations}
  where $f$ and $g$ satisfy the same regularity conditions in Lemma~\ref{lemma:non-fc} (see Footnote~\ref{ft:cond}), and
  the boundary conditions are given by $\rvX(0)=\vx_0$ and
  $\rvY_T + \widehat{\rvY}_T = \log \prior(\rvX_T)$.
  Suppose $\Psi, \widehat{\Psi} \in C^{1,2}$,
  then the nonlinear Feynman-Kac relations between the FBSDEs \eqref{eq:psi-hat-fbsde-is} and PDEs \eqref{eq:sb-pde} are given by
  \begin{equation}
  \begin{alignedat}{2}
    \rvY_t &\equiv  \rvY(t,\rvX_t) = \log \Psi(t,\rvX_t), \qquad
    \rvZ_t &&\equiv \rvZ(t,\rvX_t) = g \nabla_\vx \log \Psi(t,\rvX_t), \\
    \widehat{\rvY}_t &\equiv  \widehat{\rvY}(t,\rvX_t) =          \log \widehat{\Psi}(t,\rvX_t), \qquad
    \widehat{\rvZ}_t &&\equiv \widehat{\rvZ}(t,\rvX_t) = g \nabla_\vx \log \widehat{\Psi}(t,\rvX_t).
  \end{alignedat} \label{eq:sb-yz}
  \end{equation}
  Furthermore, $(\rvY_t, \widehat{\rvY}_t)$
  obey the following relation:
  \begin{align*}
    {\rvY_t + \widehat{\rvY}_t} = \log \pp{t}{SB}(\rvX_t).
  \end{align*}
\end{theorem}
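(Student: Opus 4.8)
The plan is to establish the three nonlinear Feynman-Kac identities in \eqref{eq:sb-yz} by applying Lemma~\ref{lemma:non-fc} twice, once to each of the PDEs in \eqref{eq:sb-pde}, and then to obtain the final marginal identity $\rvY_t + \widehat{\rvY}_t = \log \pp{t}{SB}(\rvX_t)$ by summing the two $\rvY$-identities and invoking the factorization $\pp{t}{SB}(\rvX_t) = \Psi(t,\rvX_t)\widehat{\Psi}(t,\rvX_t)$ from Theorem~\ref{thm:1}.

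First I would handle the $\Psi$-equation. The idea is to apply the change of variables $v := \log\Psi$ so that the linear PDE $\partial_t\Psi = -\nabla_\vx\Psi^\T f - \frac12\Tr(g^2\nabla^2_\vx\Psi)$ becomes a semilinear HJB-type equation in $v$: using $\nabla_\vx\Psi = \Psi\nabla_\vx v$ and $\nabla^2_\vx\Psi = \Psi(\nabla^2_\vx v + \nabla_\vx v\,\nabla_\vx v^\T)$, one gets $\partial_t v + \frac12\Tr(g^2\nabla^2_\vx v) + \nabla_\vx v^\T f + \frac12 g^2\norm{\nabla_\vx v}^2 = 0$. This is exactly \eqref{eq:hjb} with $G := g\,\mI$ and $h(t,\vx,v,\vz) := \frac12\norm{\vz}^2$ (since $G^\T\nabla_\vx v = g\nabla_\vx v$, the term $\frac12 g^2\norm{\nabla_\vx v}^2$ matches $h(t,\vx,v,G^\T\nabla_\vx v)$). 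Lemma~\ref{lemma:non-fc} then identifies the forward SDE \eqref{eq:fbsde-f} with $\rd\rvX_t = f\dt + g\dwt$ and the backward SDE \eqref{eq:fbsde-b} with $\rd\rvY_t = \frac12\rvZ_t^\T\rvZ_t\dt + \rvZ_t^\T\dwt$, giving $\rvY_t = \log\Psi(t,\rvX_t)$ and $\rvZ_t = g\nabla_\vx\log\Psi(t,\rvX_t)$ — but on the \emph{wrong} forward process (with drift $f$, not $f+g\rvZ_t$). The reconciliation is to observe that the FBSDE system in \eqref{eq:psi-hat-fbsde-is} uses forward drift $f + g\rvZ_t$; since $\rvZ_t = g\nabla_\vx\log\Psi$ is precisely the optimal forward SB drift (up to the factor $g$) appearing in \eqref{eq:fsb}, one must re-run the nonlinear Feynman-Kac argument along the $\PP$-to-$\QQ$ change of measure, i.e. apply Itô to $\log\Psi(t,\rvX_t)$ directly along the SDE \eqref{eq:fsde-is} and check that the resulting dynamics match \eqref{eq:psi-bsde-is}. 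The drift terms from Itô along \eqref{eq:fsde-is} are $\partial_t(\log\Psi) + \nabla_\vx(\log\Psi)^\T(f + g\rvZ_t) + \frac12 g^2\Tr(\nabla^2_\vx\log\Psi)$, and substituting the semilinear PDE for $\partial_t(\log\Psi)$ collapses this to $\frac12\norm{\rvZ_t}^2$, as desired.

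Next I would treat the $\widehat\Psi$-equation analogously, but it is genuinely harder: the PDE $\partial_t\widehat\Psi = -\nabla_\vx\cdot(\widehat\Psi f) + \frac12\Tr(g^2\nabla^2_\vx\widehat\Psi)$ runs \emph{backward} relative to \eqref{eq:fsde} and contains the divergence term $\nabla_\vx\cdot(\widehat\Psi f) = \widehat\Psi\,\nabla_\vx\cdot f + \nabla_\vx\widehat\Psi^\T f$. After the substitution $\widehat v := \log\widehat\Psi$ this introduces the $\nabla_\vx\cdot f$ term that shows up in \eqref{eq:psi-hat-bsde-is}; moreover, to put $\widehat{\rvY}_t$ on the \emph{same} forward process \eqref{eq:fsde-is} as $\rvY_t$ (which is essential for the sum to make sense), one incurs the cross term $\widehat{\rvZ}_t^\T\rvZ_t$ from the extra drift $g\rvZ_t$ acting on $\nabla_\vx\log\widehat\Psi$ in Itô's formula — this is exactly the last term in \eqref{eq:psi-hat-bsde-is}. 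The clean way to do this is: apply Itô's formula to $\log\widehat\Psi(t,\rvX_t)$ along \eqref{eq:fsde-is}, obtaining drift $\partial_t(\log\widehat\Psi) + \nabla_\vx(\log\widehat\Psi)^\T(f + g\rvZ_t) + \frac12 g^2\Tr(\nabla^2_\vx\log\widehat\Psi)$; rewrite $\partial_t(\log\widehat\Psi) = \frac{1}{\widehat\Psi}\partial_t\widehat\Psi$ using the PDE; expand $\nabla^2_\vx\log\widehat\Psi = \frac{1}{\widehat\Psi}\nabla^2_\vx\widehat\Psi - \nabla_\vx\log\widehat\Psi\,\nabla_\vx\log\widehat\Psi^\T$; collect terms and verify cancellation down to $\frac12\widehat{\rvZ}_t^\T\widehat{\rvZ}_t + \nabla_\vx\cdot(g\widehat{\rvZ}_t - f) + \widehat{\rvZ}_t^\T\rvZ_t$. (One small bookkeeping point: $\nabla_\vx\cdot(g\widehat{\rvZ}_t) = g^2\Tr(\nabla^2_\vx\log\widehat\Psi)$ since $g$ is spatially constant, which is how the Hessian trace gets repackaged as a divergence.) The diffusion coefficient of this backward SDE is $\widehat{\rvZ}_t^\T = g\nabla_\vx\log\widehat\Psi$, matching \eqref{eq:psi-hat-bsde-is}.

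Finally, with $\rvY_t = \log\Psi(t,\rvX_t)$ and $\widehat{\rvY}_t = \log\widehat\Psi(t,\rvX_t)$ both living along the same forward trajectory $\rvX_t$ of \eqref{eq:fsde-is}, I would simply add: $\rvY_t + \widehat{\rvY}_t = \log\Psi(t,\rvX_t) + \log\widehat\Psi(t,\rvX_t) = \log\!\big(\Psi(t,\rvX_t)\widehat\Psi(t,\rvX_t)\big) = \log\pp{t}{SB}(\rvX_t)$, where the last equality is the factorization $\pp{t}{SB} = \Psi\widehat\Psi$ recorded just after Theorem~\ref{thm:1}. I would also check consistency of the terminal condition: at $t=T$, $\rvY_T + \widehat{\rvY}_T = \log(\Psi(T,\rvX_T)\widehat\Psi(T,\rvX_T)) = \log\prior(\rvX_T)$ by the boundary condition in \eqref{eq:sb-pde}, which is precisely the stipulated boundary condition for the FBSDE system — so the system is consistent and the representation is the correct one. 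The main obstacle is the $\widehat\Psi$ computation in the third paragraph: getting every term from the divergence-form PDE, the $\log$ transformation, and the drift augmentation $g\rvZ_t$ to land exactly on \eqref{eq:psi-hat-bsde-is} requires careful sign- and trace-bookkeeping, and it is where an error would most easily creep in.
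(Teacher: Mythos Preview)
Your proposal is correct and matches the paper's approach essentially line for line: apply It\^o's formula to $\log\Psi(t,\rvX_t)$ and $\log\widehat\Psi(t,\rvX_t)$ along the controlled forward SDE \eqref{eq:fsde-is}, substitute the PDE dynamics from \eqref{eq:sb-pde} for the time derivatives, and simplify using the identity $\nabla^2_\vx\log p = \tfrac{1}{p}\nabla^2_\vx p - \nabla_\vx\log p\,\nabla_\vx\log p^\T$ (which the paper isolates as a separate lemma). Your initial detour through Lemma~\ref{lemma:non-fc} on the uncontrolled process is not needed, and indeed you correctly abandon it in favor of the direct It\^o computation on \eqref{eq:fsde-is}; that direct computation is exactly what the paper does.
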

The FBSDEs for SB \eqref{eq:psi-hat-fbsde-is}
share a similar forward-backward structure as in \eqref{eq:fbsde},
where \eqref{eq:fsde-is} and (\ref{eq:psi-bsde-is},\ref{eq:psi-hat-bsde-is})
respectively represent the forward and backward SDEs.
One can verify that the forward SDE \eqref{eq:fsde-is} coincides with the \textit{optimal} forward SDE~\eqref{eq:fsb} with the substitution $\rvZ_t = g \nabla_\vx \log \Psi$.
In other words,
these FBSDEs %
provide a \textit{local} representation of $\log\Psi$ and $\log\widehat{\Psi}$
evaluated on the optimal path governed by \eqref{eq:fsb}.
  Since $\rvZ_t$ and $\widehat{\rvZ}_t$ can be understood as the forward/backward policies,
  in a similar spirit of policy-based methods \citep{pereira2020feynman,schulman2015trust},
  that guide the SDE processes of SB,
  they sufficiently characterize the SB model.
  Hence, our next step is to derive a proper training objective to optimize these policies.

\vspace{-2pt}
\subsection{Log-likelihood Computation of SB} \label{sec:3.2}
\vspace{-2pt}

\def\LSGM{{\calL^{\text{}}_{\text{\normalfont{SGM}}}}}
\def\LSB{{\calL^{\text{}}_{\text{\normalfont{SB}}}}}
\def\LSBB{{\widetilde{\calL}^{\text{}}_{\text{\normalfont{SB}}}}}

Theorem~\ref{thm:3} has an important implication:
It suggests that
given a path sampled from the forward SDE~\eqref{eq:fsde-is},
the solutions to the backward SDEs (\ref{eq:psi-bsde-is},\ref{eq:psi-hat-bsde-is}) at $t=0$
provide an unbiased estimation of the log-likelihood of the data point $\vx_0$,
\ie $\E\br{\rvY_0 + \widehat{\rvY}_0 | \rvX_0=\vx_0} = \log \pp{0}{SB}(\vx_0) = \log \pdata(\vx_0)$,
where $\rvX_t$ is sampled from \eqref{eq:fsde-is}.
We now state our main result, which makes this observation formal:%
\begin{theorem}[Log-likelihood of SB model] \label{thm:4}
    Given the solution satisfying the FBSDE system in \eqref{eq:psi-hat-fbsde-is},
    the log-likelihood of the SB model $(\rvZ_t, \widehat{\rvZ}_t)$, at a data point $\vx_0$, can be expressed as
  \begin{align}
      \log \pp{0}{SB}(\vx_0) &= \E\br{\log p_T(\rvX_T)} {-} \int_0^T \E\left[ \frac{1}{2} \norm{\rvZ_t}^2 {+} \frac{1}{2} \norm{\widehat{\rvZ}_t - g\nabla_\vx \log \pp{t}{SB} + \rvZ_t}^2 \right. \nonumber \\
      & \qquad \qquad \qquad \qquad \qquad \quad \qquad \qquad \left.  {-} \frac{1}{2} \norm{g\nabla_\vx \log \pp{t}{SB} - \rvZ_t}^2 {-} \nabla_\vx \cdot f \right] \dt \label{eq:sb-nll-bad} \\
      &= \E\br{\log p_T(\rvX_T)} - \int_0^T \E\br{ \frac{1}{2} \norm{\rvZ_t}^2 {+} \frac{1}{2} \norm{\widehat{\rvZ}_t}^2  + \nabla_\vx \cdot (g\widehat{\rvZ}_t -f) + \widehat{\rvZ}_t^\T{\rvZ}_t } \dt, \label{eq:sb-nll}
  \end{align}
  where the expectation is taken over the forward SDE \eqref{eq:fsde-is} with the initial condition $\rvX_0 = \vx_0$.
\end{theorem}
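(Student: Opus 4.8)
The plan is to read off \eqref{eq:sb-nll} by directly integrating the two backward SDEs \eqref{eq:psi-bsde-is} and \eqref{eq:psi-hat-bsde-is}, pinning down the boundary values via the Feynman--Kac identifications of Theorem~\ref{thm:3}, and then to pass from \eqref{eq:sb-nll} to \eqref{eq:sb-nll-bad} by an integration-by-parts / ``completing the square'' manipulation that mirrors how the two lines of the SGM objective \eqref{eq:sgm-nll} are related. Throughout, the relevant randomness is that of the forward SDE \eqref{eq:fsde-is} started at $\rvX_0=\vx_0$; note that with $\rvZ_t = g\nabla_\vx\log\Psi$ this forward SDE is precisely the optimal forward SDE \eqref{eq:fsb}, whose time-$t$ law flows toward $\pp{T}{SB}=\prior$, so $\log p_T$ in \eqref{eq:sb-nll} may be read as $\log\prior$.

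The first step is to integrate \eqref{eq:psi-bsde-is} and \eqref{eq:psi-hat-bsde-is} over $[0,T]$ and add them, which gives
\begin{align*}
  \rvY_0 + \widehat{\rvY}_0
  &= (\rvY_T + \widehat{\rvY}_T)
   - \int_0^T \Big( \tfrac12\norm{\rvZ_t}^2 + \tfrac12\norm{\widehat{\rvZ}_t}^2
      + \nabla_\vx\cdot(g\widehat{\rvZ}_t - f) + \widehat{\rvZ}_t^\T\rvZ_t \Big)\dt \\
  &\quad - \int_0^T (\rvZ_t + \widehat{\rvZ}_t)^\T \dwt .
\end{align*}
The terminal boundary condition of \eqref{eq:psi-hat-fbsde-is} gives $\rvY_T + \widehat{\rvY}_T = \log\prior(\rvX_T) = \log p_T(\rvX_T)$, while the relations \eqref{eq:sb-yz} from Theorem~\ref{thm:3} together with the boundary condition $\Psi(0,\cdot)\widehat{\Psi}(0,\cdot)=\pdata$ from Theorem~\ref{thm:1} give $\rvY_0 + \widehat{\rvY}_0 = \log\Psi(0,\vx_0) + \log\widehat{\Psi}(0,\vx_0) = \log\pdata(\vx_0) = \log\pp{0}{SB}(\vx_0)$, which is deterministic. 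Taking $\E[\,\cdot\mid\rvX_0=\vx_0]$ over \eqref{eq:fsde-is} and using that $\int_0^T\rvZ_t^\T\dwt$ and $\int_0^T\widehat{\rvZ}_t^\T\dwt$ are genuine (mean-zero) martingales then yields \eqref{eq:sb-nll}.

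To obtain \eqref{eq:sb-nll-bad} I would rewrite the divergence term. The relations \eqref{eq:sb-yz} together with the factorization $\pp{t}{SB}=\Psi\widehat{\Psi}$ give the pointwise identity $g\nabla_\vx\log\pp{t}{SB}(\rvX_t) = \rvZ_t + \widehat{\rvZ}_t$; integrating $\nabla_\vx\cdot(g\widehat{\rvZ}_t)$ by parts against the density being integrated against converts $\E[\nabla_\vx\cdot(g\widehat{\rvZ}_t)]$ into $-\,\E[(g\nabla_\vx\log\pp{t}{SB})^\T\widehat{\rvZ}_t] = -\,\E[(\rvZ_t+\widehat{\rvZ}_t)^\T\widehat{\rvZ}_t]$, and substituting this into \eqref{eq:sb-nll} and completing the square in $\widehat{\rvZ}_t$ against $g\nabla_\vx\log\pp{t}{SB}-\rvZ_t$ reproduces \eqref{eq:sb-nll-bad}. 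Setting $\rvZ_t\equiv 0$ (i.e. $\Psi\equiv\text{const}$, so \eqref{eq:fsde-is} collapses to \eqref{eq:fsde}) then recovers exactly the SGM bound \eqref{eq:sgm-nll}, with the correspondence $\widehat{\rvZ}_t \leftrightarrow g\rvs_t$ and $g\nabla_\vx\log\pp{t}{SB} \leftrightarrow g\condscore$.

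The main obstacle is making the third step rigorous rather than the bookkeeping: once Theorems~\ref{thm:1} and~\ref{thm:3} are in hand, the only analytic content is (i) upgrading the two It\^o integrals from local to true mean-zero martingales, which needs the square-integrability of $\rvZ_t,\widehat{\rvZ}_t$ along paths afforded by the regularity conditions inherited from Lemma~\ref{lemma:non-fc}, and (ii) justifying the integration by parts, which requires the boundary terms at infinity to vanish (decay of $\pp{t}{SB}$ and of the drifts $f,\,g\widehat{\rvZ}_t$) and some care about which law one is integrating against when replacing the score term --- a subtlety that is already implicit in passing between the two lines of \eqref{eq:sgm-nll}.
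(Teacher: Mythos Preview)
Your proposal is correct and follows essentially the same route as the paper: integrate the two backward SDEs of Theorem~\ref{thm:3}, use the terminal condition $\rvY_T+\widehat{\rvY}_T=\log p_T(\rvX_T)$ and the identification $\rvY_0+\widehat{\rvY}_0=\log\pp{0}{SB}(\vx_0)$, take conditional expectation to kill the It\^o integrals and obtain \eqref{eq:sb-nll}, then apply integration by parts to $\E[g\nabla_\vx\cdot\widehat{\rvZ}_t]$ and complete the square to reach \eqref{eq:sb-nll-bad}. Your remark about which law the integration by parts is performed against is well taken and is indeed glossed over in the paper (which writes \eqref{eq:app-coro-proof2} against the marginal $\pp{t}{SB}$ while the theorem's expectation is conditional on $\rvX_0=\vx_0$); the paper handles the vanishing boundary terms via its assumption~\ref{assum:v}.
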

  Similar to \eqref{eq:sgm-nll}, Theorem~\ref{thm:4} suggests a parameterized lower bound to the log-likelihoods, \ie $\log \pp{0}{SB}(\vx_0) \ge \LSB(\vx_0; \theta, \phi)$ where $\LSB(\vx_0; \theta, \phi)$ shares the same expression in \eqref{eq:sb-nll} except that
  $\rvZ_t \approx \rvZ(t,\vx; \theta)$ and $\widehat{\rvZ}_t \approx \widehat{\rvZ}(t,\vx; \phi)$ are approximated with some parameterized models (\eg DNNs).
  Note that $\nabla_\vx \log \pp{t}{SB}$ is \emph{intractable} in practice for any nontrivial $(\rvZ_t, \widehat{\rvZ}_t)$.
  Hence, we use the divergence-based objective in \eqref{eq:sb-nll} as our training objective of both policies.

\textbf{Connection to score-based models.}
Recall Fig.~\ref{fig:2} and compare the parameterized log-likelihoods of SB \eqref{eq:sb-nll} to SGM \eqref{eq:sgm-nll};
one can verify that $\LSB$ collapses to $\LSGM$ when
$(\rvZ_t, \widehat{\rvZ}_t) := (\mathbf{0}, g~\rvs_t)$.
From the SB perspective, this occurs only when $\pp{T}{\eqref{eq:fsde}} = \prior$.
Since no effort is required in the forward process to reach $\prior$,
the optimal forward control $\rvZ_t$, by definition, degenerates;
thereby making the backward control $\widehat{\rvZ}_t$
collapses to the score function.
However, in any case when $\pp{T}{\eqref{eq:fsde}} \neq \prior$,
{for instance when the diffusion SDEs are improperly designed,}
the forward policy $\rvZ_t$ steers the diffusion process
back to $\prior$,
while its backward counterpart $\widehat{\rvZ}_t$ compensates the reversed process accordingly.
From this view,
SB
alleviates the problematic design in SGM by
enlarging the class of diffusion processes to accept \textit{nonlinear} drifts and
providing an optimization principle on learning these processes.
Moreover, Theorem~\ref{thm:4}
generalizes the log-likelihood training from SGM to SB.

\textbf{Connection to flow-based models.}
Interestingly,
the log-likelihood computation in Theorem~\ref{thm:4},
where we use a path $\{\rvX_t\}_{t\in[0,T]}$ sampled from a data point $\rvX_0$ to parameterize its log-likelihood,
resembles modern training of (deterministic) flow-based models \citep{grathwohl2018ffjord}, which have recently been shown to admit a close relation to SGM \citep{song2020score,gong2021interpreting}.
The connection is built on the concept of \textit{probability flow}
-- which suggests that
the marginal density of an SDE can be evaluated through an
ordinary differential equation (ODE).
Below, we provide a similar flow representation for SB,
further strengthening their connection to modern generative models.
\begin{corollary}[Probability flow for SB] \label{coro:5}
  The following ODE
  characterizes the probability flow of the optimal processes of SB \eqref{eq:sb-sde}
  in the sense that $\forall t,~\pp{t}{\eqref{eq-sb-prob}} \equiv \pp{t}{\eqref{eq:sb-sde}} \equiv \pp{t}{SB}$.
  \begin{align}
    \rd \rvX_t = \br{f + g\rvZ(t, \rvX_t) -\frac{1}{2}g~(\rvZ(t, \rvX_t) + \widehat{\rvZ}(t, \rvX_t))} \dt
    \label{eq-sb-prob}
  \end{align}
\end{corollary}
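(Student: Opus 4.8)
The plan is to adapt the probability-flow construction of \citet{song2020score} --- which turns a diffusion into an ODE with identical time marginals --- but to run it on the \emph{optimal} forward SDE \eqref{eq:fsb} of SB and then rewrite the resulting score using the factorization $\pp{t}{SB} = \Psi(t,\cdot)\widehat{\Psi}(t,\cdot)$ from Theorem~\ref{thm:1} together with the Feynman--Kac identifications $\rvZ_t = g\nabla_\vx\log\Psi$, $\widehat{\rvZ}_t = g\nabla_\vx\log\widehat{\Psi}$ of Theorem~\ref{thm:3}.

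Concretely, I would proceed as follows. First, since \eqref{eq:fsb} is an Itô SDE with drift $f + g^2\nabla_\vx\log\Psi$ and spatially-constant scalar diffusion $g(t)$, its marginal $p_t := \pp{t}{\eqref{eq:fsb}} \equiv \pp{t}{SB}$ solves the Fokker--Planck equation
\[
  \fracpartial{p_t}{t} = -\nabla_\vx\cdot\big[(f + g^2\nabla_\vx\log\Psi)\,p_t\big] + \tfrac{1}{2}\Tr(g^2\nabla^2_\vx p_t).
\]
Second, I would convert the second-order term into a divergence via $\Tr(g^2\nabla^2_\vx p_t) = \nabla_\vx\cdot(g^2\nabla_\vx p_t) = \nabla_\vx\cdot(g^2 p_t\,\nabla_\vx\log p_t)$, giving
\[
  \fracpartial{p_t}{t} = -\nabla_\vx\cdot\Big[\big(f + g^2\nabla_\vx\log\Psi - \tfrac{1}{2}g^2\nabla_\vx\log p_t\big)\,p_t\Big],
\]
which is exactly the continuity (Liouville) equation for the deterministic flow with drift $f + g^2\nabla_\vx\log\Psi - \tfrac12 g^2\nabla_\vx\log p_t$. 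Third, I would simplify this drift: by $p_t = \Psi\widehat{\Psi}$ we get $\nabla_\vx\log p_t = \nabla_\vx\log\Psi + \nabla_\vx\log\widehat{\Psi}$, so using \eqref{eq:sb-yz} we have $g^2\nabla_\vx\log\Psi = g\rvZ_t$ and $g^2\nabla_\vx\log p_t = g(\rvZ_t + \widehat{\rvZ}_t)$, yielding drift $f + g\rvZ_t - \tfrac12 g(\rvZ_t + \widehat{\rvZ}_t)$, i.e.\ precisely \eqref{eq-sb-prob}. Finally, since \eqref{eq:fsb} and \eqref{eq-sb-prob} start from the same law $\pdata$ and the marginals of both solve the \emph{same} linear continuity equation, uniqueness of solutions gives $\pp{t}{\eqref{eq-sb-prob}} \equiv \pp{t}{SB}$ for all $t\in[0,T]$, which is the claim (the same marginal is shared by the backward SDE \eqref{eq:bsb}).

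I expect the only real obstacle to be analytic bookkeeping rather than anything conceptual: one must ensure $\Psi,\widehat{\Psi}$ are strictly positive and $C^{1,2}$ so that $\nabla_\vx\log\Psi$, $\nabla_\vx\log\widehat\Psi$, and hence the score $\nabla_\vx\log p_t$ are well defined; that the coefficients of \eqref{eq:fsb} are regular/integrable enough for the Fokker--Planck equation to hold in the first place; and that the continuity equation admits a unique solution with the given initial datum so that the two flows must agree --- all of which are inherited from the regularity hypotheses already imposed in Lemma~\ref{lemma:non-fc} and Theorem~\ref{thm:3}. One also tacitly needs the ODE \eqref{eq-sb-prob} to generate a well-defined flow, which follows from local Lipschitzness of $\rvZ,\widehat{\rvZ}$ on the support of $\pp{t}{SB}$. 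A cosmetic alternative that sidesteps Fokker--Planck is to average the (forward) generator of \eqref{eq:fsb} against the time-reversed generator of \eqref{eq:bsb}: both propagate $\pp{t}{SB}$, the second-order parts cancel, and the factor $\tfrac12$ in \eqref{eq-sb-prob} emerges directly.
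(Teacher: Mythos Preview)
Your proposal is correct and takes essentially the same approach as the paper: apply the probability-flow ODE construction of \citet{song2020score} to the optimal forward SDE \eqref{eq:fsb} and then simplify the resulting score term via the factorization $\pp{t}{SB}=\Psi\widehat{\Psi}$ together with \eqref{eq:sb-yz}. The only difference is presentational --- the paper invokes the Fokker--Planck-to-continuity reduction as a cited result, whereas you spell it out explicitly and add remarks on regularity and uniqueness.
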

One can verify (see Remark~\ref{remark:app-5} in $\S$\ref{app:a}) that computing the log-likelihood of this ODE model \eqref{eq-sb-prob}
using flow-based training techniques indeed recovers the training objective of SB derived in \eqref{eq:sb-nll}.

{
\vspace{-2pt}
\subsection{Practical Implementation} \label{sec:3.3}
\vspace{-2pt}

In this section, we detail the implementation of our FBSDE-inspired SB model, named \textbf{SB-FBSDE}.

\textbf{Training process.}
We treat the log-likelihood in \eqref{eq:sb-nll} as our training objective,
where the divergence can be can be estimated efficiently following \citet{hutchinson1989stochastic}.
This immediately distinguishes SB-FBSDE from prior SB models \citep{de2021diffusion,vargas2021solving}, which instead rely on regression-based objectives.\footnote{
  In fact, their regression targets may be recovered from \eqref{eq:sb-nll-bad} under proper transformation; see Appendix~\ref{app:d2}.
}
For low-dimensional datasets, we simply perform joint optimization, $\max \LSB(\vx_0; \theta, \phi)$, to train the parameterized policies $\rvZ(\cdot,\cdot; \theta)$ and $\widehat{\rvZ}(\cdot,\cdot; \phi)$.
For higher-dimensional (\eg image) datasets, however, it can be prohibitively expensive to keep the entire computational graph.
In these cases, we follow \citet{de2021diffusion} by caching the sampled trajectories in a reply buffer and refreshing them
in a lower frequency basis (around 1500 iterations).
Although this implies that the gradient path w.r.t. $\theta$ will be discarded,
we can leverage the symmetric structure of SB
and re-derive the log-likelihood for the sampled noise, \ie $\LSB(\vx_T)$, based on the backward trajectories sampled from~\eqref{eq:bsb}.
We leave the derivation to Theorem~\ref{thm:9} in $\S$\ref{app:a} due to space constraint.
This results in an alternate training between the following two objectives after dropping all unrelated terms,
  \begin{align}
      &\LSBB(\vx_0; \phi)
      = - \int_0^T \E_{\rvX_t\sim\text{\eqref{eq:fsb}}}\br{\frac{1}{2} \norm{\widehat{\rvZ}(t,\rvX_t;\phi)}^2  + g \nabla_\vx \cdot \widehat{\rvZ}(t,\rvX_t;\phi) + {\rvZ}_t^\T\widehat{\rvZ}(t,\rvX_t;\phi) } \dt, \label{eq:high-dim-loss} \\
      &\LSBB(\vx_T; \theta)
      = - \int_0^T \E_{\rvX_t\sim\text{\eqref{eq:bsb}}}\br{\frac{1}{2} \norm{\rvZ(t,\rvX_t;\theta)}^2 + g \nabla_\vx \cdot {\rvZ}(t,\rvX_t;\theta) + \widehat{\rvZ}^\T_t{\rvZ}(t,\rvX_t;\theta)} \dt. \label{eq:high-dim-loss2}
  \end{align}
Our training process is summarized in Alg.~\ref{alg:train}.
While the joint training scheme in Alg.~\ref{alg:train2} resembles recent {diffusion} flow-based models \citep{zhang2021diffusion},
the alternate training in Alg.~\ref{alg:train3} relates to the classical IPF \citep{de2021diffusion}, despite differing in the underlying objectives.
Empirically, the joint training scheme can converge faster yet at the cost of introducing memory complexity.
We highlight these flexible training procedures arising from the unified viewpoint provided in Theorem~\ref{thm:4}.
Hereafter, we refer to
each cycle, \ie $2K$ training steps, in Alg.~\ref{alg:train3} as a \textit{training stage} of SB-FBSDE.

\begin{figure}[t]
\vskip -0.15in
\begin{minipage}[t]{0.48\textwidth}
\begin{algorithm}[H]
\small
     \caption{\small Likelihood training of SB-FBSDE}
     \label{alg:train}
  \begin{algorithmic}
   \STATE {\bfseries Input:}
      boundary distributions $\pdata$ and $\prior$,  \\
      $\quad$ parameterized policies $\rvZ(\cdot,\cdot; \theta)$ and $\widehat{\rvZ}(\cdot,\cdot; \phi)$
   \REPEAT
     \IF{ memory resource is affordable }
       \STATE {\bfseries run} Algorithm~\ref{alg:train2}.
     \ELSE
       \STATE {\bfseries run} Algorithm~\ref{alg:train3}.
     \ENDIF
   \UNTIL{ converges }
  \end{algorithmic}
\end{algorithm}
\vskip -0.25in
\begin{algorithm}[H]
  \small
     \caption{\small Joint (diffusion flow-based) training}
     \label{alg:train2}
  \begin{algorithmic}
      \FOR{$k=1$ {\bfseries to} $K$ }
         \STATE Sample $\rvX_{t\in[0,T]}$ from \eqref{eq:fsde-is} where $\vx_0 \sim \pdata$ (computational graph retained).
         \STATE Compute $\LSB(\vx_0; \theta, \phi)$ with \eqref{eq:sb-nll}.
         \STATE Update $(\theta,\phi)$ with $\nabla_{\theta,\phi} \LSB(\vx_0; \theta, \phi)$.
      \ENDFOR
  \end{algorithmic}
\end{algorithm}
\end{minipage}
\hfill
\begin{minipage}[t]{0.50\textwidth}
\begin{algorithm}[H]
\small
     \caption{\small Alternate (IPF-based) training}
     \label{alg:train3}
  \begin{algorithmic}
   \STATE {\bfseries Input:}
      Caching frequency $M$
     \FOR{$k=1$ {\bfseries to} $K$ }
       \IF{ $k\%M$ == 0 }
         \STATE Sample $\rvX_{t\in[0,T]}$ from \eqref{eq:fsde-is} where $\vx_0 \sim \pdata$ (computational graph discarded).
       \ENDIF
       \STATE Compute $\LSBB(\vx_0; \phi)$ with \eqref{eq:high-dim-loss}.
       \STATE Update $\phi$ with gradient $\nabla_\phi \LSBB(\vx_0; \phi)$.
     \ENDFOR
     \FOR{$k=1$ {\bfseries to} $K$ }
       \IF{ $k\%M$ == 0 }
         \STATE Sample $\rvX_{t\in[0,T]}$ from \eqref{eq:bsb} where $\vx_T \sim \prior$ (computational graph discarded).
       \ENDIF
       \STATE Compute $\LSB(\vx_T; \theta)$ with \eqref{eq:high-dim-loss2}.
       \STATE Update $\theta$ with gradient $\nabla_\theta \LSBB(\vx_T; \theta)$.
     \ENDFOR
  \end{algorithmic}
\end{algorithm}
\end{minipage}
\vskip -0.15in
\end{figure}
}

\textbf{Generative process.}
While the generative processes for SB can be performed as simply as
propagating \eqref{eq:bsb} given the trained policy $\widehat{\rvZ}(\cdot,\cdot; \phi)$,
it has been constantly observed
that adopting Langevin sampling to the generative process greatly improves performance \citep{song2020score}.
This procedure, often referred to as the \textit{Langevin corrector}, requires knowing
the score function $\nabla_\vx \log \pp{t}{}$.
For SB, we can estimate its value by recalling (see $\S$\ref{sec:2.2}) that
$\rvZ_t + \widehat{\rvZ}_t = g\nabla_\vx \log \pp{t}{SB}$.
This results in the following predictor-corrector sampling procedure (see Alg.~\ref{alg:sample} in Appendix~\ref{Appendix:Exp_detals} for more details).
\begin{align}
    \text{Predict step:}\quad \rvX_t &\leftarrow \rvX_t + g~\widehat{\rvZ}_t \Delta t + \sqrt{g \Delta t} \eps \\
    \text{Correct step:}\quad \rvX_t &\leftarrow \rvX_{t} + \textstyle\frac{\sigma_t}{g}({\rvZ}_t + \widehat{\rvZ}_t) + \sqrt{2\sigma_t} \eps
\end{align}
where $\eps \sim \calN(\mathbf{0},\mI)$ and $\sigma_t$ is the pre-specified noise scales (see \eqref{eq:noise-scale} in Appendix~\ref{Appendix:Exp_detals}).

\vspace{-1pt}

\textbf{Limitations \& efficiency.}
The main computational burden of our method comes from the computation of the divergence and maintaining two distinct networks.
Despite it typically increases the memory by 2$\sim$2.5 times compared to SGM,
we empirically observe that the divergence-based training converges much faster per iteration than standard regression.
As a result, SB-FBSDE admits comparable training time (+6.8\% in our CIFAR10 experiment) compared to SGM, yet with a substantially fewer sampling time (-80\%) due to adopting nonlinear SDEs.

\section{Experiments} \label{sec:4}

\begin{figure}[H]
  \vskip -0.28in
  \begin{minipage}{\textwidth}
    \centering
    \captionsetup[subfloat]{captionskip=1pt}
    \subfloat[GMM]{\includegraphics[width=0.5\textwidth]{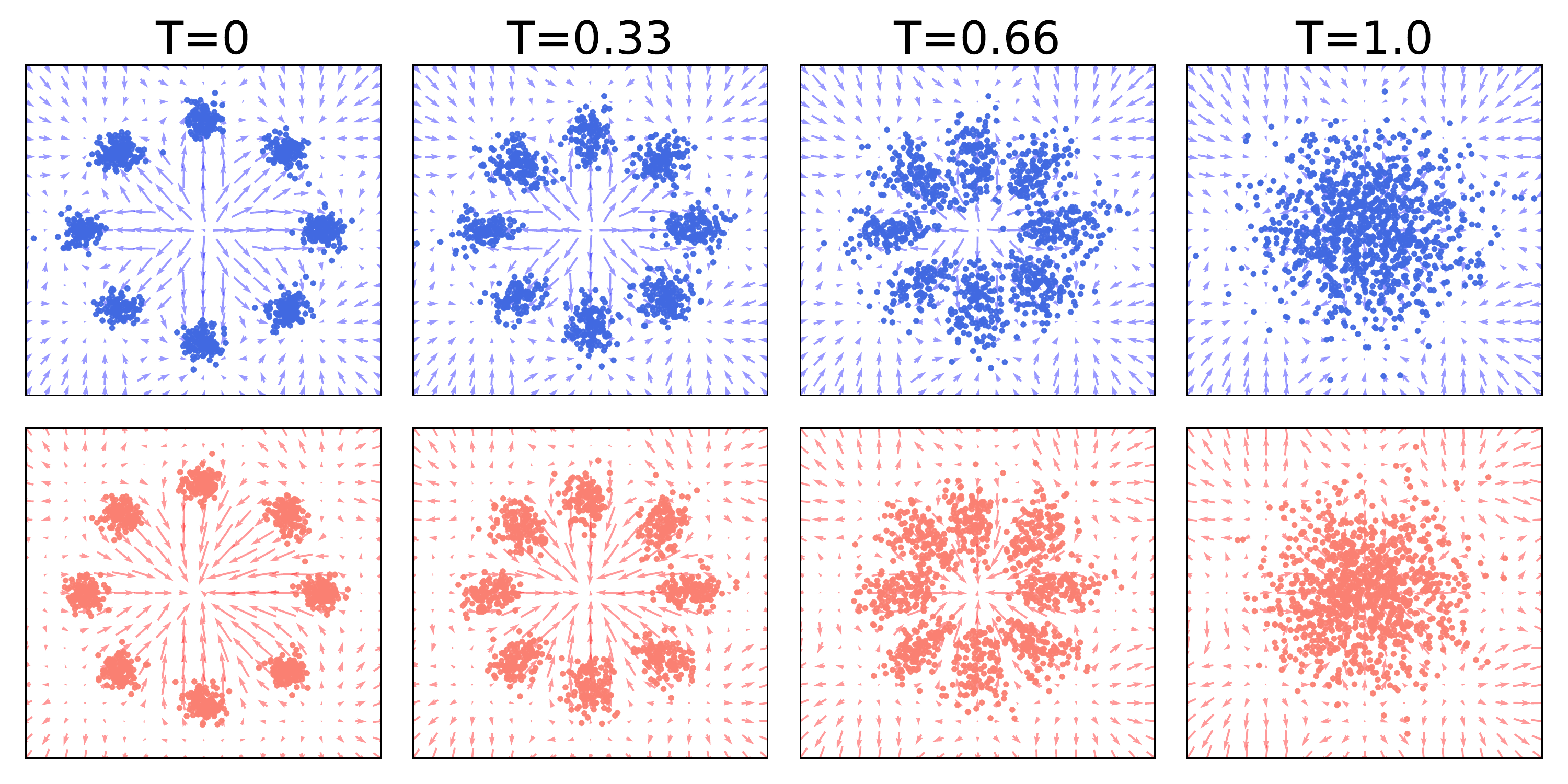}}
    \hfill
    \subfloat[Checkerboard]{\includegraphics[width=0.5\textwidth]{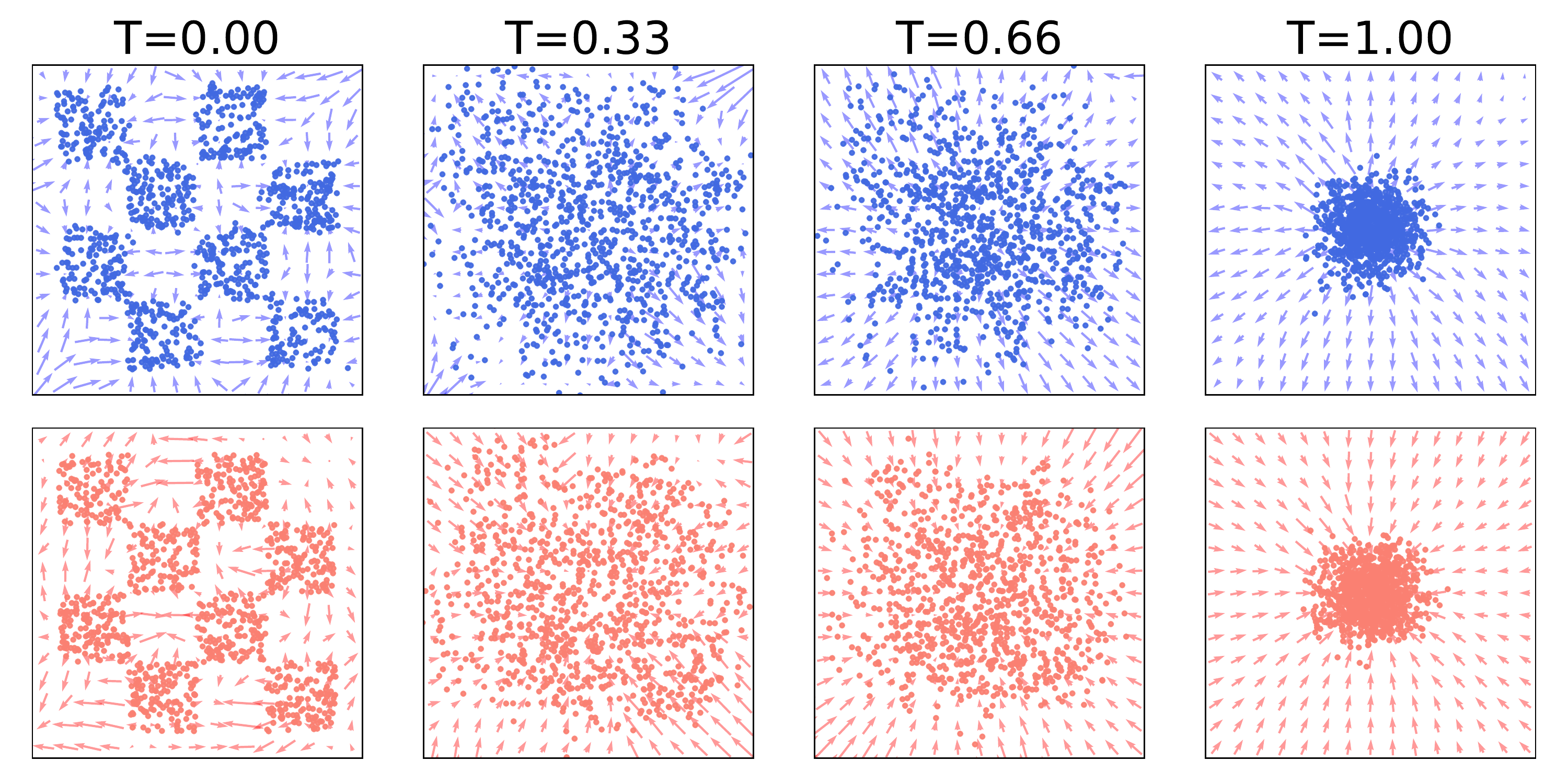}}
    \vskip -0.1in
    \caption{
      Validation of our SB-FBSDE model on two synthetic toy datasets that represent continuous and discontinuous distributions.
      \textit{Upper}:
      \markblue{Generation ($\color{color5} \pdata \leftarrow \prior$)} process with the
      \markblue{backward vector field $\color{color5} \widehat{\rvZ}(\cdot,\cdot;\phi)$}.
      \textit{Bottom}:
      \markred{Diffusion ($\color{color3} \pdata \rightarrow \prior$)} process with the
      \markred{forward vector field $\color{color3} {\rvZ}(\cdot,\cdot;\theta)$}.
    }%
    \label{fig:toy}%
  \end{minipage}
  \vskip 0.05in
  \begin{minipage}{\textwidth}
    \centering
    \includegraphics[width=\textwidth]{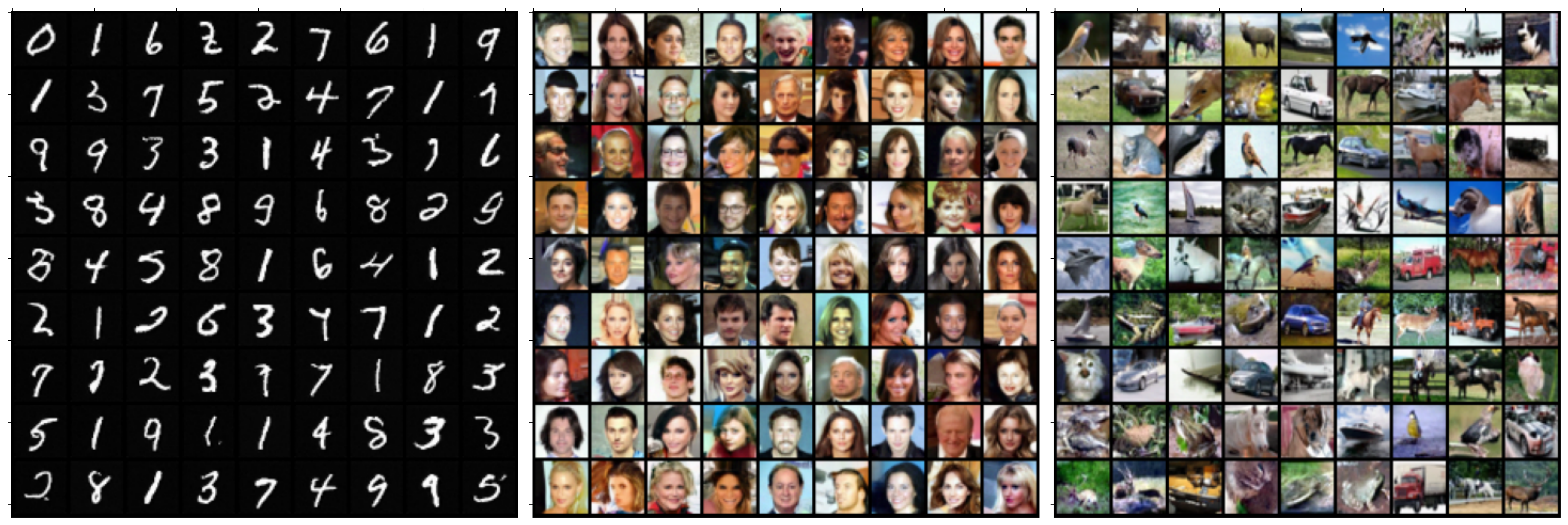}
    \vskip -0.1in
    \caption{
        Uncurated samples from our SB-FBSDE models trained on MNIST (left), resized CelebA (middle) and CIFAR10 (right).
        More images can be found in Appendix~\ref{appendix:addtional_fig}.
    }
    \label{fig:all_datasets}
  \end{minipage}
  \vskip -0.1in
\end{figure}

\textbf{Setups.}
We testify SB-FBSDE on
two toy datasets and three image datasets, \ie
MNIST, CelebA,\footnote{
  We follow a similar setup of prior SB models \citep{de2021diffusion} and resize the image size to 32.
} and CIFAR10.
$\prior$ is set to a zero-mean Gaussian whose variance varies for each task and can be computed according to \citet{song2020improved}.
We parameterize ${\rvZ}(\cdot,\cdot;\theta)$ and $\widehat{\rvZ}(\cdot,\cdot;\phi)$ with residual-based networks for toy datasets and consider
Unet \citep{ronneberger2015u} and NCSN++ \citep{song2020score} respectively for MNIST/CelebA and CIFAR10.
All networks adopt position encoding %
and are trained with AdamW \citep{loshchilov2017decoupled} on a TITAN RTX.
We adopt VE-SDE (\ie $f:=\mathbf{0}$; see \citet{song2020score})
as our SDE backbone,
which implies that in order to achieve reasonable performance,
SB must \textit{learn} a proper data-to-noise diffusion process.
On all datasets,
we set the horizon $T=$1.0
and solve the SDEs via the Euler-Maruyama method.
The interval $[0,T]$ is discretized into 200 steps for CIFAR10 and 100 steps for all other datasets,
which are much fewer than the ones in SGM ($\ge$1000 steps).
Other details are left in Appendix~\ref{Appendix:Exp_detals}.

\textbf{Toy datasets.}
We first validate our joint optimization (\ie Alg~\ref{alg:train2}) on generating a mixture of Gaussian and checkerboard (adopted from \citet{grathwohl2018ffjord})
as the representatives of continuous and discontinuous distributions.
Figure~\ref{fig:toy}
shows how the learned policies, \ie $\color{color3} {\rvZ}(\cdot,\cdot;\theta)$ and $\color{color5} \widehat{\rvZ}(\cdot,\cdot;\phi)$,
construct the vector fields that progressively transport
samples back-and-forth between $\prior$ and $\pdata$.
The vector fields can be highly nonlinear and dissimilar to each other.
This resembles neither SGMs, whose forward vector field must obey linear structure,
nor flow-based models, whose vector fields are simply with opposite directions.
We highlight this as a distinct feature arising from SB models.

\textbf{Image datasets.}
Next, we validate our alternate training (\ie Alg~\ref{alg:train3}) on high-dimensional image generation.
The generated images for MNIST, CelebA, and CIFAR10 are presented in Fig.~\ref{fig:all_datasets},
which clearly suggest that our SB-FBSDE is able to synthesize high-fidelity images.
More uncurated images can be founded in Appendix~\ref{appendix:addtional_fig}.
Regarding the quantitative evaluation, Table~\ref{table:NLL_FID} summarizes
the negative log-likelihood (NLL; measured in bits/dim) and the
Fr\'echet Inception Distance score (FID; \citet{heusel2017gans}) on CIFAR10.
For our SB-FBSDE, we compute the NLL on the test set using Corollary~\ref{coro:5},
in a similar vein to SGMs and flow-based models,
and report the FID over 50k samples w.r.t the training set.
Notably, our SB-FBSDE achieves 2.98 bits/dim and 3.18 FID score on CIFAR10,
which is comparable to the top existing methods from other model classes (\eg SGMs) and
outperforms prior Optimal Transport (OT) methods \citep{wang2021deep,tanaka2019discriminator}
by a large margin in terms of the sample quality.
More importantly, it enables log-likelihood computations that are otherwise infeasible in prior OT methods.
We note that the quantitative comparisons on MNIST and CelebA are omitted
as the scores on these two datasets are not widely reported and different pre-processing (\eg resizing of CelebA) can lead to values that are not directly comparable.

\begin{figure}[t]
  \vskip -0.15in
  \begin{minipage}{\textwidth}
    \centering
    \captionsetup{type=table}
    \caption{
      CIFAR10 evaluation using negative log-likelihood (NLL; bits/dim) on the test set and sample quality (FID score) w.r.t. the training set.
      Our \textbf{SB-FBSDE} outperforms other optimal transport baselines by a large margin
      and is comparable to existing generative models.
    }
      \vskip -0.05in
      \begin{tabular}{llcccccccc}
        \toprule
        {Model Class} & {Method}  & {NLL $\downarrow$} & {FID $\downarrow$} \\

        \midrule

        \multirow{4}{*}{\specialcelll[l]{Optimal Transport }}
        & \textbf{SB-FBSDE (ours)}             & {2.96} &  3.01 \\[1pt]
        & DOT \citep{tanaka2019discriminator}  &  -     & 15.78 \\[1pt]
        & Multi-stage SB \citep{wang2021deep}  &  -     & 12.32 \\[1pt]
        & DGflow \citep{ansari2020refining}    &  -     &  9.63 \\[1pt]

        \midrule

        \multirow{5}{*}{\specialcelll[l]{SGMs}}
        & SDE (deep, sub-VP; \citet{song2020score})                  & 2.99         & 2.92  \\[1pt]
        & ScoreFlow \citep{song2021maximum}             &2.74 & 5.7  \\[1pt]
        & VDM \citep{kingma2021variational}             &\textbf{2.49} & 4.00  \\[1pt]
        & LSGM\citep{vahdat2021score}                   & 3.43         &\textbf{2.10} \\[1pt]

        \bottomrule
      \end{tabular} \label{table:NLL_FID}
  \end{minipage}
\end{figure}

\begin{figure}[t]
  \vskip -0.1in
  \begin{minipage}{0.43\textwidth}
    \centering
    \includegraphics[height=2.25cm]{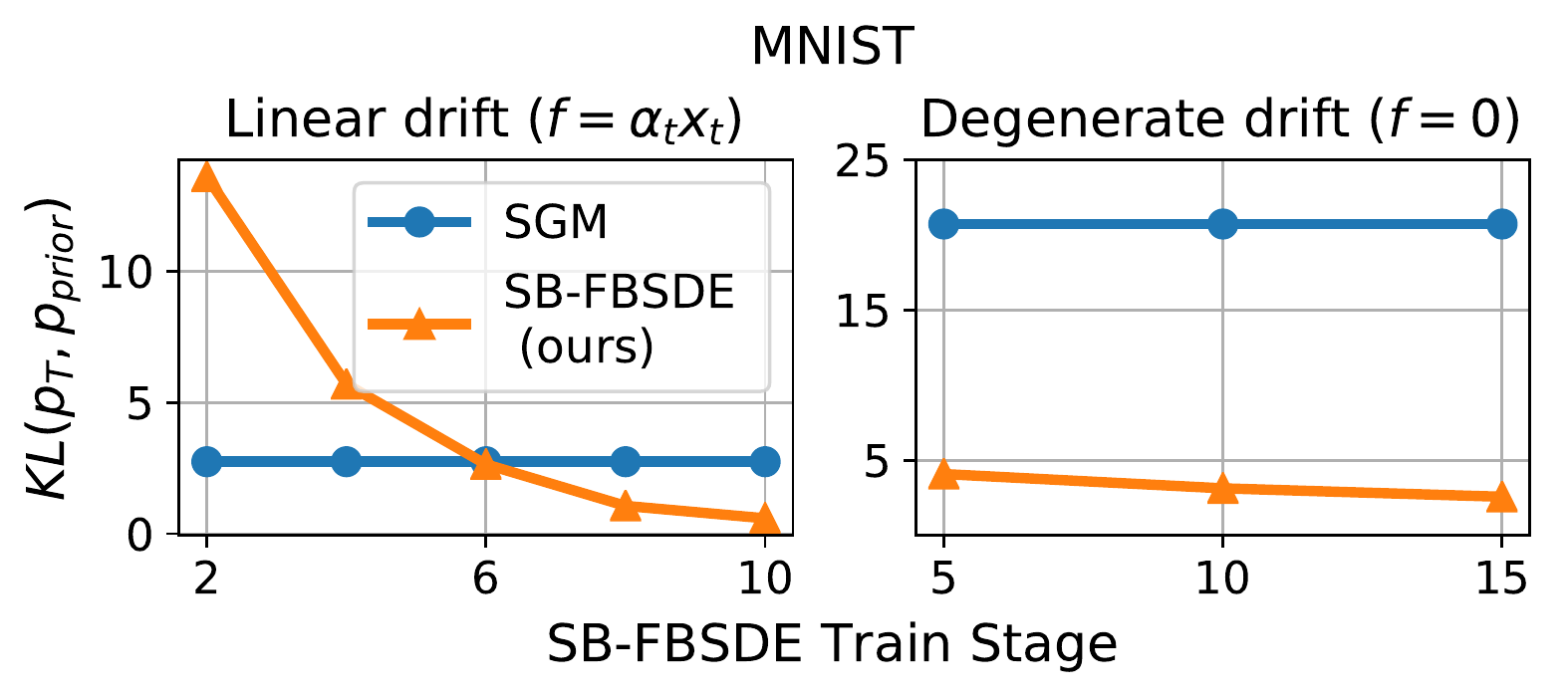}
    \vskip -0.1in
    \caption{
      Validation of our SB-FBSDE on \textit{learning} forward diffusions
      that are closer (in KL sense) to $\prior$ compared to SGM.
    }
    \label{fig:mnist-kl}
  \end{minipage}
  \hfill
  \begin{minipage}{0.55\textwidth}
    \centering
    \includegraphics[height=2.25cm]{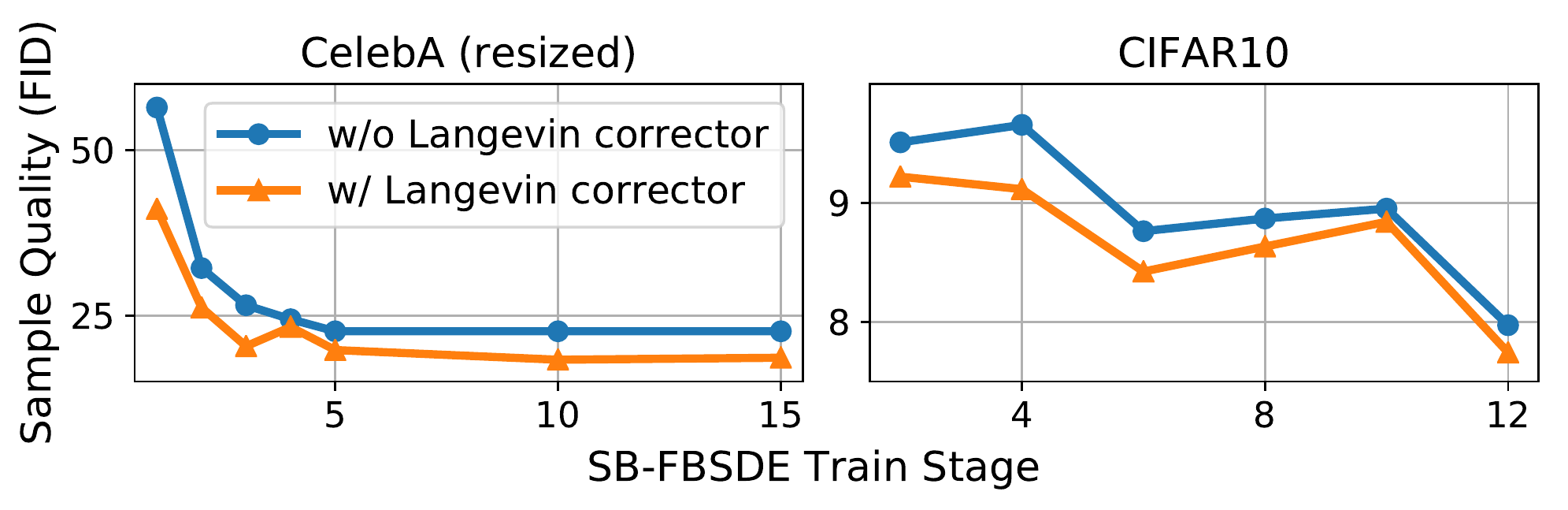}
    \vskip -0.1in
    \caption{
        Ablation analysis where we show that adding Langevin corrector to SB-FBSDE
        uniformly improves the FID scores on both CelebA and CIFAR10 training.
    }
    \label{fig:fid}
  \end{minipage}
\vskip -0.1in
\end{figure}

\textbf{Validity of SB forward diffusion.}
Our theoretical analysis in $\S$\ref{sec:3.2}
suggests that the forward policy ${\rvZ}(\cdot,\cdot;\theta)\equiv\rvZ_\theta$ plays
an essential role in governing samples towards $\prior$.
Here, we validate this conjecture by computing the KL divergence between
the terminal distribution induced by $\rvZ_\theta$,
\ie $\pp{T}{\eqref{eq:fsde-is}}$,
and the designated prior $\prior$.
We refer readers to Appendix~\ref{Appendix:Exp_detals} for the actual computation.
Figure~\ref{fig:mnist-kl} reports these values over MNIST training.
For both degenerate ($f:=\mathbf{0}$) and linear ($f:=\alpha_t\rvX_t$) base drifts,
our SB-FBSDE generates terminal distributions that are much closer to $\prior$.
Note that the values of SGM remain unchanged throughout training since SGM relies on \textit{pre-specified} diffusion.
This is in contrast to our SB-FBSDE whose forward policy $\rvZ_\theta$
gradually shortens the KL gap to $\prior$,
thereby providing a better forward diffusion for training the backward policy.

\textbf{Effect of Langevin corrector.}
In practice,
we observe that the Langevin corrector greatly affects the generative performance.
As shown in Fig.~\ref{fig:fid},
including these corrector steps uniformly improves the sample quality (FID) on both CelebA and CIFAR10 throughout training.
Since the SDEs are often solved via the Euler-Maruyama method,
their propagation can be subjected to discretization errors accumulated over time.
These Langevin steps thereby help
re-distributing the samples at each time step $t$ towards the desired density $\pp{t}{SB}$.
We emphasize
this improvement
as the benefit gained from
applying modern generative training techniques based on
the solid connection between SB and SGM.

\vspace{-5pt}
\section{Conclusion}
\vspace{-5pt}
In this work, we present a novel computational framework, grounded on Forward-Backward SDEs theory,
for computing the log-likelihood of Schr{\"o}dinger Bridge (SB) --
a recently emerging model that adopts entropy-regularized optimal transport for generative modeling.
Our findings
provide new theoretical insights by generalizing previous theoretical results for Score-based Generative Model,
and facilitate applications of modern generative training for SB.
We validate our method on various image generative tasks, \eg MNIST, CelebA, and CIFAR10,
showing encouraging results in synthesizing high-fidelity samples
while retaining the rigorous mathematical framework.

\newpage

\section*{Acknowledgments}
The authors would like to thank Ioannis Exarchos and Oswin So for their generous involvement and helpful supports during the rebuttal. The authors would also like to thank Marcus A Pereira and Ethan N Evans for their participation and kind discussion in the early stage of project exploration.
This research was supported by the ARO Award \# W911NF2010151.

\section*{Author Contributions} \label{sec:author}
The original idea of solving the PDE optimality of SB with FBSDEs theory was initiated by Tianrong.
Later, Guan derived the main theories (\ie Theorem~\ref{thm:3},~\ref{thm:4},~\ref{thm:9}, and Corollary~\ref{coro:5})
presented in Section~\ref{sec:3.1}, \ref{sec:3.2} and Appendix~\ref{app:a} with few helps from Tianrong.
Tianrong designed the practical algorithms (\eg stage-wise optimization and Langevin-corrector) in Section~\ref{sec:3.3} and
conducted most experiments with few helps from Guan.
Guan wrote the main paper except for Section~\ref{sec:4},
which were written by both Tianrong and Guan.
Both Guan and Tianrong contributed to code development.

\section*{Reproducibility Statement}

Our training algorithms are detailed in Alg.~\ref{alg:train}, \ref{alg:train2}, and \ref{alg:train3}, with the training objectives given in the same section (see (\ref{eq:sb-nll}, \ref{eq:high-dim-loss}, \ref{eq:high-dim-loss2})). Other implementation details (\eg data pre-processing) are left in Appendix~\ref{Appendix:Exp_detals}. This shall provide sufficient information for readers of interests to reproduce our results. As we strongly believe in the merit of open sourcing, we intend to release our implementation upon publication. On the theoretical side, all proofs are left to Appendix~\ref{app:a} due to space constraint. We provide the assumptions in the same section.

\bibliographystyle{iclr2022_conference}
\bibliography{reference.bib}

\begin{thebibliography}{66}
\providecommand{\natexlab}[1]{#1}
\providecommand{\url}[1]{\texttt{#1}}
\expandafter\ifx\csname urlstyle\endcsname\relax
  \providecommand{\doi}[1]{doi: #1}\else
  \providecommand{\doi}{doi: \begingroup \urlstyle{rm}\Url}\fi

\bibitem[Anderson(1982)]{anderson1982reverse}
Brian~DO Anderson.
\newblock Reverse-time diffusion equation models.
\newblock \emph{Stochastic Processes and their Applications}, 12\penalty0
  (3):\penalty0 313--326, 1982.

\bibitem[Ansari et~al.(2020)Ansari, Ang, and Soh]{ansari2020refining}
Abdul~Fatir Ansari, Ming~Liang Ang, and Harold Soh.
\newblock Refining deep generative models via discriminator gradient flow.
\newblock \emph{arXiv preprint arXiv:2012.00780}, 2020.

\bibitem[Caluya \& Halder(2021)Caluya and Halder]{caluya2021wasserstein}
Kenneth Caluya and Abhishek Halder.
\newblock Wasserstein proximal algorithms for the schr{\"o}dinger bridge
  problem: Density control with nonlinear drift.
\newblock \emph{IEEE Transactions on Automatic Control}, 2021.

\bibitem[Chen et~al.(2020)Chen, Lu, Chenli, Zhu, and Tian]{chen2020vflow}
Jianfei Chen, Cheng Lu, Biqi Chenli, Jun Zhu, and Tian Tian.
\newblock Vflow: More expressive generative flows with variational data
  augmentation.
\newblock In \emph{International Conference on Machine Learning}, pp.\
  1660--1669. PMLR, 2020.

\bibitem[Chen et~al.(2018)Chen, Rubanova, Bettencourt, and
  Duvenaud]{chen2018neural}
Tian~Qi Chen, Yulia Rubanova, Jesse Bettencourt, and David~K Duvenaud.
\newblock Neural ordinary differential equations.
\newblock In \emph{Advances in Neural Information Processing Systems}, pp.\
  6572--6583, 2018.

\bibitem[Chen et~al.(2021)Chen, Georgiou, and Pavon]{chen2021stochastic}
Yongxin Chen, Tryphon~T Georgiou, and Michele Pavon.
\newblock Stochastic control liaisons: Richard sinkhorn meets gaspard monge on
  a schr{\"o}dinger bridge.
\newblock \emph{SIAM Review}, 63\penalty0 (2):\penalty0 249--313, 2021.

\bibitem[Child(2020)]{child2020very}
Rewon Child.
\newblock Very deep vaes generalize autoregressive models and can outperform
  them on images.
\newblock \emph{arXiv preprint arXiv:2011.10650}, 2020.

\bibitem[Cole(1951)]{cole1951quasi}
Julian~D Cole.
\newblock On a quasi-linear parabolic equation occurring in aerodynamics.
\newblock \emph{Quarterly of applied mathematics}, 9\penalty0 (3):\penalty0
  225--236, 1951.

\bibitem[Dai~Pra(1991)]{dai1991stochastic}
Paolo Dai~Pra.
\newblock A stochastic control approach to reciprocal diffusion processes.
\newblock \emph{Applied mathematics and Optimization}, 23\penalty0
  (1):\penalty0 313--329, 1991.

\bibitem[De~Bortoli et~al.(2021)De~Bortoli, Thornton, Heng, and
  Doucet]{de2021diffusion}
Valentin De~Bortoli, James Thornton, Jeremy Heng, and Arnaud Doucet.
\newblock Diffusion schr{\"o}dinger bridge with applications to score-based
  generative modeling.
\newblock \emph{arXiv preprint arXiv:2106.01357}, 2021.

\bibitem[Exarchos \& Theodorou(2018)Exarchos and
  Theodorou]{exarchos2018stochastic}
Ioannis Exarchos and Evangelos~A Theodorou.
\newblock Stochastic optimal control via forward and backward stochastic
  differential equations and importance sampling.
\newblock \emph{Automatica}, 87:\penalty0 159--165, 2018.

\bibitem[Gong \& Li(2021)Gong and Li]{gong2021interpreting}
Wenbo Gong and Yingzhen Li.
\newblock Interpreting diffusion score matching using normalizing flow.
\newblock \emph{arXiv preprint arXiv:2107.10072}, 2021.

\bibitem[Gong et~al.(2019)Gong, Chang, Jiang, and Wang]{gong2019autogan}
Xinyu Gong, Shiyu Chang, Yifan Jiang, and Zhangyang Wang.
\newblock Autogan: Neural architecture search for generative adversarial
  networks.
\newblock In \emph{Proceedings of the IEEE/CVF International Conference on
  Computer Vision}, pp.\  3224--3234, 2019.

\bibitem[Gorodetsky et~al.(2015)Gorodetsky, Karaman, and
  Marzouk]{gorodetsky2015efficient}
Alex~A Gorodetsky, Sertac Karaman, and Youssef~M Marzouk.
\newblock Efficient high-dimensional stochastic optimal motion control using
  tensor-train decomposition.
\newblock In \emph{Robotics: Science and Systems}, 2015.

\bibitem[Grathwohl et~al.(2018)Grathwohl, Chen, Betterncourt, Sutskever, and
  Duvenaud]{grathwohl2018ffjord}
Will Grathwohl, Ricky~TQ Chen, Jesse Betterncourt, Ilya Sutskever, and David
  Duvenaud.
\newblock Ffjord: Free-form continuous dynamics for scalable reversible
  generative models.
\newblock \emph{arXiv preprint arXiv:1810.01367}, 2018.

\bibitem[Han et~al.(2018)Han, Jentzen, and Weinan]{han2018solving}
Jiequn Han, Arnulf Jentzen, and E~Weinan.
\newblock Solving high-dimensional partial differential equations using deep
  learning.
\newblock \emph{Proceedings of the National Academy of Sciences}, 115\penalty0
  (34):\penalty0 8505--8510, 2018.

\bibitem[Heusel et~al.(2017)Heusel, Ramsauer, Unterthiner, Nessler, and
  Hochreiter]{heusel2017gans}
Martin Heusel, Hubert Ramsauer, Thomas Unterthiner, Bernhard Nessler, and Sepp
  Hochreiter.
\newblock Gans trained by a two time-scale update rule converge to a local nash
  equilibrium.
\newblock \emph{Advances in neural information processing systems}, 30, 2017.

\bibitem[Ho et~al.(2020)Ho, Jain, and Abbeel]{ho2020denoising}
Jonathan Ho, Ajay Jain, and Pieter Abbeel.
\newblock Denoising diffusion probabilistic models.
\newblock \emph{arXiv preprint arXiv:2006.11239}, 2020.

\bibitem[Hopf(1950)]{hopf1950partial}
Eberhard Hopf.
\newblock The partial differential equation ut+ uux= $\mu$xx.
\newblock \emph{Communications on Pure and Applied mathematics}, 3\penalty0
  (3):\penalty0 201--230, 1950.

\bibitem[Huang et~al.(2020)Huang, Dinh, and Courville]{huang2020augmented}
Chin-Wei Huang, Laurent Dinh, and Aaron Courville.
\newblock Augmented normalizing flows: Bridging the gap between generative
  flows and latent variable models.
\newblock \emph{arXiv preprint arXiv:2002.07101}, 2020.

\bibitem[Huang et~al.(2021)Huang, Lim, and Courville]{huang2021variational}
Chin-Wei Huang, Jae~Hyun Lim, and Aaron Courville.
\newblock A variational perspective on diffusion-based generative models and
  score matching.
\newblock \emph{arXiv preprint arXiv:2106.02808}, 2021.

\bibitem[Hutchinson(1989)]{hutchinson1989stochastic}
Michael~F Hutchinson.
\newblock A stochastic estimator of the trace of the influence matrix for
  laplacian smoothing splines.
\newblock \emph{Communications in Statistics-Simulation and Computation},
  18\penalty0 (3):\penalty0 1059--1076, 1989.

\bibitem[Hyv{\"a}rinen \& Dayan(2005)Hyv{\"a}rinen and
  Dayan]{hyvarinen2005estimation}
Aapo Hyv{\"a}rinen and Peter Dayan.
\newblock Estimation of non-normalized statistical models by score matching.
\newblock \emph{Journal of Machine Learning Research}, 6\penalty0 (4), 2005.

\bibitem[It{\^o}(1951)]{ito1951stochastic}
Kiyosi It{\^o}.
\newblock \emph{On stochastic differential equations}, volume~4.
\newblock American Mathematical Soc., 1951.

\bibitem[Jolicoeur-Martineau et~al.(2021)Jolicoeur-Martineau, Li,
  Pich{\'e}-Taillefer, Kachman, and Mitliagkas]{jolicoeur2021gotta}
Alexia Jolicoeur-Martineau, Ke~Li, R{\'e}mi Pich{\'e}-Taillefer, Tal Kachman,
  and Ioannis Mitliagkas.
\newblock Gotta go fast when generating data with score-based models.
\newblock \emph{arXiv preprint arXiv:2105.14080}, 2021.

\bibitem[Karatzas \& Shreve(2012)Karatzas and Shreve]{karatzas2012brownian}
Ioannis Karatzas and Steven Shreve.
\newblock \emph{Brownian motion and stochastic calculus}, volume 113.
\newblock Springer Science \& Business Media, 2012.

\bibitem[Karras et~al.(2020{\natexlab{a}})Karras, Aittala, Hellsten, Laine,
  Lehtinen, and Aila]{karras2020training}
Tero Karras, Miika Aittala, Janne Hellsten, Samuli Laine, Jaakko Lehtinen, and
  Timo Aila.
\newblock Training generative adversarial networks with limited data.
\newblock \emph{arXiv preprint arXiv:2006.06676}, 2020{\natexlab{a}}.

\bibitem[Karras et~al.(2020{\natexlab{b}})Karras, Laine, Aittala, Hellsten,
  Lehtinen, and Aila]{karras2020analyzing}
Tero Karras, Samuli Laine, Miika Aittala, Janne Hellsten, Jaakko Lehtinen, and
  Timo Aila.
\newblock Analyzing and improving the image quality of stylegan.
\newblock In \emph{Proceedings of the IEEE/CVF Conference on Computer Vision
  and Pattern Recognition}, pp.\  8110--8119, 2020{\natexlab{b}}.

\bibitem[Kingma et~al.(2021)Kingma, Salimans, Poole, and
  Ho]{kingma2021variational}
Diederik~P Kingma, Tim Salimans, Ben Poole, and Jonathan Ho.
\newblock Variational diffusion models.
\newblock \emph{arXiv preprint arXiv:2107.00630}, 2021.

\bibitem[Kobylanski(2000)]{kobylanski2000backward}
Magdalena Kobylanski.
\newblock Backward stochastic differential equations and partial differential
  equations with quadratic growth.
\newblock \emph{Annals of probability}, pp.\  558--602, 2000.

\bibitem[Kong et~al.(2020{\natexlab{a}})Kong, Kim, and Bae]{kong2020hifi}
Jungil Kong, Jaehyeon Kim, and Jaekyoung Bae.
\newblock Hifi-gan: Generative adversarial networks for efficient and high
  fidelity speech synthesis.
\newblock \emph{arXiv preprint arXiv:2010.05646}, 2020{\natexlab{a}}.

\bibitem[Kong \& Ping(2021)Kong and Ping]{kong2021fast}
Zhifeng Kong and Wei Ping.
\newblock On fast sampling of diffusion probabilistic models.
\newblock \emph{arXiv preprint arXiv:2106.00132}, 2021.

\bibitem[Kong et~al.(2020{\natexlab{b}})Kong, Ping, Huang, Zhao, and
  Catanzaro]{kong2020diffwave}
Zhifeng Kong, Wei Ping, Jiaji Huang, Kexin Zhao, and Bryan Catanzaro.
\newblock Diffwave: A versatile diffusion model for audio synthesis.
\newblock \emph{arXiv preprint arXiv:2009.09761}, 2020{\natexlab{b}}.

\bibitem[Kullback(1968)]{kullback1968probability}
Solomon Kullback.
\newblock Probability densities with given marginals.
\newblock \emph{The Annals of Mathematical Statistics}, 39\penalty0
  (4):\penalty0 1236--1243, 1968.

\bibitem[L{\'e}onard(2013)]{leonard2013survey}
Christian L{\'e}onard.
\newblock A survey of the schr{\"o}dinger problem and some of its connections
  with optimal transport.
\newblock \emph{arXiv preprint arXiv:1308.0215}, 2013.

\bibitem[Li \& Hao(2018)Li and Hao]{li2018optimal}
Qianxiao Li and Shuji Hao.
\newblock An optimal control approach to deep learning and applications to
  discrete-weight neural networks.
\newblock \emph{arXiv preprint arXiv:1803.01299}, 2018.

\bibitem[Liu et~al.(2021{\natexlab{a}})Liu, Chen, and
  Theodorou]{liu2021differential}
Guan-Horng Liu, Tianrong Chen, and Evangelos~A Theodorou.
\newblock Ddpnopt: Differential dynamic programming neural optimizer.
\newblock In \emph{International Conference on Learning Representations},
  2021{\natexlab{a}}.

\bibitem[Liu et~al.(2021{\natexlab{b}})Liu, Chen, and
  Theodorou]{liu2021dynamic}
Guan-Horng Liu, Tianrong Chen, and Evangelos~A Theodorou.
\newblock Dynamic game theoretic neural optimizer.
\newblock In \emph{International Conference on Machine Learning},
  2021{\natexlab{b}}.

\bibitem[Loshchilov \& Hutter(2017)Loshchilov and
  Hutter]{loshchilov2017decoupled}
Ilya Loshchilov and Frank Hutter.
\newblock Decoupled weight decay regularization.
\newblock \emph{arXiv preprint arXiv:1711.05101}, 2017.

\bibitem[Ma et~al.(1999)Ma, Morel, and Yong]{ma1999forward}
Jin Ma, J-M Morel, and Jiongmin Yong.
\newblock \emph{Forward-backward stochastic differential equations and their
  applications}.
\newblock Number 1702. Springer Science \& Business Media, 1999.

\bibitem[Maal{\o}e et~al.(2019)Maal{\o}e, Fraccaro, Li{\'e}vin, and
  Winther]{maaloe2019biva}
Lars Maal{\o}e, Marco Fraccaro, Valentin Li{\'e}vin, and Ole Winther.
\newblock Biva: A very deep hierarchy of latent variables for generative
  modeling.
\newblock \emph{arXiv preprint arXiv:1902.02102}, 2019.

\bibitem[Maoutsa et~al.(2020)Maoutsa, Reich, and Opper]{maoutsa2020interacting}
Dimitra Maoutsa, Sebastian Reich, and Manfred Opper.
\newblock Interacting particle solutions of fokker--planck equations through
  gradient--log--density estimation.
\newblock \emph{Entropy}, 22\penalty0 (8):\penalty0 802, 2020.

\bibitem[Negyesi et~al.(2021)Negyesi, Andersson, and Oosterlee]{negyesi2021one}
Balint Negyesi, Kristoffer Andersson, and Cornelis~W Oosterlee.
\newblock The one step malliavin scheme: new discretization of bsdes
  implemented with deep learning regressions.
\newblock \emph{arXiv preprint arXiv:2110.05421}, 2021.

\bibitem[Nichol \& Dhariwal(2021)Nichol and Dhariwal]{nichol2021improved}
Alex Nichol and Prafulla Dhariwal.
\newblock Improved denoising diffusion probabilistic models.
\newblock \emph{arXiv preprint arXiv:2102.09672}, 2021.

\bibitem[Pardoux \& Peng(1992)Pardoux and Peng]{pardoux1992backward}
Etienne Pardoux and Shige Peng.
\newblock Backward stochastic differential equations and quasilinear parabolic
  partial differential equations.
\newblock In \emph{Stochastic partial differential equations and their
  applications}, pp.\  200--217. Springer, 1992.

\bibitem[Park \& Kim(2021)Park and Kim]{park2021styleformer}
Jeeseung Park and Younggeun Kim.
\newblock Styleformer: Transformer based generative adversarial networks with
  style vector.
\newblock \emph{arXiv preprint arXiv:2106.07023}, 2021.

\bibitem[Pavon \& Wakolbinger(1991)Pavon and Wakolbinger]{pavon1991free}
Michele Pavon and Anton Wakolbinger.
\newblock On free energy, stochastic control, and schr{\"o}dinger processes.
\newblock In \emph{Modeling, Estimation and Control of Systems with
  Uncertainty}, pp.\  334--348. Springer, 1991.

\bibitem[Pereira et~al.(2019)Pereira, Wang, Exarchos, and
  Theodorou]{pereira2019neural}
Marcus Pereira, Ziyi Wang, Ioannis Exarchos, and Evangelos~A Theodorou.
\newblock Neural network architectures for stochastic control using the
  nonlinear feynman-kac lemma.
\newblock \emph{arXiv preprint arXiv:1902.03986}, 2019.

\bibitem[Pereira et~al.(2020)Pereira, Wang, Chen, Reed, and
  Theodorou]{pereira2020feynman}
Marcus Pereira, Ziyi Wang, Tianrong Chen, Emily Reed, and Evangelos Theodorou.
\newblock Feynman-kac neural network architectures for stochastic control using
  second-order fbsde theory.
\newblock In \emph{Learning for Dynamics and Control}, pp.\  728--738. PMLR,
  2020.

\bibitem[Ping et~al.(2020)Ping, Peng, Zhao, and Song]{ping2020waveflow}
Wei Ping, Kainan Peng, Kexin Zhao, and Zhao Song.
\newblock Waveflow: A compact flow-based model for raw audio.
\newblock In \emph{International Conference on Machine Learning}, pp.\
  7706--7716. PMLR, 2020.

\bibitem[Ramachandran et~al.(2017)Ramachandran, Zoph, and
  Le]{ramachandran2017searching}
Prajit Ramachandran, Barret Zoph, and Quoc~V Le.
\newblock Searching for activation functions.
\newblock \emph{arXiv preprint arXiv:1710.05941}, 2017.

\bibitem[Renardy \& Rogers(2006)Renardy and Rogers]{renardy2006introduction}
Michael Renardy and Robert~C Rogers.
\newblock \emph{An introduction to partial differential equations}, volume~13.
\newblock Springer Science \& Business Media, 2006.

\bibitem[Ronneberger et~al.(2015)Ronneberger, Fischer, and
  Brox]{ronneberger2015u}
Olaf Ronneberger, Philipp Fischer, and Thomas Brox.
\newblock U-net: Convolutional networks for biomedical image segmentation.
\newblock In \emph{International Conference on Medical image computing and
  computer-assisted intervention}, pp.\  234--241. Springer, 2015.

\bibitem[Schr{\"o}dinger(1932)]{schrodinger1932theorie}
Erwin Schr{\"o}dinger.
\newblock Sur la th{\'e}orie relativiste de l'{\'e}lectron et
  l'interpr{\'e}tation de la m{\'e}canique quantique.
\newblock In \emph{Annales de l'institut Henri Poincar{\'e}}, volume~2, pp.\
  269--310, 1932.

\bibitem[Schulman et~al.(2015)Schulman, Levine, Abbeel, Jordan, and
  Moritz]{schulman2015trust}
John Schulman, Sergey Levine, Pieter Abbeel, Michael Jordan, and Philipp
  Moritz.
\newblock Trust region policy optimization.
\newblock In \emph{International conference on machine learning}, pp.\
  1889--1897, 2015.

\bibitem[Song \& Ermon(2019)Song and Ermon]{song2019generative}
Yang Song and Stefano Ermon.
\newblock Generative modeling by estimating gradients of the data distribution.
\newblock \emph{arXiv preprint arXiv:1907.05600}, 2019.

\bibitem[Song \& Ermon(2020)Song and Ermon]{song2020improved}
Yang Song and Stefano Ermon.
\newblock Improved techniques for training score-based generative models.
\newblock \emph{arXiv preprint arXiv:2006.09011}, 2020.

\bibitem[Song et~al.(2020)Song, Sohl-Dickstein, Kingma, Kumar, Ermon, and
  Poole]{song2020score}
Yang Song, Jascha Sohl-Dickstein, Diederik~P Kingma, Abhishek Kumar, Stefano
  Ermon, and Ben Poole.
\newblock Score-based generative modeling through stochastic differential
  equations.
\newblock \emph{arXiv preprint arXiv:2011.13456}, 2020.

\bibitem[Song et~al.(2021)Song, Durkan, Murray, and Ermon]{song2021maximum}
Yang Song, Conor Durkan, Iain Murray, and Stefano Ermon.
\newblock Maximum likelihood training of score-based diffusion models.
\newblock \emph{arXiv e-prints}, pp.\  arXiv--2101, 2021.

\bibitem[Tanaka(2019)]{tanaka2019discriminator}
Akinori Tanaka.
\newblock Discriminator optimal transport.
\newblock \emph{arXiv preprint arXiv:1910.06832}, 2019.

\bibitem[Vahdat \& Kautz(2020)Vahdat and Kautz]{vahdat2020nvae}
Arash Vahdat and Jan Kautz.
\newblock Nvae: A deep hierarchical variational autoencoder.
\newblock \emph{arXiv preprint arXiv:2007.03898}, 2020.

\bibitem[Vahdat et~al.(2021)Vahdat, Kreis, and Kautz]{vahdat2021score}
Arash Vahdat, Karsten Kreis, and Jan Kautz.
\newblock Score-based generative modeling in latent space.
\newblock \emph{arXiv preprint arXiv:2106.05931}, 2021.

\bibitem[Vargas et~al.(2021)Vargas, Thodoroff, Lawrence, and
  Lamacraft]{vargas2021solving}
Francisco Vargas, Pierre Thodoroff, Neil~D Lawrence, and Austen Lamacraft.
\newblock Solving schr{\"o}dinger bridges via maximum likelihood.
\newblock \emph{arXiv preprint arXiv:2106.02081}, 2021.

\bibitem[Wang et~al.(2021)Wang, Jiao, Xu, Wang, and Yang]{wang2021deep}
Gefei Wang, Yuling Jiao, Qian Xu, Yang Wang, and Can Yang.
\newblock Deep generative learning via schr{\"o}dinger bridge.
\newblock \emph{arXiv preprint arXiv:2106.10410}, 2021.

\bibitem[Yong \& Zhou(1999)Yong and Zhou]{yong1999stochastic}
Jiongmin Yong and Xun~Yu Zhou.
\newblock \emph{Stochastic controls: Hamiltonian systems and HJB equations},
  volume~43.
\newblock Springer Science \& Business Media, 1999.

\bibitem[Zhang \& Chen(2021)Zhang and Chen]{zhang2021diffusion}
Qinsheng Zhang and Yongxin Chen.
\newblock Diffusion normalizing flow.
\newblock \emph{arXiv preprint arXiv:2110.07579}, 2021.

\end{thebibliography}

\newpage
\appendix

\def\dWt{{\dwt}}

\section{Introduction of Schr{\"o}dinger Bridge}\label{app:d1}

    In this subsection, we provide a brief review for Schr{\"o}dinger Bridge (SB) and some reasonings for Theorem~\ref{thm:1}.
    The SB problem, at its classical form, considers the following optimization \citep{dai1991stochastic,pavon1991free},
    \begin{equation}
        \begin{split}
            &\quad\min_{\rvu \in \calU}
            \E\br{\int_0^T \frac{1}{2} \norm{\rvu(t, \rvX_t)}^2} \\
            \text{s.t. }&
            \begin{cases}
                \rd \rvX_t = \rvu(t, \rvX_t) \dt + \sqrt{2\epsilon}~\dwt \\
                \rvX_0\sim p_0(\rvX), \quad \rvX_T\sim p_T(\rvX)
            \end{cases},
        \end{split} \label{eq:app-sb}
    \end{equation}
    where $\calU := \{ \rvu: [0,T]\times\mathbb{R}^n \mapsto \mathbb{R}^n | \langle \rvu , \rvu \rangle < \infty \}$.
    The optimization \eqref{eq:app-sb} characterizes a standard stochastic optimal control (SOC) programming with energy
    (\ie $\frac{1}{2} \norm{\rvu}^2$) minimization except with an additional terminal boundary condition.
    The optimality conditions to \eqref{eq:app-sb} are given by
      \begin{subequations}
      \begin{empheq}[left={\empheqlbrace}]{align}
          \fracpartial{\psi}{t} &= - \frac{1}{2} \norm{\nabla_\vx\psi}^2 - \epsilon~\Delta\psi, \label{eq:app-sb2a} \\
          \fracpartial{p^*}{t} &= \nabla_\vx \cdot \pr{p^* \nabla_\vx \psi} + \epsilon~\Delta p^*, \label{eq:app-sb2b}
      \end{empheq} \label{eq:app-sb2}%
      \end{subequations}
    where $\psi(t,\vx) \in C^{1,2}$ is known as the \textit{value} function in SOC literature
    and $p^*(t,\vx) \in C^{1,2}$ is the associated optimal marginal density. $\Delta$ denotes the Laplace operator.
    Equations \eqref{eq:app-sb2a} and \eqref{eq:app-sb2b}
    are respectively the Kolmogorov’s backward and forward PDEs,
    also known as Hamilton-Jacobi-Bellman and Fokker-Planck equations.
    The SB system can be obtained by applying the Hopf-Cole \citep{hopf1950partial,cole1951quasi} transformation $(\psi, p^*) \mapsto (\Psi, \widehat{\Psi})$,
        \begin{align}
            \begin{cases}
            \fracpartial{\Psi}{t} = - \epsilon~\Delta\Psi \\[3pt]
            \fracpartial{\widehat{\Psi}}{t} = \epsilon~\Delta\widehat{\Psi}
            \end{cases}
            \text{s.t. } \Psi(0,\cdot) \widehat{\Psi}(0,\cdot) = p_0,~\Psi(T,\cdot) \widehat{\Psi}(T,\cdot) = p_T.
            \label{eq:app-sb-pde2}
        \end{align}

    In this work, we consider a recent generalization of \eqref{eq:app-sb} and \eqref{eq:app-sb-pde2} to
    an SDE class with nonlinear drift, affine control, and time-varying diffusion.
    We synthesize their results below.
    \begin{theorem}[SB optimality; \citet{caluya2021wasserstein}] \label{thm:10}
        Consider the following optimization
        \begin{equation}
            \begin{split}
                &\qquad\quad\min_{\rvu}
                \E\br{\int_0^T \frac{1}{2} \norm{\rvu(t, \rvX_t)}^2} \\
                \text{s.t. }&
                \begin{cases}
                    \rd \rvX_t = [f(t, \rvX_t) + g(t)~\rvu(t, \rvX_t)] \dt + \sqrt{2\epsilon}g(t)~\dwt \\
                    \rvX_0\sim p_0(\rvX), \quad \rvX_T\sim p_T(\rvX)
                \end{cases},
            \end{split} \label{eq:app-sb3}
        \end{equation}
        where $g(t)$ is uniformly lower-bounded and
        $f(t, \rvX_t)$ satisfies Lipschitz conditions with at most linear growth in $\vx$.
        Then, the Hopf-Cole transformation to \eqref{eq:app-sb3} becomes
        \begin{align}
            \begin{cases}
            \fracpartial{\Psi}{t} = - \nabla_\vx \Psi^\T f {-} \epsilon \Tr(g^2\nabla^2_{\vx}\Psi) \\[3pt]
            \fracpartial{\widehat{\Psi}}{t} = - \nabla_\vx \cdot (\widehat{\Psi} f) {+} \epsilon \Tr(g^2\nabla^2_{\vx}\widehat{\Psi})
            \end{cases},
            \label{eq:app-sb-pde3}
        \end{align}
        with the same boundary conditions $\Psi(0,\cdot) \widehat{\Psi}(0,\cdot) = p_0,~\Psi(T,\cdot) \widehat{\Psi}(T,\cdot) = p_T$.
        The optimal control to \eqref{eq:app-sb3} is thereby given by
        \begin{align}
            \rvu^*(t,\rvX_t) = 2\epsilon g(t) \nabla_\vx \log \Psi(t,\rvX_t).
            \label{eq:app-sb-opt}
        \end{align}
    \end{theorem}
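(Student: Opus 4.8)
The plan is to derive the first-order optimality system for the stochastic control problem~\eqref{eq:app-sb3} in its fluid-dynamic (Benamou--Brenier) form and then linearize and decouple it by the Hopf--Cole substitution, exactly as in the classical passage \eqref{eq:app-sb}$\to$\eqref{eq:app-sb2}$\to$\eqref{eq:app-sb-pde2}, but carrying along the extra drift $f$ and the time-varying affine diffusion $g$. An equivalent entry point is to observe, via Girsanov, that $\KL(\QQ~\|~\PP)$ for the path measure $\QQ$ of the controlled SDE against the uncontrolled reference equals $\tfrac{1}{4\epsilon}\E[\int_0^T \|\rvu\|^2 \dt]$; hence \eqref{eq:app-sb3} is the dynamic SB problem \eqref{eq:sb} up to a constant, and one could alternatively invoke Theorem~\ref{thm:1} with $g^2$ rescaled by $2\epsilon$.

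First I would recast \eqref{eq:app-sb3} as an optimization over the pair $(\rho,\rvu)$, where $\rho(t,\cdot)$ is the law of $\rvX_t$. The Fokker--Planck equation of the controlled SDE forces $\partial_t\rho = -\nabla_\vx\cdot((f+g\rvu)\rho) + \epsilon g^2 \Delta\rho$ with $\rho(0,\cdot)=p_0$, $\rho(T,\cdot)=p_T$, and the objective becomes $\int_0^T\!\!\int \tfrac12\|\rvu\|^2\rho~\rd\vx\dt$, which is jointly convex in $(\rho,\rho\rvu)$. Introducing a multiplier $\psi(t,\vx)$ for the Fokker--Planck constraint and integrating by parts, the stationarity condition in $\rvu$ gives the feedback form $\rvu^\star = g\nabla_\vx\psi$, while the stationarity condition in $\rho$ gives the Hamilton--Jacobi--Bellman equation $\partial_t\psi + \nabla_\vx\psi^\T f + \tfrac12 g^2\|\nabla_\vx\psi\|^2 + \epsilon g^2\Delta\psi = 0$; substituting $\rvu^\star$ back into the constraint produces the optimal density $p^\star$ solving $\partial_t p^\star = -\nabla_\vx\cdot((f + g^2\nabla_\vx\psi)p^\star) + \epsilon g^2\Delta p^\star$ with the same endpoint data. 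The hypotheses that $f$ is Lipschitz with at most linear growth and that $g$ is uniformly bounded below are what guarantee that this forward--backward PDE pair is well posed, that $p^\star>0$, and that $\psi\in C^{1,2}$, so the logarithm appearing below is licit.

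Then I would set $\Psi := \exp(\psi/2\epsilon)$ and $\widehat{\Psi} := p^\star/\Psi$. Using $\nabla_\vx\psi = 2\epsilon\,\nabla_\vx\Psi/\Psi$ and $\Delta\psi = 2\epsilon(\Delta\Psi/\Psi - \|\nabla_\vx\Psi\|^2/\Psi^2)$, the quadratic term $\tfrac12 g^2\|\nabla_\vx\psi\|^2$ cancels exactly against part of $\epsilon g^2\Delta\psi$ in the HJB equation, which collapses to the linear equation $\partial_t\Psi = -\nabla_\vx\Psi^\T f - \epsilon\,\Tr(g^2\nabla^2_\vx\Psi)$. Inserting $p^\star=\Psi\widehat{\Psi}$ into the Fokker--Planck equation for $p^\star$, expanding the divergence and Laplacian by the product rule, and using the equation just obtained for $\Psi$, all cross terms cancel and one is left with the adjoint linear equation $\partial_t\widehat{\Psi} = -\nabla_\vx\cdot(\widehat{\Psi}f) + \epsilon\,\Tr(g^2\nabla^2_\vx\widehat{\Psi})$. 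The endpoint constraints $p^\star(0,\cdot)=p_0$ and $p^\star(T,\cdot)=p_T$ become $\Psi\widehat{\Psi}|_{t=0}=p_0$ and $\Psi\widehat{\Psi}|_{t=T}=p_T$, giving \eqref{eq:app-sb-pde3}, while $\rvu^\star = g\nabla_\vx\psi = 2\epsilon g\,\nabla_\vx\log\Psi$ gives \eqref{eq:app-sb-opt}.

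The main obstacle is the second step, not the third: unlike an ordinary finite-horizon control problem, \eqref{eq:app-sb3} carries a \emph{terminal marginal} constraint, so dynamic programming does not apply off the shelf and the HJB equation appears only as one half of a coupled forward--backward Schr\"odinger system produced by the PDE-constrained Lagrangian. Making this rigorous requires establishing existence (and, for \eqref{eq:app-sb-opt} to be meaningful, positivity and smoothness) of a solution to the system \eqref{eq:app-sb-pde3} --- a classical but nontrivial point, typically handled by a fixed-point / Sinkhorn argument on the associated static Schr\"odinger problem together with disintegration of the path measure --- and verifying that the formal variational computation is valid under the stated regularity. By contrast, once the coupled HJB--Fokker--Planck pair is in hand, the Hopf--Cole step is the same routine substitution that linearizes \eqref{eq:app-sb2} into \eqref{eq:app-sb-pde2}, now merely bookkeeping the extra $f$ and the $\vx$-independent factor $g(t)$.
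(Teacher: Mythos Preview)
Your proposal is correct and considerably more detailed than what the paper itself provides: the paper's ``proof'' of this theorem is simply the one-line citation ``See Section III and Theorem 2 in \citet{caluya2021wasserstein}.'' You have supplied the actual argument that the paper defers to the external reference, and your route---recasting \eqref{eq:app-sb3} in fluid-dynamic form, deriving the coupled HJB/Fokker--Planck pair via a PDE-constrained Lagrangian, then applying the Hopf--Cole substitution $\Psi=\exp(\psi/2\epsilon)$, $\widehat{\Psi}=p^\star/\Psi$---is exactly the standard derivation that the paper sketches for the classical case in \eqref{eq:app-sb}--\eqref{eq:app-sb-pde2} and that the cited reference extends to the affine-control, time-varying-diffusion setting. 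Your remark that the genuine analytical burden lies in the existence/regularity of the forward--backward Schr\"odinger system (rather than in the Hopf--Cole algebra) is also accurate, and is precisely why the paper outsources the proof.
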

    \begin{proof}
        See Section III and Theorem 2 inf \citet{caluya2021wasserstein}.
    \end{proof}

    Theorem~\ref{thm:10} is particularly attractive to us since its SDE corresponds exactly to the one appearing in score-based generative models.
    One can recover Theorem~\ref{thm:1} by
    \begin{enumerate}[leftmargin=20pt]
        \item[\textit{(i)}] Following \citet{pavon1991free}, we know that the objective in \eqref{eq:app-sb3} is equivalent to $\KL(\QQ~||~\PP)$ by an application of Girsanov’s Theorem.
        \item[\textit{(ii)}] Equation \eqref{eq:app-sb-pde3} is exactly \eqref{eq:sb-pde} with $\epsilon=\frac{1}{2}$. Furthermore, substituting the optimal control \eqref{eq:app-sb-opt} to the stochastic process in \eqref{eq:app-sb3} yields the optimal forward SDE in \eqref{eq:fsb}.
        \item[\textit{(iii)}] Finally, reversing the SDE \eqref{eq:fsb} from forward to backward following \citet{anderson1982reverse},
        \begin{align}
            \rd \rvX_t &= [f + g^2~\nabla_\vx \log {\Psi}(t,\rvX_t) - g^2~\nabla_\vx \log \pp{t}{SB} ] \dt + g~\dwt,
        \end{align}
        and recalling the factorization principle,
        $\log \pp{t}{SB}(\cdot) = \log\Psi(t,\cdot) + \log\widehat{\Psi}(t,\cdot)$,
        from Equation (4.15) in \citet{chen2021stochastic} yield the optimal backward SDE in \eqref{eq:bsb}.
    \end{enumerate}

\section{Proofs and Remarks in Section~\ref{sec:3}} \label{app:a}

In this section, we provide proofs for all of our theorems.
We following the same notation by denoting $p_{t}^{\mathrm{SDE}}(\rvX_t)$ as the marginal density driven by some $\mathrm{SDE}$ process $\rvX(t)\equiv \rvX_t$ until the time step $t\in[0,T]$.
Gradient and Hessian of a function $f(\vx)$, where $\vx \in \mathbb{R}^n$, will respectively be denoted as
$\nabla_\vx f \in \mathbb{R}^n $ and $\nabla^2_{\vx} f \in \mathbb{R}^{n\times n}$.
Divergence and Laplace operators will respectively be denoted as
$\nabla \cdot$ and $\Delta$.
Note that $\Delta = \nabla \cdot \nabla$.
For notational brevity, we will only keep the subscript $\vx$ for multivariate functions.
Finally,
$\Tr(\mA)$ denotes the trace of a square matrix $\mA$.

We first restate the celebrated It{\^o} lemma, which is known as the extension of the chain rule of ordinary calculus to the stochastic setting. It relies on the fact that $\dwt^2$ and $\dt$ are of the same scale and
keeps the expansion up to $\calO(\dt)$.
\begin{lemma}[It{\^o} formula; \citet{ito1951stochastic}] \label{lemma:ito}
  Let $v \in C^{1,2}$ and let $\rvX_t$ be the stochastic process satisfying
  \begin{align*}
    \rd \rvX_t = f(t, \rvX_t) \dt + G(t, \rvX_t) \dwt.
  \end{align*}
  Then, the stochastic process $v(t,\rvX_t)$ is also an It{\^o} process satisfying
  \begin{align*}
    \rd v(t,\rvX_t) =
      \fracpartial{v(t,\rvX_t)}{t}\dt
      &+ \br{ \nabla_\vx v(t,\rvX_t)^\T f + \frac{1}{2}\Tr\br{G^\T \nabla_\vx^2v(t,\rvX_t) G} } \dt \\
      &+ \br{ \nabla_\vx v(t,\rvX_t)^\T G(t, \rvX_t) } \dwt.
      \numberthis \label{eq:ito}
  \end{align*}
\end{lemma}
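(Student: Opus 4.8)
The plan is to prove \eqref{eq:ito} by the classical partition-and-Taylor argument: first establish the integrated identity over an arbitrary subinterval $[t_0,t_1]\subseteq[0,T]$, and then read off the differential form. Since $v\in C^{1,2}$, a localization step comes first. Introducing the stopping times $\tau_n:=\inf\{t:|\rvX_t|\ge n\}$, on $[0,\tau_n]$ the trajectory stays in a compact set on which $v,\partial_t v,\nabla_\vx v,\nabla_\vx^2 v$ are bounded and uniformly continuous (and $f,G$ are bounded by hypothesis); it therefore suffices to prove the identity for the stopped process and then let $n\to\infty$, using continuity of paths so that $\tau_n\to\infty$ almost surely. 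Hence I would assume from here on that every coefficient and derivative appearing below is bounded along the trajectory.

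Fix a partition $t_0=s_0<s_1<\cdots<s_N=t_1$ with mesh tending to $0$, and set $\Delta s_k:=s_{k+1}-s_k$, $\Delta\mathbf{W}_k:=\mathbf{W}_{s_{k+1}}-\mathbf{W}_{s_k}$, and $\Delta_k:=\rvX_{s_{k+1}}-\rvX_{s_k}$. Telescoping and Taylor-expanding each increment to first order in $t$ and second order in $\vx$ about $(s_k,\rvX_{s_k})$ gives
\[
  v(t_1,\rvX_{t_1})-v(t_0,\rvX_{t_0})=\sum_{k=0}^{N-1}\Big(\partial_t v\,\Delta s_k+\nabla_\vx v^\T\Delta_k+\tfrac12\,\Delta_k^\T\nabla_\vx^2 v\,\Delta_k+R_k\Big),
\]
where the derivatives are evaluated at $(s_k,\rvX_{s_k})$ and $R_k$ gathers the second-order time and mixed remainders together with the $o(|\Delta_k|^2)$ error coming from the modulus of continuity of $\nabla_\vx^2 v$.

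Then I would pass to the limit term by term, first substituting $\Delta_k=\int_{s_k}^{s_{k+1}}f\,\ds+\int_{s_k}^{s_{k+1}}G\,\dws$. The sum $\sum_k\partial_t v\,\Delta s_k$ is a Riemann sum converging pathwise (by continuity of $s\mapsto(s,\rvX_s)$) to $\int_{t_0}^{t_1}\partial_t v(s,\rvX_s)\,\ds$. The term $\sum_k\nabla_\vx v^\T\Delta_k$ splits into a Riemann sum converging to $\int_{t_0}^{t_1}\nabla_\vx v^\T f\,\ds$ and an It{\^o} sum $\sum_k\nabla_\vx v(s_k,\rvX_{s_k})^\T G(s_k,\rvX_{s_k})\,\Delta\mathbf{W}_k$ converging in $L^2$ to $\int_{t_0}^{t_1}\nabla_\vx v^\T G\,\dws$, directly from the construction of the stochastic integral for bounded progressively measurable integrands. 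The crux is the quadratic term: writing $\Delta_k=f\,\Delta s_k+G\,\Delta\mathbf{W}_k$ up to a term that is $o(\Delta s_k)$ in $L^2$ and expanding $\Delta_k\Delta_k^\T$, only $G\,\Delta\mathbf{W}_k\Delta\mathbf{W}_k^\T G^\T$ survives to leading order; using that the increments $\Delta\mathbf{W}_k$ are independent of the history up to $s_k$ with $\E[\Delta\mathbf{W}_k^{(i)}\Delta\mathbf{W}_k^{(j)}]=\delta_{ij}\Delta s_k$ and $\E[(\Delta\mathbf{W}_k^{(i)})^4]=3(\Delta s_k)^2$, one obtains
\[
  \sum_k\tfrac12\,\Delta_k^\T\nabla_\vx^2 v\,\Delta_k\;\xrightarrow{\;L^2\;}\;\tfrac12\int_{t_0}^{t_1}\Tr\!\big(G^\T\nabla_\vx^2 v\,G\big)\,\ds,
\]
the discretization error being controlled by uniform continuity of the (bounded) integrand on $[t_0,t_1]$. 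Finally $\sum_k R_k\to 0$ in probability by the same increment estimates. Assembling the limits yields the integrated form of \eqref{eq:ito} on $[t_0,t_1]$; since $t_1$ is arbitrary, this is precisely the statement that $v(t,\rvX_t)$ is an It{\^o} process with the asserted $\rd v(t,\rvX_t)$, and the localization is removed by letting $n\to\infty$.

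The step I expect to be the main obstacle is the rigorous treatment of the quadratic term — exactly where the heuristic "$\dwt^2=\dt$" (that $\dwt^2$ and $\dt$ are of the same order) must be made precise, namely replacing $\sum_k\Delta_k\Delta_k^\T$ by $\sum_k GG^\T\Delta s_k$ up to an error vanishing in $L^2$. This is the point at which localization and the second- and fourth-moment estimates for Brownian increments are indispensable; by contrast, the Riemann-sum limits, the $L^2$-convergence of the It{\^o} sum, and the negligibility of the Taylor remainders $R_k$ are routine once the problem has been reduced to the bounded case.
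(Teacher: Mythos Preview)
Your proof is correct and follows the standard textbook route (localization, partition, Taylor expansion, and the $L^2$ limit of the quadratic variation term). However, there is nothing to compare against: the paper does not prove this lemma. It is stated as a classical result with a citation to \citet{ito1951stochastic}, accompanied only by the one-line heuristic that ``$\dwt^2$ and $\dt$ are of the same scale and [one] keeps the expansion up to $\calO(\dt)$.'' Your identification of the quadratic-variation step as the crux matches that heuristic exactly, so your write-up is consistent with the paper's intent but supplies far more detail than the paper itself provides.
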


Next, the following lemma will be useful in proving Theorem~\ref{thm:3}.
\begin{lemma} \label{lemma:sbp}
  The following equality holds at any point $\vx \in \mathbb{R}^n$ such that $p(\vx)\neq 0$.
  \begin{align*}
    \frac{1}{p(\vx)} \Tr\pr{\nabla^2p(\vx)} = \norm{\nabla \log p(\vx)}^2 + \Delta \log p(\vx)
  \end{align*}
\end{lemma}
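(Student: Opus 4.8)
The plan is to establish the identity by a short direct calculation. Fix a point $\vx \in \mathbb{R}^n$ with $p(\vx) \neq 0$; since $p$ is smooth enough near such a point (indeed it plays the role of $\Psi$ or $\widehat{\Psi}$, which are taken in $C^{1,2}$), $\log p$ is twice continuously differentiable on a neighbourhood of $\vx$, so all the derivatives appearing below are classical.

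First I would compute the gradient of $\log p$ componentwise: for each $i$, $\partial_i \log p(\vx) = \partial_i p(\vx)/p(\vx)$, hence $\nabla \log p = \nabla p / p$ and therefore $\norm{\nabla \log p(\vx)}^2 = \norm{\nabla p(\vx)}^2 / p(\vx)^2$. Next I would differentiate once more, again componentwise: $\partial_i^2 \log p = \partial_i\!\big(\partial_i p / p\big) = \partial_i^2 p / p - (\partial_i p)^2/p^2$. Summing over $i = 1, \dots, n$ and using $\sum_i \partial_i^2 p = \Tr(\nabla^2 p)$ gives
\[
  \Delta \log p(\vx) = \frac{1}{p(\vx)}\Tr\pr{\nabla^2 p(\vx)} - \frac{\norm{\nabla p(\vx)}^2}{p(\vx)^2} = \frac{1}{p(\vx)}\Tr\pr{\nabla^2 p(\vx)} - \norm{\nabla \log p(\vx)}^2 ,
\]
and rearranging yields the claimed equality. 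Equivalently, one can run the same computation in divergence form, $\Delta \log p = \nabla \cdot (\nabla p / p) = \Delta p / p - \norm{\nabla p}^2 / p^2$, together with $\Delta p = \Tr(\nabla^2 p)$ and $\norm{\nabla p}^2/p^2 = \norm{\nabla\log p}^2$.

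There is essentially no obstacle here: the only point of care is the hypothesis $p(\vx) \neq 0$, which is precisely what makes $\log p$ and all the quotients well-defined at $\vx$ (and, by continuity of $p$, on a neighbourhood, so that the pointwise second derivatives make sense). This pointwise identity is exactly what will later be combined with the It\^o formula (Lemma~\ref{lemma:ito}) in the proof of Theorem~\ref{thm:3}, in order to rewrite the $\Tr(g^2\nabla^2_{\vx}\cdot)$ terms of the SB PDEs~\eqref{eq:sb-pde} in the divergence-and-gradient form appearing in the FBSDEs~\eqref{eq:psi-hat-fbsde-is}.
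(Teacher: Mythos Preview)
Your proof is correct and essentially the same short calculus computation as the paper's: the paper writes $\Delta p = \nabla\cdot(p\,\nabla\log p)$ and applies the product rule, while your primary route differentiates $\log p$ componentwise via the quotient rule, but your ``equivalently, in divergence form'' remark is exactly the paper's argument, and the two versions are trivially interchangeable.
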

\begin{proof}
  \begin{align*}
       \Tr\pr{\nabla^2p(\vx)}
    &= \Delta p(\vx)
    = \nabla \cdot \nabla p(\vx) \\
    &= \nabla \cdot \pr{ p(\vx) \nabla \log p(\vx) } \\
    &= \nabla p(\vx)^\T \nabla \log p(\vx) + p(\vx) \Delta \log p(\vx) \\
    &= p(\vx) \nabla \log p(\vx)^\T \nabla \log p(\vx) + p(\vx) \Delta \log p(\vx) \\
    &= p(\vx) \pr{\norm{\nabla \log p(\vx)}^2 + \Delta \log p(\vx)}
  \end{align*}
\end{proof}

{
\paragraph*{Assumptions}
Before stating our proofs, we provide the assumptions used throughout the paper.
These assumptions are adopted from stochastic analysis for SGM \citep{song2021maximum,yong1999stochastic,anderson1982reverse}, SB \citep{caluya2021wasserstein}, and FBSDE \citep{exarchos2018stochastic,gorodetsky2015efficient}.
\begin{enumerate}[label=(\roman*)]
  \item $\prior, \pdata \in C^{2}$ with finite second-order moment. \label{assum:i}

  \item $f$ and $g$ are continuous functions, and $|g(t)|^2>0$ is uniformly lower-bounded w.r.t. $t$.

  \item $\forall t\in[0,T]$, we have
        $f(t,\vx), \nabla_\vx \log \pp{t}{}(\vx), \nabla_\vx \log \Psi(t,\vx), \nabla_\vx \log \widehat{\Psi}(t,\vx), \rvZ(t,\vx;\theta)$, and $\widehat{\rvZ}(t,\vx;\phi)$
        Lipschitz and at most linear growth w.r.t. $\vx$.

  \item $\Psi, \widehat{\Psi} \in C^{1,2}$. $h$, and $\varphi$ are continuous functions. $h$ satisfies quadratic growth w.r.t. $\vx$ uniformly in $t$.

  \item $\exists k>0: \pp{t}{SB}(\vx) = \calO(\exp^{-\norm{\vx}_k^2})$ as $\vx \rightarrow \infty$. \label{assum:v}

\end{enumerate}
Assumptions (i) (ii) (iii) are standard conditions in stochastic analysis to ensure the existence-uniqueness of the SDEs; hence also appear in SGM analysis \citep{song2021maximum}.
Assumption (iv) allows applications of It{\^o} formula and properly defines the backward SDE in FBSDE theory.
Finally, assumption (v) assures the exponential limiting behavior when performing integration by parts.
}

Now, let us begin the proofs of Theorem~\ref{thm:3}, \ref{thm:4}, and Corollary~\ref{coro:5}.
\begin{reptheorem}{thm:3}[FBSDEs to SB optimality \eqref{eq:sb-pde}]
  Consider the following set of coupled SDEs,
  \begin{subequations}
  \begin{empheq}[left={\empheqlbrace}]{align}
      \rd \rvX_t &= \pr{f + g \rvZ_t} \dt + g \dwt  \\
      \rd \rvY_t &= \frac{1}{2} \rvZ_t^\T\rvZ_t \dt + \rvZ_t^\T \dwt  \\
      \rd \widehat{\rvY}_t &= \pr{\frac{1}{2} \widehat{\rvZ}_t^\T\widehat{\rvZ}_t + \nabla_\vx \cdot (g\widehat{\rvZ}_t -f) + \widehat{\rvZ}_t^\T\rvZ_t } \dt + \widehat{\rvZ}_t^\T \dwt
  \end{empheq} \label{eq:psi-hat-fbsde-is2}%
  \end{subequations}
  where $f$ and $g$ satisfy the same regularity conditions in Lemma~\ref{lemma:non-fc} (see Footnote~\ref{ft:cond}), and
  the boundary conditions are given by $\rvX(0)=\vx_0$ and
  $\rvY_T + \widehat{\rvY}_T = \log \prior(\rvX_T)$.
  Suppose $\Psi, \widehat{\Psi} \in C^{1,2}$,
  then nonlinear Feynman-Kac relations between the FBSDEs \eqref{eq:psi-hat-fbsde-is} and PDEs \eqref{eq:sb-pde} are given by
  \begin{equation}
  \begin{alignedat}{2}
    \rvY_t &\equiv  \rvY(t,\rvX_t) = \log \Psi(t,\rvX_t), \qquad
    \rvZ_t &&\equiv \rvZ(t,\rvX_t) = g \nabla_\vx \log \Psi(t,\rvX_t), \\
    \widehat{\rvY}_t &\equiv  \widehat{\rvY}(t,\rvX_t) =          \log \widehat{\Psi}(t,\rvX_t), \qquad
    \widehat{\rvZ}_t &&\equiv \widehat{\rvZ}(t,\rvX_t) = g \nabla_\vx \log \widehat{\Psi}(t,\rvX_t).
    \label{eq:app-non-fc}
  \end{alignedat}
  \end{equation}
  Furthermore, $(\rvY_t, \widehat{\rvY}_t)$
  obey the following relation:
  \begin{align*}
    {\rvY_t + \widehat{\rvY}_t} = \log \pp{t}{SB}(\rvX_t).
  \end{align*}
\end{reptheorem}
\begin{proof}
  Similar to how the original nonlinear Feynman-Kac (\ie Lemma~\ref{lemma:non-fc})
  can be carried out by an application of It{\^o} lemma \citep{ma1999forward}.
  We can apply It{\^o} lemma \ref{lemma:ito} to the stochastic process $\log \Psi(t,\rvX_t)$ w.r.t.
  the optimal forward SDE \eqref{eq:fsb}.
        \begin{equation}
        \begin{split}
          \rd \log \Psi
            = \fracpartial{\log \Psi}{t}\dt
            &+ \br{\markgreen{\nabla_\vx \log \Psi^\T} (\markgreen{f} + g^2 \nabla_\vx \log \Psi) + \frac{1}{2}g^2\Tr\br{\nabla_\vx^2\log \Psi}} \dt
            + \br{g\nabla_\vx \log \Psi^\T } \dwt.
            \label{eq:ito-psi}
        \end{split}
        \end{equation}
  From the PDE dynamics \eqref{eq:sb-pde}, we know that
        \begin{align*}
          \fracpartial{\log \Psi}{t}
            &= \frac{1}{\Psi} \pr{ - \nabla_\vx \Psi^\T f - \frac{1}{2} \Tr(g^2\nabla^2_{\vx}\Psi)} \\
            &= - \markgreen{\nabla_\vx \log \Psi^\T f } - \markaa{\frac{1}{2} g^2 \Tr(\frac{1}{\Psi} \nabla^2_{\vx}\Psi)}.
        \end{align*}
  The \markgreen{first term} in the RHS can be readily canceled out with the related $f$-term in \eqref{eq:ito-psi}. The \markaa{second term} can also be canceled out using the fact that
  $\nabla_\vx^2\log \Psi = \markaa{\frac{1}{\Psi}\nabla_\vx^2\Psi} - \frac{1}{\Psi^2}\nabla_\vx\Psi\nabla_\vx\Psi^\T$.
  Hence, we are left with
        \begin{align*}
          \rd \log \Psi
          &= \br{\norm{g\nabla_\vx \log \Psi}^2 - \frac{1}{2}g^2\Tr\br{\frac{1}{\Psi^2}\nabla_\vx\Psi\nabla_\vx\Psi^\T}} \dt + g \nabla_\vx \log \Psi^\T \dWt\\
          &= \frac{1}{2} \norm{g \nabla_\vx \log \Psi}^2 \dt + g \nabla_\vx \log \Psi^\T \dWt. \numberthis \label{eq:thm3-Y}
        \end{align*}
  Likewise, applying It{\^o} lemma to $\log \widehat{\Psi}(t,\rvX_t)$, where $\rvX_t$ follows the SDE in \eqref{eq:fsb},
        \begin{equation}
        \begin{split}
          \rd \log \widehat{\Psi}
            = \fracpartial{\log \widehat{\Psi}}{t}\dt
            + \br{\markbb{\nabla_\vx \log \widehat{\Psi}^\T} (\markbb{f} + g^2 \nabla_\vx \log {\Psi}) + \frac{1}{2}g^2\Tr\br{\nabla_\vx^2\log \widehat{\Psi}}} \dt
            + \br{g\nabla_\vx \log \widehat{\Psi}^\T } \dwt,
            \label{eq:ito-psi2}
        \end{split}
        \end{equation}
  but now noticing that the dynamics of $\fracpartial{\log \widehat{\Psi}}{t}$ become
        \begin{align*}
          \fracpartial{\log \widehat{\Psi}}{t}
            &= \frac{1}{\widehat{\Psi}} \pr{ - \nabla_\vx \cdot (\widehat{\Psi}f) + \frac{1}{2} \Tr(g^2\nabla^2_{\vx}\widehat{\Psi})} \\
            &= - \markbb{\nabla_\vx \log \widehat{\Psi}^\T f} - \nabla_\vx \cdot f + \frac{1}{2} g^2 \Tr(\frac{1}{\widehat{\Psi}} \nabla^2_{\vx}\widehat{\Psi}).
        \end{align*}
  Only the \markbb{first term} in the RHS will be canceled out in \eqref{eq:ito-psi2}. Hence, we are left with
        \begin{equation}
        \begin{split}
          \rd \log \widehat{\Psi}
          &= \br { - \nabla_\vx \cdot f + \frac{1}{2} g^2 \Tr\br{\frac{1}{\widehat{\Psi}} \nabla^2_{\vx}\widehat{\Psi}} } \dt \\
          &\quad+ \br{g^2 \nabla_\vx \log \widehat{\Psi}^\T \nabla_\vx \log{\Psi} + \frac{1}{2}g^2\Tr\br{\nabla_\vx^2\log \widehat{\Psi}}} \dt + g \nabla_\vx \log \widehat{\Psi}^\T \dWt.
            \label{eq:ito-psi22}
        \end{split}
        \end{equation}
  Notice that the trace terms above can be simplified to
        \begin{align*}
          \frac{1}{2}\Tr\br{\frac{1}{\widehat{\Psi}} \nabla^2_{\vx}\widehat{\Psi} + \nabla_\vx^2\log \widehat{\Psi}}
          &= \Tr\br{\frac{1}{\widehat{\Psi}} \nabla^2_{\vx}\widehat{\Psi}} - \frac{1}{2}\norm{\nabla_\vx \log \widehat{\Psi}}^2 \\
          &= \frac{1}{2}\norm{\nabla_\vx \log \widehat{\Psi}}^2 + \Delta_\vx \log \widehat{\Psi},
        \end{align*}
    where the last equality follows by Lemma~\ref{lemma:sbp}. Substituting this result back to \eqref{eq:ito-psi22}, we get
        \begin{equation}
        \begin{split}
          \rd \log \widehat{\Psi}
          &= \br{
                - \nabla_\vx \cdot f
                + \frac{1}{2}\norm{g\nabla_\vx \log \widehat{\Psi}}^2 + g^2\Delta_\vx \log \widehat{\Psi}
                + g^2 \nabla_\vx \log \widehat{\Psi}^\T \nabla_\vx \log{\Psi}
              }\dt + g \nabla_\vx \log \widehat{\Psi}^\T \dWt \\
          &= \br{
                \nabla_\vx  \cdot \pr{ g^2\nabla_\vx \log \widehat{\Psi} {-} f }
                + \frac{1}{2}\norm{g\nabla_\vx \log \widehat{\Psi}}^2
                + g^2 \nabla_\vx \log \widehat{\Psi}^\T \nabla_\vx \log{\Psi}
              }\dt + g \nabla_\vx \log \widehat{\Psi}^\T \dWt
            \label{eq:ito-psi222}
        \end{split}
        \end{equation}
  Finally, by rewriting \eqref{eq:thm3-Y} and \eqref{eq:ito-psi222} with the nonlinear Feynman-Kac in \eqref{eq:app-non-fc} yields
        \begin{subequations}
        \begin{empheq}[box=\widefbox]{align*}
          \rd \rvX_t &= \pr{f + g \rvZ_t} \dt + g \dwt  \\
          \rd \rvY_t &= \frac{1}{2} \rvZ_t^\T\rvZ_t \dt + \rvZ_t^\T \dwt  \\
          \rd \widehat{\rvY}_t &= \pr{\frac{1}{2} \widehat{\rvZ}_t^\T\widehat{\rvZ}_t + \nabla_\vx \cdot (g\widehat{\rvZ}_t -f) + \widehat{\rvZ}_t^\T\rvZ_t } \dt + \widehat{\rvZ}_t^\T \dwt
        \end{empheq} %
        \end{subequations}
  This concludes the proof.
\end{proof}

\begin{remark}[Viscosity solutions]\normalfont
  These FBSDE results can be extended to viscosity solutions in the case when the classical solution does not exist \citep{pardoux1992backward}. For the completeness, one shall understand them in the sense of $v(t,\vx) = \lim_{\epsilon\rightarrow\infty} v_{\epsilon}(t,\vx)$ uniformly in $(t,\vx)$ over a compact set. Here $v^{\epsilon}(t,\vx)$ is the classical solution to \eqref{eq:hjb} with $(f_{\epsilon},G_{\epsilon},h_{\epsilon},\varphi_{\epsilon})$ converge uniformly toward $(f,G,h,\varphi)$ over the compact set. We refer readers of interests to \citet{exarchos2018stochastic,negyesi2021one}, and their references therein.
\end{remark}

\begin{reptheorem}{thm:4}[Log-likelihood for SB models]
    Given the solution satisfying the FBSDE system in \eqref{eq:psi-hat-fbsde-is},
    the log-likelihood of the SB model $(\rvZ_t, \widehat{\rvZ}_t)$, at a data point $\vx_0$, can be expressed as
  \begin{align}
      \LSB(\vx_0) &= \E\br{\log p_T(\rvX_T)} {-} \int_0^T \E\left[ \frac{1}{2} \norm{\rvZ_t}^2 {+} \frac{1}{2} \norm{\widehat{\rvZ}_t - g\nabla_\vx \log \pp{t}{SB} + \rvZ_t}^2 \right. \nonumber \\
      & \qquad \qquad \qquad \qquad \qquad \quad \qquad \qquad \left.  {-} \frac{1}{2} \norm{g\nabla_\vx \log \pp{t}{SB} - \rvZ_t}^2 {-} \nabla_\vx \cdot f \right] \dt \label{eq:app-sb-nll-bad} \\
      &= \E\br{\log p_T(\rvX_T)} - \int_0^T \E\br{ \frac{1}{2} \norm{\rvZ_t}^2 {+} \frac{1}{2} \norm{\widehat{\rvZ}_t}^2  + \nabla_\vx \cdot (g\widehat{\rvZ}_t -f) + \widehat{\rvZ}_t^\T{\rvZ}_t } \dt, \label{eq:app-sb-nll}
  \end{align}
  where the expectation is taken over the forward SDE \eqref{eq:fsde-is} with the initial condition $\rvX_0 = \vx_0$.
\end{reptheorem}
\begin{proof}
    \begin{align*}
        &\LSB(\vx_0) \\
        =& \E \br{\rvY_0 + \widehat{\rvY}_0 | \rvX_0=\vx_0} \\
        =& \E\br{{\rvY}_T - \int_0^T \pr{\frac{1}{2} \norm{\rvZ_t}^2}\dt +
         \widehat{\rvY}_T - \int_0^T \pr{\frac{1}{2} \norm{\widehat{\rvZ}_t}^2 + \nabla \cdot (g\widehat{\rvZ}_t -f) + \widehat{\rvZ}_t^\T\rvZ_t } \dt \Big| \rvX_0=\vx_0} \\
        =& \E \br{\rvY_T + \widehat{\rvY}_T | \rvX_0=\vx_0}
         - \int_0^T \E\br{ \frac{1}{2} \norm{\rvZ_t}^2 + \frac{1}{2} \norm{\widehat{\rvZ}_t}^2 + \nabla \cdot (g\widehat{\rvZ}_t -f) + \widehat{\rvZ}_t^\T\rvZ_t \Big| \rvX_0=\vx_0} \dt  \\
        =& \E [\log p_T(\rvX_T)]
         - \int_0^T {\E\br{ \frac{1}{2} \norm{\rvZ_t}^2 + \frac{1}{2} \norm{\widehat{\rvZ}_t}^2 + \nabla \cdot (g\widehat{\rvZ}_t -f) + \widehat{\rvZ}_t^\T\rvZ_t}} \dt, \numberthis \label{eq:coro-proof1}
    \end{align*}
    which recovers \eqref{eq:app-sb-nll}. Finally, notice that with integration by part, we have
    \begin{equation}
    \begin{split}
        \E_{\rvX_t\sim\pp{t}{SB}}\br{g \nabla \cdot \widehat{\rvZ}_t}
          &= \int \pr{  g \nabla \cdot \widehat{\rvZ}_t} \pp{t}{SB}~\rd\rvX_t \\
          &=-\int g \widehat{\rvZ}_t^\T \nabla_\vx \pp{t}{SB}~\rd\rvX_t \\
          &=-\int \pr{  g \widehat{\rvZ}_t^\T\nabla_\vx \log\pp{t}{SB}} \pp{t}{SB}~\rd\rvX_t \\
          &=\E_{\rvX_t\sim\pp{t}{SB}}\br{- g \widehat{\rvZ}_t^\T\nabla_\vx \log\pp{t}{SB}},
        \label{eq:app-coro-proof2}
    \end{split}
    \end{equation}
    where we adopt common practice and assume the limiting behavior of $\pp{t}{SB}$; in other words,
    $\exists k>0: \pp{t}{SB}(\vx) = \calO(\exp^{-\norm{\vx}_k^2})$ as $\vx \rightarrow \infty$.
    With \eqref{eq:app-coro-proof2}, we can rewrite the related parts in \eqref{eq:coro-proof1} as
    \begin{align*}
        &\E\br{ \frac{1}{2} \norm{\widehat{\rvZ}_t}^2 + g \nabla \cdot \widehat{\rvZ}_t + \widehat{\rvZ}_t^\T\rvZ_t} \\
        =& \E\br{ \frac{1}{2} \norm{\widehat{\rvZ}_t}^2 - \widehat{\rvZ}_t^\T\pr{g\nabla \log \pp{t}{SB}} + \widehat{\rvZ}_t^\T\rvZ_t} \\
        =& \E\br{ \frac{1}{2} \norm{\widehat{\rvZ}_t - g\nabla \log \pp{t}{SB} + \rvZ_t}^2 - \frac{1}{2} \norm{g\nabla \log \pp{t}{SB} - \rvZ_t}^2 }. \numberthis
        \label{eq:app-coro-proof3}
    \end{align*}
    Hence, we also recover \eqref{eq:app-sb-nll-bad}.
\end{proof}

\textbf{Corollary~\ref{coro:5}} (Probability flow for SB)\textbf{.}
  \textit{
  The following ODE
  characterizes the probability flow of the optimal processes of SB \eqref{eq:sb-sde}
  in the sense that $\forall t,~\pp{t}{\eqref{eq-sb-prob}} \equiv \pp{t}{\eqref{eq:sb-sde}} \equiv \pp{t}{SB}$.}
  \begin{align}
    {\rd \rvX_t = \br{f + g\rvZ(t, \rvX_t) -\frac{1}{2}g~(\rvZ(t, \rvX_t) + \widehat{\rvZ}(t, \rvX_t))} \dt}
    \label{eq:app-sb-prob2}
  \end{align}
\begin{proof}
  The probability ODE flow \citep{song2020score,maoutsa2020interacting} suggests that
  the equivalent ODE model for the SDE \eqref{eq:fsde} is given by
  \begin{align*}
    \rd \rvX_t = \br{f - \frac{1}{2} g^2 \nabla_\vx \log \pp{t}{\eqref{eq:fsde}} } \dt.
  \end{align*}
  We can adopt this result to the SDEs of SB \eqref{eq:fsb} by considering
  $f \leftarrow f + g \rvZ_t$ and $\pp{t}{\eqref{eq:fsde}} \leftarrow \pp{t}{SB}$.
  This yields
  \begin{align}
    \rd \rvX_t = \br{f + g \rvZ_t - \frac{1}{2} g^2\nabla_\vx \log\pp{t}{SB}} \dt.
    \label{eq:app-ode}
  \end{align}
  Applying the the factorization principle \citep{chen2021stochastic} with
  $g\log\pp{t}{SB} = \rvZ_t + \widehat{\rvZ}_t$ concludes the proof.
\end{proof}

\begin{remark}[Connection between SB-FBSDE and flow-based models]\label{remark:app-5}\normalfont
  To demonstrate how applying flow-based training techniques to the probability ODE flow of SB \eqref{eq:app-sb-prob2} recovers the same log-likelihood objective in \eqref{eq:app-sb-nll},
  recall that given an ODE $\rd \rvX_t = F(t,\rvX_t) \dt$ with $\rvX_0 = \vx_0 \sim \pdata$,
  flow-based models compute the change in log-density using the instantaneous change of variables formula \citep{chen2018neural}:
  \begin{align*}
    \fracpartial{\log p(\rvX_t)}{t} = - \nabla_\vx \cdot F,
  \end{align*}
  which implies that the log-likelihood of $\vx_0$ can be computed as
  \begin{align}
    \log p(\rvX_T) = \log p(\vx_0) - \int_0^T {\nabla_\vx \cdot F}~\dt.
    \label{eq:cnf}
  \end{align}
  Now, consider the probability ODE flow of SB in \eqref{eq:app-ode},
  \begin{align*}
    F_{\text{SB}} := f + g\rvZ_t -\frac{1}{2}g(\rvZ_t + \widehat{\rvZ}_t) = f + \frac{1}{2}g(\rvZ_t - \widehat{\rvZ}_t).
  \end{align*}
  Substituting this vector field $F_{\text{SB}}$ to \eqref{eq:cnf} yields
  \begin{align*}
    \log p_T(\rvX_T) &= \log p_0(\vx_0) - \int_0^T {\nabla_\vx \cdot \pr{f + \frac{1}{2}g(\rvZ_t - \widehat{\rvZ}_t)} }\dt \\
  \Rightarrow
    \E\br{\log p_0(\vx_0)} &=~\E\br{\log p_T(\rvX_T)} + \int_0^T \E\br{\nabla_\vx \cdot \pr{f + \frac{1}{2}g(\rvZ_t - \widehat{\rvZ}_t)}}\dt \\
  &=~\E\br{\log p_T(\rvX_T)} - \int_0^T
    \E\br{\nabla_\vx \cdot (g\widehat{\rvZ}_t -f) - \frac{1}{2} g \nabla_\vx \cdot (\rvZ_t + \widehat{\rvZ}_t) } \dt \\
  \overset{(*)}&{=}~\E\br{\log p_T(\rvX_T)} - \int_0^T
    \E\br{\nabla_\vx \cdot (g\widehat{\rvZ}_t -f) + \frac{1}{2}g(\rvZ_t + \widehat{\rvZ}_t)^\T \pr{\nabla_\vx \log \pp{t}{SB} }} \dt \\
  \overset{(**)}&{=}~\E\br{\log p_T(\rvX_T)} - \int_0^T
    \E\br{\nabla_\vx \cdot (g\widehat{\rvZ}_t -f) + \frac{1}{2}(\rvZ_t + \widehat{\rvZ}_t)^2 } \dt,
    \numberthis \label{eq:sb-nll-ode}
  \end{align*}
  where (*) is due to integration by parts (recall \eqref{eq:app-coro-proof2}) and (**) again uses the factorization principle $\rvZ_t + \widehat{\rvZ}_t = g\nabla_\vx \log \pp{t}{SB}$.
  One can verify that \eqref{eq:sb-nll-ode} indeed recovers \eqref{eq:app-sb-nll}.
\end{remark}

\begin{theorem}[FBSDE computation for $\LSB(\vx_T)$ in SB models] \label{thm:9}
  With the same regularity conditions in Theorem~\ref{thm:3},
  the following FBSDEs also satisfy the nonlinear Feynman-Kac relations in \eqref{eq:app-non-fc}.
  \begin{subequations}
  \begin{empheq}[left={\empheqlbrace}]{align}
      \rd \rvX_t &= \pr{f - g \widehat{\rvZ}_t} \dt + g \dwt \label{eq:psi-hat-fbsde-is3a} \\
      \rd \rvY_t &= -\pr{\frac{1}{2} {\rvZ}_t^\T{\rvZ}_t + \nabla_\vx \cdot (g{\rvZ}_t +f) + {\rvZ}_t^\T\widehat{\rvZ}_t } \dt + {\rvZ}_t^\T \dwt \\
      \rd \widehat{\rvY}_t &= -\frac{1}{2} \widehat{\rvZ}_t^\T\widehat{\rvZ}_t \dt + \widehat{\rvZ}_t^\T \dwt
  \end{empheq} \label{eq:psi-hat-fbsde-is3}%
  \end{subequations}
  Given a backward trajectory sampled from \eqref{eq:psi-hat-fbsde-is3a},
  where $\rvX_T=\vx_T$ and $\vx_T \sim \prior$,
  the log-likelihood of $\vx_T$ is given by
  $\log \prior(\vx_T) = \E\br{\rvY_T + \widehat{\rvY}_T | \rvX_T=\vx_T} := \LSB(\vx_T)$. In particular,
  \begin{align}
      \LSB(\vx_T)
      &= \E\br{\log p_T(\rvX_0)} - \int_0^T \E\br{ \frac{1}{2} \norm{\widehat{\rvZ}_t}^2 {+} \frac{1}{2} \norm{{\rvZ}_t}^2  + \nabla_\vx \cdot (g{\rvZ}_t +f) + {\rvZ}_t^\T\widehat{\rvZ}_t } \dt, \label{eq:app-sb-nll2}
  \end{align}
\end{theorem}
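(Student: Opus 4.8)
The statement is the time-reversed twin of Theorems~\ref{thm:3} and~\ref{thm:4}: instead of tracking $\log\Psi$ and $\log\widehat\Psi$ along the \emph{forward}-optimal process \eqref{eq:fsb}, the plan is to track them along the \emph{backward}-optimal process \eqref{eq:bsb} and to anchor the boundary data at $t=0$ rather than at $t=T$. So I would first re-derive the FBSDE representation \eqref{eq:psi-hat-fbsde-is3} by mimicking the It{\^o}-calculus proof of Theorem~\ref{thm:3}, and then obtain the likelihood \eqref{eq:app-sb-nll2} by the same integration-plus-conditional-expectation argument used for Theorem~\ref{thm:4}.

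For the first part, observe that \eqref{eq:psi-hat-fbsde-is3a} is exactly the optimal backward SDE \eqref{eq:bsb} after the substitution $\widehat\rvZ_t = g\nabla_\vx\log\widehat\Psi$. I would therefore apply It{\^o}'s formula to the processes $\log\Psi(t,\rvX_t)$ and $\log\widehat\Psi(t,\rvX_t)$ with $\rvX_t$ driven by \eqref{eq:bsb}, substitute the PDE dynamics \eqref{eq:sb-pde} for $\partial_t\log\Psi$ and $\partial_t\log\widehat\Psi$, cancel the common $f$-drift terms exactly as in Theorem~\ref{thm:3}, and collapse the leftover Hessian/trace terms using $\nabla_\vx^2\log\psi = \tfrac{1}{\psi}\nabla_\vx^2\psi - \nabla_\vx\log\psi\,\nabla_\vx\log\psi^\T$ together with Lemma~\ref{lemma:sbp}. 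Translating the result through the nonlinear Feynman--Kac dictionary \eqref{eq:app-non-fc} should reproduce the coefficients of \eqref{eq:psi-hat-fbsde-is3}: the $\widehat\rvY_t=\log\widehat\Psi$ line picks up the clean drift $-\tfrac{1}{2}\|\widehat\rvZ_t\|^2$, while the $\rvY_t=\log\Psi$ line absorbs the divergence term $\nabla_\vx\cdot(g\rvZ_t+f)$ and the cross term $\rvZ_t^\T\widehat\rvZ_t$.

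For the second part, note that because $\rvY_t = \log\Psi(t,\rvX_t)$ and $\widehat\rvY_t = \log\widehat\Psi(t,\rvX_t)$, the SB factorization $\pp{t}{SB}=\Psi\,\widehat\Psi$ gives $\rvY_t+\widehat\rvY_t = \log\pp{t}{SB}(\rvX_t)$ for every $t$ and along any driving process; in particular $\rvY_0+\widehat\rvY_0 = \log\pp{0}{SB}(\rvX_0) = \log\pdata(\rvX_0)$ and $\rvY_T+\widehat\rvY_T = \log\prior(\rvX_T)$, the latter being the prescribed terminal datum. I would then integrate $\rd(\rvY_t+\widehat\rvY_t)$ over $[0,T]$ using \eqref{eq:psi-hat-fbsde-is3}, take the conditional expectation $\E[\,\cdot\mid\rvX_T=\vx_T]$ so that the It{\^o} stochastic integrals drop out, and solve for $\E[\rvY_T+\widehat\rvY_T\mid\rvX_T=\vx_T]=\log\prior(\vx_T)$, which produces \eqref{eq:app-sb-nll2} with boundary term $\E[\log\pp{0}{SB}(\rvX_0)\mid\rvX_T=\vx_T]=\E[\log\pdata(\rvX_0)\mid\rvX_T=\vx_T]$. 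As in Theorem~\ref{thm:4}, the divergence term can alternatively be traded for $-g\,\rvZ_t^\T\nabla_\vx\log\pp{t}{SB}$ via the integration-by-parts identity \eqref{eq:app-coro-proof2}.

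The delicate step is the time-direction bookkeeping in the It{\^o} computation: since \eqref{eq:bsb} is genuinely a backward SDE (its data is anchored at $t=T$), naively invoking the forward It{\^o} formula of Lemma~\ref{lemma:ito} mislocates the divergence corrections, and one must instead use the backward It{\^o} formula---in which the second-order term enters with the opposite sign---or equivalently perform an Anderson time reversal before differentiating. Pinning down these signs so that, after Lemma~\ref{lemma:sbp}, exactly $\nabla_\vx\cdot(g\rvZ_t+f)$ survives on the $\rvY$-line and nothing spurious survives on the $\widehat\rvY$-line is where care is needed; everything downstream is the same bookkeeping as in Theorem~\ref{thm:4}. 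A shortcut that sidesteps the backward calculus entirely is to exploit the intrinsic $\pdata\!\leftrightarrow\!\prior$ symmetry of the SB problem (which swaps $\Psi\leftrightarrow\widehat\Psi$ and reverses the arrow of time) and simply apply Theorems~\ref{thm:3}--\ref{thm:4} to the reversed bridge.
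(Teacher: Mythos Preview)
Your proposal is correct. The paper's own proof is precisely the ``shortcut'' you mention in your final paragraph: it introduces the reversed time coordinate $s:=T-t$, writes the SB system in $s$-coordinates with new potentials $(\Phi,\widehat\Phi)$ satisfying $\Phi(s,\cdot)=\widehat\Psi(T-t,\cdot)$ and $\widehat\Phi(s,\cdot)=\Psi(T-t,\cdot)$, applies the forward It{\^o} calculation of Theorem~\ref{thm:3} verbatim to the $s$-system, and then transports the resulting FBSDEs back to $t$-coordinates via the identification $(\rvY',\widehat\rvY',\rvZ',\widehat\rvZ')_s=(\widehat\rvY,\rvY,\widehat\rvZ,\rvZ)_{T-t}$. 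The likelihood formula \eqref{eq:app-sb-nll2} is then read off exactly as you describe.

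Your primary line of attack---applying It{\^o} directly along \eqref{eq:bsb} and managing the backward-SDE sign bookkeeping by hand---is a genuinely different route. It is more self-contained (no auxiliary coordinate change or appeal to SB symmetry), but, as you rightly flag, it forces you to confront the backward It{\^o} formula or an Anderson reversal explicitly, which is exactly the step the paper's symmetry argument is designed to avoid. The paper's approach buys cleanliness: once the $s$-system is set up, Theorem~\ref{thm:3} applies without modification and all signs come out automatically. Your direct approach buys transparency about where each term in \eqref{eq:psi-hat-fbsde-is3} actually originates. Either is fine; since you already identified the symmetry shortcut, you have the paper's proof in hand.
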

\begin{proof}
  Due to the symmetric structure of SB, we can consider a new time coordinate
  \begin{align*}
    s \triangleq T - t.
  \end{align*}
  Under this transformation, the base reference $\PP$ appearing in \eqref{eq:sb} is equivalent to
  \begin{align*}
    \rd \rvX_s &= - f(s, \rvX_s) \ds + g~\dws.
  \end{align*}
  The corresponding PDE optimality becomes
  \begin{align}
      \begin{cases}
      \fracpartial{\Phi}{s} = \nabla_\vx \Phi^\T f {-} \frac{1}{2} \Tr(g^2\nabla^2_{\vx}\Phi) \\[3pt]
      \fracpartial{\widehat{\Phi}}{s} = \nabla_\vx \cdot (\widehat{\Phi} f) {+} \frac{1}{2} \Tr(g^2\nabla^2_{\vx}\widehat{\Phi})
      \end{cases}
      \text{s.t. } \Phi(0,\cdot) \widehat{\Phi}(0,\cdot) = \prior,~\Phi(T,\cdot) \widehat{\Phi}(T,\cdot) = \pdata,
  \end{align}
  and the optimal forward/backward policies are given by
  \begin{subequations}
  \begin{align}
      \rd \rvX_s &= [-f + g^2~\nabla_\vx \log {\Phi}(s,\rvX_s)] \ds + g~\dws,
      \quad \rvX_0 \sim \prior,
      \label{eq:app-fsb}
      \\
      \rd \rvX_s &= [- f - g^2~\nabla_\vx \log \widehat{\Phi}(s,\rvX_s)] \ds + g~\dws,
      \quad \rvX_T \sim \pdata.
      \label{eq:app-bsb}
  \end{align} \label{eq:app-sb-sde}%
  \end{subequations}
  By comparing \eqref{eq:app-sb-sde} with \eqref{eq:sb-sde},
  one can notice that the new SB system $(\Phi, \widehat{\Phi})_s$ corresponds to the original system
  $(\Psi, \widehat{\Psi})_t$ via
  \begin{align}
    \Phi(s, \rvX_s) = \widehat{\Psi}(T-t, \rvX_{T-t}) \qquad \text{and} \qquad
    \widehat{\Phi}(s, \rvX_s) = {\Psi}(T-t, \rvX_{T-t}).
    \label{eq:phi-to-psi}
  \end{align}
  Equation \eqref{eq:phi-to-psi} shall be understood as the forward policy in $t$-coordinate system corresponds to
  the backward policy in $s$-coordinate system, and vise versa.
  Following similar derivations in the proof of Theorem~\ref{thm:3},
  we can apply It{\^o} lemma to expend the stochastic processes $\rd \log \Phi$ and $\rd \log \widehat{\Phi}$ w.r.t. \eqref{eq:app-fsb}. This yields the following FBSDE system.
  \begin{subequations}
  \begin{align}
      \rd \rvX_s &= \pr{g \rvZ^\prime_s - f} \ds + g \dws  \\
      \rd {\rvY}^\prime_s &= \frac{1}{2} \norm{\rvZ^\prime_s}^2 \ds + \rvZ_s^{\prime~\T} \dws  \\
      \rd \widehat{\rvY}^\prime_s &= \pr{\frac{1}{2} \norm{\widehat{\rvZ}^\prime_s}^2 + \nabla_\vx \cdot (g\widehat{\rvZ}^\prime_s +f) + \widehat{\rvZ}_s^{\prime~\T}\rvZ^\prime_s } \ds + \widehat{\rvZ}_s^{\prime~\T} \dws
  \end{align}
  \end{subequations}
  Similar to \eqref{eq:phi-to-psi}, $({\rvY}^\prime_s, \widehat{\rvY}^\prime_s, \rvZ^\prime_s, \widehat{\rvZ}^\prime_s)$ relate to the original FBSDE system \eqref{eq:psi-hat-fbsde-is2} by
  \begin{align}
    ({\rvY}^\prime_s, \widehat{\rvY}^\prime_s, \rvZ^\prime_s, \widehat{\rvZ}^\prime_s)
    = (\widehat{\rvY}^\prime_{T-t}, {\rvY}^\prime_{T-t}, \widehat{\rvZ}^\prime_{T-t}, \rvZ^\prime_{T-t}).
    \label{eq:phi-to-psi2}
  \end{align}
  Changing the coordinate from $s$ to $t$ and applying \eqref{eq:phi-to-psi2} readily
  yield \eqref{eq:psi-hat-fbsde-is3}.
  Finally, the expression in \eqref{eq:app-sb-nll2} can be carried out similar to \eqref{eq:coro-proof1}:
    \begin{align*}
        &\LSB(\vx_T) \\
        =& \E \br{\rvY_T + \widehat{\rvY}_T | \rvX_T=\vx_T} \\
        =& \E\br{{\rvY}_0 - \int_0^T \pr{\frac{1}{2} \norm{\widehat{\rvZ}_t}^2}\dt +
         \widehat{\rvY}_0 - \int_0^T \pr{\frac{1}{2} \norm{{\rvZ}_t}^2 + \nabla \cdot (g{\rvZ}_t +f) + {\rvZ}_t^\T\widehat{\rvZ}_t } \dt \Big| \rvX_T=\vx_T} \\
        =& \E \br{\rvY_0 + \widehat{\rvY}_0 | \rvX_T=\vx_T}
         - \int_0^T \E\br{ \frac{1}{2} \norm{\widehat{\rvZ}_t}^2 + \frac{1}{2} \norm{{\rvZ}_t}^2 + \nabla \cdot (g{\rvZ}_t +f) + {\rvZ}_t^\T\widehat{\rvZ}_t \Big| \rvX_T=\vx_T} \dt  \\
        =& \E [\log p_0(\rvX_0)]
         - \int_0^T {\E\br{ \frac{1}{2} \norm{\widehat{\rvZ}_t}^2 + \frac{1}{2} \norm{{\rvZ}_t}^2 + \nabla \cdot (g{\rvZ}_t +f) + {\rvZ}_t^\T\widehat{\rvZ}_t}} \dt. \numberthis \label{eq:coro-proof2}
    \end{align*}
  We conclude the proof.
\end{proof}

\section{Comparison with prior SB works}\label{app:d2}

Our method is closely related to two concurrent SB models \citep{de2021diffusion,vargas2021solving},
yet differs in various aspects. Below we enumerate some of the differences.

\textbf{Training loss.}
  Both concurrent methods rely on solving SB \textit{mean-matching} regression between the current drift and (estimated) optimal drift.
  This is in contrast to our {SB-FBSDE}, which instead utilizes a \textit{divergence-based} objective \eqref{eq:sb-nll}.
  However, the regression objectives are in fact captured by Theorem~\ref{thm:4}.
  To see that, recall the forward and backward transition models considered in \citet{de2021diffusion},
    \begin{align*}
        \rvX_{k+1}\sim\calN(F_k(\rvX_k), 2\gamma_{k+1}\mI), \quad \text{and} \quad
        \rvX_{k}\sim\calN(B_{k+1}(\rvX_{k+1}), 2\gamma_{k+1}\mI),
    \end{align*}
  where $F_k(\vx) := \vx + \gamma_{k+1}f_k(\vx)$ and
    $B_{k+1}(\vx) := \vx + \gamma_{k+1}b_{k+1}(\vx)$
  are solved alternately via
  \begin{subequations}
  \begin{align}
       B_{k+1} \leftarrow &\argmin_{B_{k+1}}~\E\br{ \norm{
            B_{k+1}(\rvX_{k+1}) - (\rvX_{k+1} + F_k(\rvX_k) - F_{k}(\rvX_{k+1}) )
       }^2 } \label{eq:dsm-loss-b} \\
       F_{k}   \leftarrow &\argmin_{F_{k}}~\E\br{ \norm{
            F_{k}(\rvX_{k+1}) - (\rvX_{k} + B_{k+1}(\rvX_k) - B_{k+1}(\rvX_{k}) )
       }^2 }. \label{eq:dsm-loss-f}
  \end{align} \label{eq:dsm-loss}%
  \end{subequations}
  In what follows, we focus mainly on the connection between \eqref{eq:dsm-loss-b} and Theorem~\ref{thm:4}, yet similar analysis can be applied to \eqref{eq:dsm-loss-f}.
  Now, expanding \eqref{eq:dsm-loss-b} with the definition of $(B_{k+1},F_{k})$
    \begin{align*}
         & \norm{B_{k+1}(\rvX_{k+1}) - (\rvX_{k+1} + F_k(\rvX_k) - F_{k}(\rvX_{k+1})}^2 \\
        =& \norm{\pr{\rvX_{k+1} + \gamma_{k+1}b_{k+1}(\rvX_{k+1})}
                 - \pr{\rvX_{k+1} + \rvX_k + \gamma_{k+1}f_k(\rvX_k) - \rvX_{k+1} - \gamma_{k+1}f_k(\rvX_{k+1})}}^2\\
        =& \norm{
            \underbrace{\gamma_{k+1}f_k(\rvX_{k+1})}_{\numcircledmod{1}}
          + \underbrace{\gamma_{k+1}b_{k+1}(\rvX_{k+1})}_{\numcircledmod{2}}
          - \underbrace{\pr{\rvX_k + \gamma_{k+1}f_k(\rvX_k) - \rvX_{k+1}}}_{\numcircledmod{3}}
          }^2, \numberthis
    \end{align*}
    which resembles the term
    $\norm{\rvZ_t + \widehat{\rvZ}_t - g\nabla_\vx \log \pp{t}{SB}}^2$ in \eqref{eq:sb-nll-bad}.
    While \numcircledmod{2} indeed corresponds to our backward policy $\widehat{\rvZ}(k{+}1,\rvX_{k+1})$ after time discretization, \numcircledmod{1} slightly differs from $\rvZ(k{+}1,\rvX_{k+1})$ in how time is integrated, $\gamma_{k+1}f(k, \rvX_{k+1})$ \textit{vs.} ${\rvZ}(k{+}1,\rvX_{k+1})$.
    On the other hand, \numcircledmod{3} may be seen as an approximation of
    $g\nabla_\vx \log \pp{t}{SB}$, which, crucially, is \emph{not} utilized in SB-FBSDE training.
    Since $\nabla_\vx \log \pp{t}{SB}$ is often intractable (nor does SB-FBSDE try to approximate it), SB-FBSDE instead uses the \textit{divergence-based} objective \eqref{eq:sb-nll},
    which does not appear in their practical training.

\textbf{SDE model class.}
    It is important to recognize that both concurrent methods are rooted in
    the \emph{classical} SB formulation with the following SDE model,
    \begin{align*}
        \rd \rvX_t = f(t, \rvX_t) \dt + \sqrt{2\gamma}~\dwt,
    \end{align*}
    which, crucially, differs from the SDE concerned by both our SB-FBSDE and SGM,
    \begin{align}
        \rd \rvX_t = f(t, \rvX_t) \dt + g(t) \dwt,
        \label{eq:sgm-dyn}
    \end{align}
    in that the diffusion $g(t)$ is a \emph{time-varying} function.
    This implies that the connection between classical SB models and SGM can only be made in discrete-time after choosing proper step sizes.
    For instance, \citet{de2021diffusion} considers the Euler-Maruyama discretization (see their $\S$C.3),
    \begin{align}
        \rvX_{k+1} = \rvX_k + \gamma_{k+1} f(k, \rvX_k) + \sqrt{2\gamma_{k+1}}~\epsilon,
        \label{eq:dsb-dyn}
    \end{align}
    where $\epsilon\sim\calN(\mathbf{0}, \mI)$.
    In order for \eqref{eq:dsb-dyn} to match the discretization of \eqref{eq:sgm-dyn},
    where $g(t)$ is often a monotonically increasing function,
    the step sizes $\{\gamma_{k}\}_{k=1}^N$ must also increase monotonically.
    However, since $\nabla_\vx \log \pp{t}{SB}$ is approximated in \citet{de2021diffusion} using the states from two consecutive steps (see \eqref{eq:dsm-loss}), this may also affect the accuracy of the regression targets.

    In contrast, our SB-FBSDE is grounded on the recent SB theory \citep{caluya2021wasserstein}, which considers the \emph{same} SDE model as in \eqref{eq:sgm-dyn}.
    As such,
    connection between SB-FBSDE and SGM is made directly in continuous-time
    (and can be extended to discrete-time flawlessly);
    hence unaffected by the choice of numerical discretization or step sizes.

\textbf{Model parametrization.}
    While \citet{vargas2021solving} utilizes non-parametric models, \eg Gaussian processes (hence are not directly comparable), both \citet{de2021diffusion} and SB-FBSDE use DNNs to approximate the SB policies.

\textbf{Training algorithm and convergence.}
    Both concurrent methods rely on solving SB with IPF algorithm,
    which performs alternate training between the forward/backward policies.
    While SB-FBSDE can also be trained with IPF (see Alg.~\ref{alg:train3}),
    we stress that it is also possible to train \textit{both policies jointly}
    whenever the computational budget permits.
    Interestingly, this joint optimization -- which is not presented in concurrent methods -- resembles the training scheme of the recently-proposed diffusion flow-based model \citep{zhang2021diffusion}.
    We highlight these flexible training procedures arising from the unified viewpoint provided in Theorem~\ref{thm:4}.
    Finally, with the close relation between Alg.~\ref{alg:train3} and IPF
    (despite with different objectives and SDE model classes),
    convergence analysis from classical IPF can be applied with few efforts. We leave it as a promising future work.

\textbf{Corrector Sampling.}
    While both \citet{de2021diffusion} and SB-FBSDE implement corrector sampling, they corresponds to different quantities. Specifically, our SB-FBSDE relies on the same predictor-corrector scheme proposed in SGM (see Sec 4.2 in \citet{song2020score}), where the ``\emph{corrector}'' part is made with a Langevin sampling using the desired optimal density ``$\nabla \log p_t$''. In SB, this term corresponds \emph{exactly} to adding the outputs of our networks "$\mathbf{Z}+\widehat{\mathbf{Z}}$". This computation differs from the corrector sampling appearing in \citet{de2021diffusion}, which relies on single network (\ie either $\mathbf{Z}$ or $\widehat{\mathbf{Z}}$). Crucially, this implies that the two methods is approaching different target distributions; hence leading to different training results. Notably, it has been reported in \citet{de2021diffusion} that corrector sampling only gives negligible improvement (see $\S$J.2 in \citet{de2021diffusion}), yet in our case we observe major quantitative improvement (up to 4 FID).

\section{Experiment Details}
\label{Appendix:Exp_detals}

\textbf{Table~\ref{table:NLL_FID} with other models.}

\begin{figure}[H]
  \vskip -0.15in
  \begin{minipage}{\textwidth}
    \centering
    \captionsetup{type=table}
    \caption{
      CIFAR10 evaluation.
    }
      \vskip -0.05in
      \begin{tabular}{llcccccccc}
        \toprule
        {Model Class} & {Method}  & {NLL $\downarrow$} & {FID $\downarrow$} \\

        \midrule

        \multirow{4}{*}{\specialcelll[l]{Optimal Transport }}
        & \textbf{SB-FBSDE (ours)}             & {2.96} &  3.01 \\[1pt]
        & DOT \citep{tanaka2019discriminator}  &  -     & 15.78 \\[1pt]
        & Multi-stage SB \citep{wang2021deep}  &  -     & 12.32 \\[1pt]
        & DGflow \citep{ansari2020refining}    &  -     &  9.63 \\[1pt]

        \midrule

        \multirow{5}{*}{\specialcelll[l]{SGMs}}
        & SDE (deep, sub-VP; \citet{song2020score})                  & 2.99         & 2.92  \\[1pt]
        & ScoreFlow \citep{song2021maximum}             &2.74 & 5.7  \\[1pt]
        & VDM \citep{kingma2021variational}             &\textbf{2.49} & 4.00  \\[1pt]
        & LSGM\citep{vahdat2021score}                   & 3.43         &\textbf{2.10} \\[1pt]

        \midrule

        \multirow{4}{*}{\specialcelll[l]{VAEs}}
        & VDVAE \citep{child2020very}                   & 2.87         & -     \\[1pt]
        & NVAE \citep{vahdat2020nvae}                   & 2.91         & 23.49 \\[1pt]
        & BIVA \citep{maaloe2019biva}                   & 3.08         & -     \\[1pt]

        \midrule

        \multirow{3}{*}{\specialcelll[l]{Flows}}
        & FFJORD \citep{grathwohl2018ffjord}            &  3.40        & - \\[1pt]
        & VFlow \citep{chen2020vflow}                   &  2.98        & - \\[1pt]
        & ANF  \citep{huang2020augmented}               &  3.05        & - \\[1pt]

        \midrule

        \multirow{3}{*}{\specialcelll[l]{GANs}}
        & AutoGAN \citep{gong2019autogan}               &   -          & 12.42 \\[1pt]
        & StyleGAN2-ADA  \citep{karras2020training}     &   -          &  2.92 \\[1pt]
        & LeCAM \citep{park2021styleformer}             &   -          &  2.47 \\[1pt]

        \bottomrule
      \end{tabular} \label{table:app-NLL_FID}
  \end{minipage}
\end{figure}

\begin{figure}[H]
  \vskip -0.1in
  \begin{minipage}{\textwidth}
    \captionsetup{type=table}
    \caption{Training Hyper-parameters}
    \label{table:hyperparam}
      \centering
      \vskip -0.1in
      \centering
      \begin{tabular}{r|rrrr}
        \toprule
        Dataset & learning rate & time steps & batch size & variance of $\prior$ \\
        \midrule
        Toy                             &2e-4           &100                 & 400 & 1.0\\[1pt]
        Mnist                           &2e-4           &100                & 200 & 1.0\\[1pt]
        CelebA                          &2e-4           &100                 & 200 & 900.0\\[1pt]
        CIFAR10                         &1e-5           &200                  & 64 & 2500.0\\
        \bottomrule
      \end{tabular}
  \end{minipage}
  \vskip 0.15in
  \begin{minipage}{\textwidth}
    \captionsetup{type=table}
    \caption{Network Architectures}
    \label{table:network_arch}
      \centering
      \vskip -0.1in
      \centering
      \begin{tabular}{rcc}
        \toprule
        Dataset &
        ${\rvZ}_t(\cdot,\cdot;\theta)$ and \# of parameters &
        $\widehat{\rvZ}_t(\cdot,\cdot;\phi)$  and \# of parameters \\
        \midrule
        Toy       &FC-ResNet (0.76M)    &  FC-ResNet (0.76M)       \\[1pt]
        Mnist     &reduced Unet (1.95M) &  reduced Unet (1.95M)    \\[1pt]
        CelebA    &Unet      (39.63M)    &  Unet (39.63M)           \\[1pt]
        CIFAR10   & Unet (39.63M)   & NCSN++    (62.69M)           \\
        \bottomrule
      \end{tabular}
  \end{minipage}
\end{figure}

\textbf{Training.}
We use Exponential Moving Average (EMA) with the decay rate of 0.99.
Table~\ref{table:hyperparam} details the hyper-parameters used for each dataset.
{As mentioned in \citet{de2021diffusion},
the alternate training scheme may substantially accelerate the convergence
under proper initialization.
Specifically, when $\rvZ_t$ is initialized with degenerate outputs (\eg by zeroing out its last layer),
training $\widehat{\rvZ}_t$ at the first $K$ steps can be made in a similar SGM fashion
{since $\pp{t}{SB}$ now admits analytical expression}.}
As for the proceeding stages, we resume to use (\ref{eq:high-dim-loss}, \ref{eq:high-dim-loss2}) since $(\rvZ_t, \widehat{\rvZ}_t)$
no longer have trivial outputs.

\textbf{Data pre-processing.}
MNIST is padded from 28$\times$28 to 32$\times$32 to prevent degenerate feature maps through Unet.
CelebA is resized to 3$\times$32$\times$32 to accelerate training.
Both CelebA and CIFAR10 are augmented with random horizontal flips to enhance the diversity.

\begin{wrapfigure}[14]{r}{0.57\textwidth}
  \vspace{-10pt}
  \begin{minipage}{0.57\textwidth}
  \begin{algorithm}[H]
    \small
       \caption{\small Generative Process of SB-FBSDE}
       \label{alg:sample}
    \begin{algorithmic}
     \STATE {\bfseries Input:}
        $\prior$, policies $\rvZ(\cdot,\cdot; \theta)$ and $\widehat{\rvZ}(\cdot,\cdot; \phi)$ \\
       \STATE Sample $\rvX_T \sim \prior$.
       \FOR{$t=T$ {\bfseries to} $\Delta t$ }
         \STATE Sample $\eps \sim \calN(\mathbf{0},\mI)$.
         \STATE Predict $\rvX_{t,1} \leftarrow \rvX_t + g~\widehat{\rvZ}_t \Delta t + \sqrt{g \Delta t} \eps$.
         \FOR{$i=1$ {\bfseries to} $N$ }
           \STATE Sample $\eps_i \sim \calN(\mathbf{0},\mI)$.
           \STATE Compute $\nabla_\vx \log \pp{t,i}{SB} \approx [{\rvZ(t, \rvX_{t,i}) {+} \widehat{\rvZ}(t, \rvX_{t,i})}]/g$.
           \STATE Compute $\sigma_{t,i}$ with \eqref{eq:noise-scale}.
           \STATE Correct $\rvX_{t,i{+}1} \leftarrow \rvX_{t,i} + \sigma_{t,i}\nabla_\vx \log \pp{t,i}{SB} + \sqrt{2\sigma_{t,i}} \eps_i$.
         \ENDFOR
         \STATE Propagate $\rvX_{t-\Delta t} \leftarrow \rvX_{t,N}$.
       \ENDFOR
       \STATE {\bfseries return} $\rvX_0$
    \end{algorithmic}
  \end{algorithm}
  \end{minipage}
\end{wrapfigure}
\textbf{Sampling.}
The sampling procedure is summarized in Alg.~\ref{alg:sample}.
Given some pre-defined signal-to-noise ratio $r$ (we set $r=$0.05 for all experiments), the Langevin noise scale $\sigma_{t,i}$ at each time step $t$ and each corrector step $i$ is computed by
\begin{align}
  \sigma_{t,i} = \frac{2r^2g^2\norm{\epsilon_i}^2}{ \norm{({\rvZ}(t, \rvX_{t,i}) + \widehat{\rvZ}(t, \rvX_{t,i}))}^2},
  \label{eq:noise-scale}
\end{align}

\begin{figure}[t]
  \vskip -0.1in
  \begin{minipage}{\textwidth}
      \centering
      \includegraphics[height=0.15\textwidth]{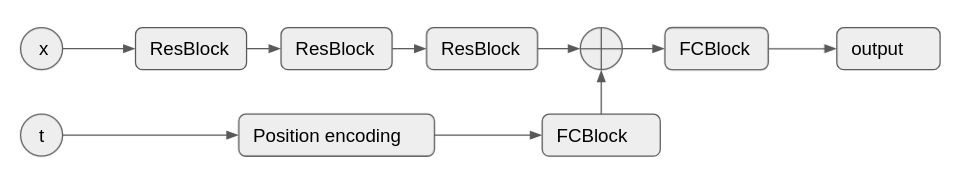}
      \vskip -0.1in
      \caption{
          Network architecture for toy datasets.
      }
      \label{fig:toy_arch}
  \end{minipage}
\end{figure}

\textbf{Network architectures.}
Table~\ref{table:network_arch} summarizes the network architecture used for each dataset.
For toy datasets, we parameterize $\rvZ(\cdot,\cdot;\theta)$ and $\widehat{\rvZ}(\cdot,\cdot;\phi)$ with the architectures shown in Fig.~\ref{fig:toy_arch}.
Specifically, \textit{FCBlock} represents a fully connected layer followed by a swish nonlinear activation \citep{ramachandran2017searching}.
As for MNIST, we consider a smaller version of Unet \citep{ho2020denoising} by reducing
the numbers of residual block, attention heads, and channels respectively to 1, 2, and 32.
Unet and NCSN++ respectively correspond to the architectures appeared in \citet{ho2020denoising} and \citet{song2020score}.

\textbf{Remarks on Table~\ref{table:NLL_FID}.}
We note that the values of our SB-FBSDE reported in Table~\ref{table:NLL_FID} are computed \textit{without} the Langevin corrector due to the computational constraint.
For all other experiments, we adopt the Langevin corrector
as it generally improves the performance (see Fig.~\ref{fig:fid}).
This implies that our results on CIFAR10, despite already being encouraging, may be further improved with the Langevin corrector.

\textbf{Remarks on Fig.~\ref{fig:mnist-kl}.}
To estimating $\mathrm{KL}(\pp{T}{}, \prior)$,
we first compute the pixel-wise first and second moments given the generated samples $\rvX_T$ at the end of the forward diffusion.
After fitting a diagonal Gaussian to $\{\rvX_T\}$,
we can apply the analytic formula for computing the KL divergence between two multivariate Gaussians.

\textbf{Remarks on Fig.~\ref{fig:fid}.}
To accelerate the sampling process with the Langevin corrector,
for this experiment
we consider a reduced Unet (see Table~\ref{table:network_arch}) for CelebA.
The FID scores on both datasets are computed with 10k samples.
We stress, however, that the performance improvement using the Langevin corrector remains consistent across other (larger) architectures and if one increases the FID samples.

\section{Additional Experiments}
\label{appendix:addtional_fig}

\paragraph{Comparison to \citet{de2021diffusion} under same setup.}

To demonstrate the superior performance of our model, we conduct experiments with the exact same setup implemented in \citet{de2021diffusion}. Specifically, we adopt the same network architecture (reduced U-net), image pre-processing (center-cropping 140 pixel and resizing to 32 $\times$ 32), step sizes ($N$=50), and horizon (0.5 second) for fair comparison.
Comparing our Fig.~\ref{fig:DSB_compare-b} to \citet{de2021diffusion} (see their Fig.~6), it is clear that images generated by our model have higher diversity (\eg color skin, facing angle, background, etc) and better visual quality. We conjecture that our performance difference may come from \textit{(i)} the (in)sensitivity to numerical discretization between our divergence objectives and their mean-matching regression, and \textit{(ii)} the foundational differences in how diffusion coefficients are designed.

\begin{figure}[H]%
  \centering
  \subfloat[Ground Truth]{\includegraphics[width=5cm]{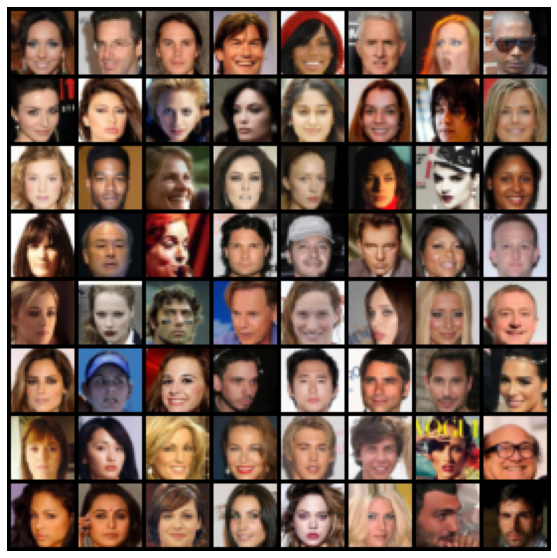} \label{fig:DSB_compare-a}}
  \qquad
  \subfloat[SB-FBSDE Generated Image]{\includegraphics[width=5cm]{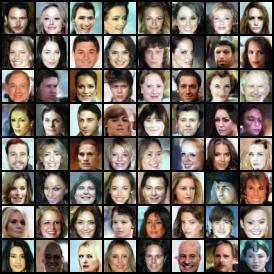} \label{fig:DSB_compare-b}}%
  \caption{Comparison between images generated by ground truth and SB-FBSDE on reduced CelebA. Our SB-FBSDE is trained under the same data pre-processing, network architecture and stepsizes implemented in \citet{de2021diffusion}. }%
  \label{fig:DSB_compare}%
\end{figure}

\begin{figure}[H]
  \vskip -0.3in
  \centering
  \subfloat[SGM/50k]{\includegraphics[width=5cm]{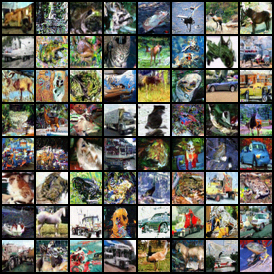} \label{fig:DSM-vs-SB-a} }%
  \qquad
  \subfloat[SGM/50k + SB/b/5k]
  {\includegraphics[width=5cm]{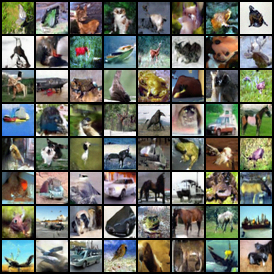} \label{fig:DSM-vs-SB-b} }%
  \qquad
  \subfloat[SGM/50k + SB/f/5k + SB/b/5k]
  {\includegraphics[width=5cm]{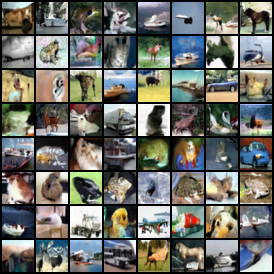} \label{fig:DSM-vs-SB-c} }%
  \caption{
      Qualitative results at the different stages of training.
      \textit{(a)} Results after 50k training iterations using SGM's regression loss.
      \textit{(b)} Refine the results of Fig.~\ref{fig:DSM-vs-SB-a} by training the backward policy using \eqref{eq:high-dim-loss} with 5k iterations.
      \textit{(c)} Refine the results of Fig.~\ref{fig:DSM-vs-SB-a} with a full SB-FBSDE stage using (\ref{eq:high-dim-loss},\ref{eq:high-dim-loss2}).
  }%
  \label{fig:DSM-vs-SB}%
\end{figure}

\paragraph{SGM regression training + SB divergence-based training.}

Table~\ref{table:SB-refine} reports the FID (using 10k samples, without corrector steps) at different stages of CIFAR10 training. We first train the backward policy with SGM's regression loss for a sufficient long iterations (50k) until the FID roughly converges. Then, we switch to our alternate training (Alg.~\ref{alg:train3}) using the divergence-based objectives.
Crucially, with only 5k iterations of our divergence-based training, we drop the FID dramatically down to 13.35 from 33.68. With a full stage of training (last column), the FID decreases even lower to 11.85.
The qualitative results are provided in Fig.~\ref{fig:DSM-vs-SB}. Comparing Fig.~\ref{fig:DSM-vs-SB-a} (corresponds to ``SGM/50k'' in Table~\ref{table:SB-refine}) and Fig.~\ref{fig:DSM-vs-SB-b} (corresponds to ``SGM/50k + SB/b/5k'' in Table~\ref{table:SB-refine}), it can be seen that the visible flaw and noise have been substantially improved.

\vspace{20pt}

\textbf{Additional Figures}

\begin{figure}[t]
  \vskip 0.1in
  \begin{minipage}{\textwidth}
      \centering
      \captionsetup{type=table}
      \caption{SGM regression training + SB divergence-based training. We denote ``SGM/50k'' as ``training 50k steps using SGM loss'', and ``SB/\{f,b\}/5k'' as ``training forward/backward policy with 5k steps using our divergence loss'', and etc.}
      \vskip -0.1in
      \centering
      \begin{tabular}{r|cccccc}
        \toprule
        & initialization & SGM/10k & SGM/20k & SGM/50k &
        \specialcell[c]{SGM/50k \\ + SB/b/5k} &
        \specialcell[c]{SGM/50k \\ + SB/f/5k + SB/b/5k} \\
        \midrule
         FID & 448 & 41.37 & 35.47 & 33.68 & 13.35 & 11.85 \\
        \bottomrule
      \end{tabular}
      \label{table:SB-refine}
  \end{minipage}
\end{figure}

\begin{figure}[h]
\begin{center}
\includegraphics[width=0.9\textwidth]{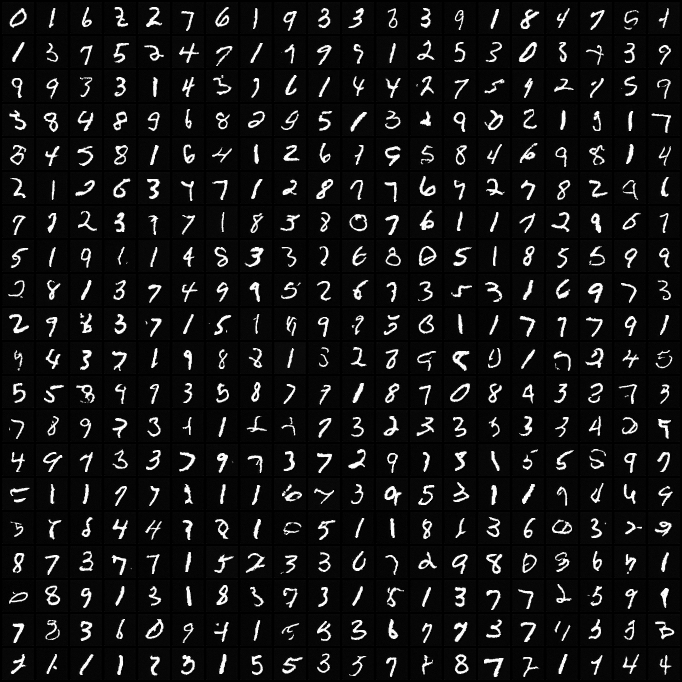}
\vskip -0.05in
\caption{
    Uncurated samples generated by our SB-FBSDE on MNIST.
}
\end{center}
\end{figure}

\newpage

\begin{figure}[h]
\begin{center}
\includegraphics[width=0.9\textwidth]{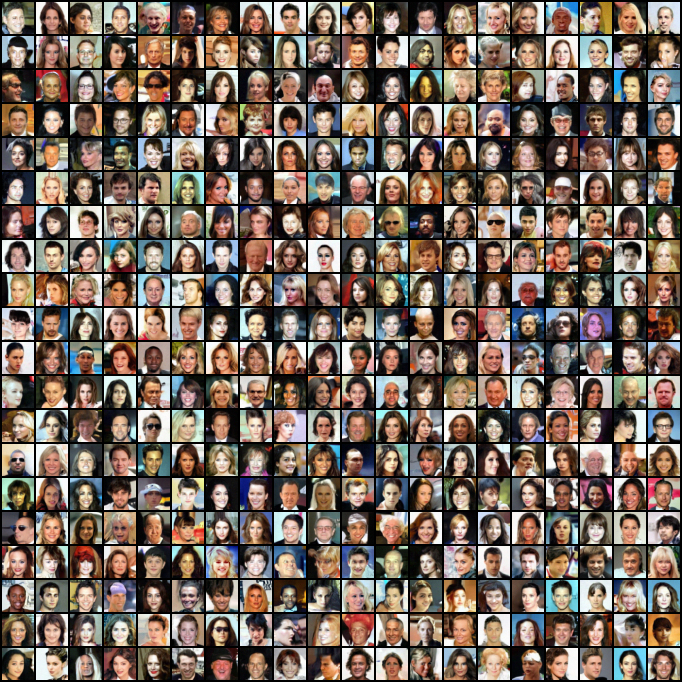}
\vskip -0.05in
\caption{
    Uncurated samples generated by our SB-FBSDE on resized CelebA.
}
\end{center}
\end{figure}

\newpage

\begin{figure}[h]
\begin{center}
\includegraphics[width=0.9\textwidth]{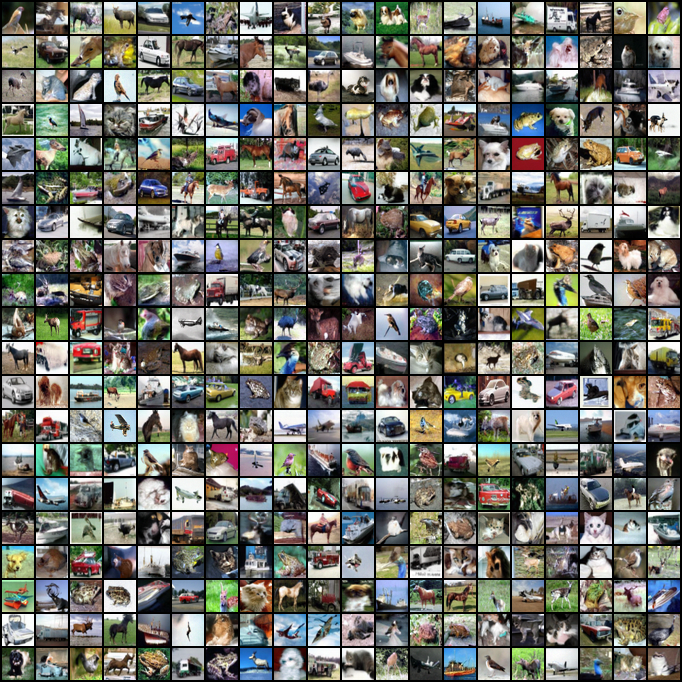}
\vskip -0.05in
\caption{
    Uncurated samples generated by our SB-FBSDE on CIFAR10.
}
\end{center}
\end{figure}

\end{document}